
\documentclass{article}

\usepackage{microtype}
\usepackage{graphicx}
\usepackage{booktabs} 

\usepackage{hyperref}



\usepackage[accepted]{icml2025}

\usepackage{amsmath}
\usepackage{amssymb}
\usepackage{mathtools}
\usepackage{amsthm}

\usepackage{algorithm}
\usepackage{algorithmic}
\usepackage{newfloat}
\usepackage{listings}

\usepackage{amsmath}
\usepackage{amsthm}

\usepackage{amssymb}
\usepackage{enumitem}   
\usepackage{bm}
\usepackage{tabularx}
\usepackage{subcaption}
\usepackage{array}
\usepackage{multirow}
\usepackage{siunitx}
\usepackage{booktabs}

\newcommand{\rf}{\textsl{Reinfier}}
\newcommand{\rt}{\textsl{Reintrainer}}
\newcommand{\drlp}{DRLP}

\DeclareMathOperator*{\argmax}{arg\,max}
\DeclareMathOperator*{\argmin}{arg\,min}

\usepackage[capitalize,noabbrev]{cleveref}

\theoremstyle{plain}
\newtheorem{theorem}{Theorem}[section]

\newtheorem{corollary}[theorem]{Corollary}
\theoremstyle{definition}
\newtheorem{definition}[theorem]{Definition}

\theoremstyle{remark}

\usepackage[textsize=tiny]{todonotes}

\icmltitlerunning{Verification and Interpretation-Driven Safe Deep Reinforcement Learning Framework}

\begin{document}

\twocolumn[
\icmltitle{Verification and Interpretation-Driven \\ Safe Deep Reinforcement Learning Framework}



\icmlsetsymbol{equal}{*}

\begin{icmlauthorlist}
\icmlauthor{Zixuan Yang}{nju}
\icmlauthor{Jiaqi Zheng}{nju}
\icmlauthor{Guihai Chen}{nju}
\end{icmlauthorlist}

\icmlaffiliation{nju}{School of Computer Science, University of Nanjing, Nanjing, China}


\icmlkeywords{Machine Learning, ICML}

\vskip 0.3in
]




\begin{abstract}
    Ensuring verifiable and interpretable safety of deep reinforcement learning (DRL) is crucial for its deployment in real-world applications. 
    Existing approaches like verification-in-the-loop training, however, face challenges such as difficulty in deployment, inefficient training, lack of interpretability, and suboptimal performance in property satisfaction and reward performance.
    In this work, we propose a novel verification-driven interpretation-in-the-loop framework \rt~to develop trustworthy DRL models, which are guaranteed to meet the expected constraint properties. Specifically, in each iteration, this framework measures the \textsl{gap} between the on-training model and predefined properties using formal verification, interprets the contribution of each input feature to the model's output, and then generates the training strategy derived from the on-the-fly measure results, until all predefined properties are proven. 
    Additionally, the low reusability of existing verifiers and interpreters motivates us to develop \rf, a general and fundamental tool within \rt~for DRL verification and interpretation. \rf~features \textsl{breakpoints} searching and verification-driven interpretation, associated with a concise constraint-encoding language \drlp.
    Evaluations demonstrate that \rt~outperforms the state-of-the-art on work six public benchmarks in both performance and property guarantees.
    Our framework can be accessed at https://github.com/Kurayuri/Reinfier. 
\end{abstract}

\setlength{\abovedisplayskip}{4pt}
\setlength{\belowdisplayskip}{5pt}
\section{Introduction}\label{Intro}

Recently, deep reinforcement learning (DRL) has demonstrated exceptional advancements in the development of intelligent systems across diverse domains, including game~\cite{AlphaGo}, networking~\cite{Network}, and autonomous driving~\cite{drive}. Nevertheless, the inherent opacity of the deep neural network (DNN) decision-making process necessitates verifiable and interpretable safety and robustness before the practical deployment~\cite{tnf23,tnf44}. Therefore, significant efforts have been invested in learning approaches integrated with formal verification to develop property-guaranteed DRL systems.

DRL properties are typically defined in terms of safety (informally, the agent never takes undesired actions), liveness (informally, the agent eventually takes desired actions), and robustness (informally, the agent takes similar actions with perturbation in the environment)~\cite{mc}. 
An important approach to achieving interpretability is by addressing interpretability problems, which can evaluate the behaviors of the trained models and provide useful insight into their underlying decision-making mechanisms.
Wildly concerned problems include decision boundary (identifying the boundary between different inputs that leads to different actions)~\cite{iw26,iw43}, counterfactual explanation (identifying the counterfactual inputs that make a property unsatisfied, which highlight the system’s weaknesses)~\cite{iw28,iw29}, sensitivity and importance analysis (identifying which input feature plays a central role in the decision-making process)~\cite{iw27,iw8}. 

However, developing verifiable and interpretable DRL systems remains a technical challenge.
Firstly, verification results often serve only as adversarial counterexamples~\cite{Attack,trainify,cegar,attack2} or just indicate whether the training process should be prolonged, leading to inefficient interaction and lengthy training time. 
For instance, the verification-in-the-loop approaches, where the model is alternately trained and verified against properties in each iteration until all predefined properties are proven~\cite{trainify,tnf_r1,tnf_r2,tnf_r3}, rely on a single counterexample derived from time-consuming verification for property learning, making the process inefficient.
Secondly, our evaluation results reveal that existing approaches fall short in terms of both property satisfaction and reward performance.
Lastly, interpretability, a widely-concerned factor in developing trustworthy DRL systems, is overlooked.

In this paper, we propose a novel verification-driven interpretation-in-the-loop framework \rt~ (Reinforcement Trainer) that can develop reliable DRL systems with verifiable and interpretable guarantees. 
Specifically, we design the \textsl{Magnitude and Gap of Property Metric} algorithm integrated with reward shaping~\cite{rs} strategies during training.
This algorithm leverages the \textsl{gap} (the difference between the current training model and the predefined properties), and the \textsl{density} (each input feature's contribution metric to the model's output based on interpretability), to measure the magnitudes of violations, i.e. the distances between the boundary of the property-constrained spaces and the states where violations occur. 
We measure the \textsl{gap} through introducing \textsl{breakpoint} searching, where \textsl{breakpoints} are the properties whose adjacent parameters' differences lead to the reverse verification results, i.e., from \texttt{Proven} to \texttt{Falsified} or vice versa. The identified \textsl{breakpoints} corresponds to the strictest states where the model can maintain predefined safety, liveness, or robustness, providing accurate and persuasive metrics in evaluating how safe and robust the on-training DRL model is.


Meanwhile, the existing verifiers and interpreters~\cite{verily,whiRL,whiRL_2.0,verisig2.0,uint} lack reusability for DRL system designers and are limited in the types of properties and interpretability questions they can address, which hinders their effectiveness and applicability. To address this, we design and implement a verifier and interpreter named \rf~(Reinforcement Verifier) that serves as the backend of \rt~or can be used as an independent tool. \rf~can answer various interpretability questions by searching for \textsl{breakpoints} effectively.
Although we can use programming languages like Python to encode DRL properties, interpretability questions, and training constraints separately, the additional learning cost and workload from hardcoding and repetitive coding are significant. Therefore, we present \drlp, a language that allows encoding them in a single, independent, configurable script with a unified format and clean, straightforward syntax.

We implement \rt~in Python, seamlessly integrable with popular vanilla DRL algorithms such as DDPG~\cite{ddpg}.
Different from the state-of-the-art (SOTA) verification-in-the-loop approach~\cite{trainify}, resizing the DNN with a specialized data structure to map the environment state to the DNN input, \rt~is completely decoupled from the DRL model structures and thus deployment-friendly. 
Furthermore, the training process and trained model of \rt~are inherently interpretable and are adapted to endorsed DRL training frameworks, such as Stable-Baselines~\cite{sb3}, making the transformation of vanilla training 
into property-constrained training a non-burdensome task. 

We evaluate \rt~on six classic control tasks in public benchmarks. For each task, we train DRL systems under identical settings in \rt, the SOTA approach~\cite{trainify}, and the corresponding vanilla DRL algorithm. 
Experimental results show that the systems trained in \rt~present superior reward performance and satisfaction guarantee of properties.
We also assess \rf~on a widely used case study, yielding valuable insights.

Collectively, our contributions are enumerated as follows:
\begin{itemize}[leftmargin=*]

\item  \rt, a novel verification-driven interpretation-in-the-loop framework with the \textsl{Magnitude and Gap of Property Metric} algorithm to develop reliable DRL systems.

\item  \rf, a verifier and interpreter featuring \textsl{breakpoints} searching and a cohesive methodology to address interpretability problems using identified \textsl{breakpoints}.


\item \drlp, a concise and unified language for DRL properties, interpretability questions, and training constraints, designed for simplicity and readability. 


\item Evaluation of our implementations, demonstrating that \rt~outperforms the SOTA on public benchmarks, while \rf~effectively verifies key properties and provides valuable interpretability insights.
    
\end{itemize}

\section{Preliminaries}

\begin{figure*}[t]
\centering
\includegraphics[width=\textwidth]{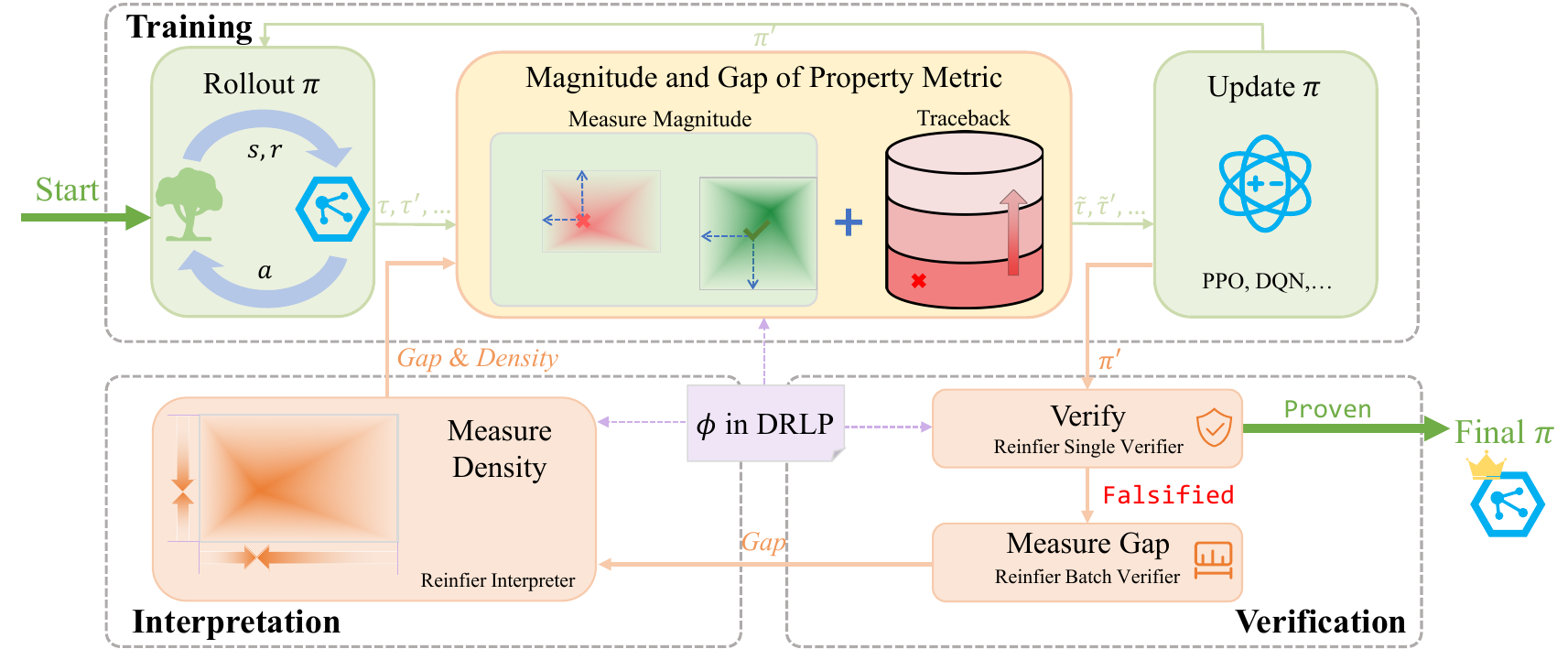}
\caption{Reintrainer architecture. }
\label{Reintrainer}
\end{figure*}


\paragraph{Markov Decision Process (MDP).}
An MDP is defined as $M=\langle S,A,T,\gamma,r \rangle$. $S$ is the state space; $A$ is the action space; 
$T=\{P(\cdot|s,a):s\in S,a \in A\}$ is the transition dynamics, and $P(s'|s,a)$ is the probability of transition from state $s$ to state $s'$ when the action $a$ is taken; $r:S\times A\to\mathbb{R}$ is the reward function; $\gamma\in[0,1)$ is the discount factor. $I\subseteq S$ is the initial state space. 
A policy $\pi:S\to A$ defines that
the decision rule the agent follows. The goal in DRL is to learn an optimal policy maximizing the expected discounted sum of rewards, i.e., $\argmax_\pi\mathbb{E}_{\tau\sim\pi}[\sum_{t=0}^\infty\gamma^t r(s_t,a_t)]$, where $t$ demotes the timestep, $\tau$ denotes a trajectory $(s_0,a_0,s_1,...)$, and $\tau\sim\pi$
 indicates the distribution over trajectories depends on the policy $\pi$.


\paragraph{Formal Verification of DRL Systems.}
Given a DRL system whose policy is modeled by a DNN $N$, a property $\phi$ defines a set of constraints over the input $x$ and the output $N(x)$ (or denoted by $y$).
Verifying such a system aims to prove 
or falsify
: $\forall x \in \mathbb{R}^n, P(x)\Rightarrow Q(N(x))$ (for all $x$, if pre-condition $P(x)$ holds, post-condition $Q(N(x))$ is also satisfied)~\cite{drlvs}. 
For simplicity, the agent's policy $\pi$ is consistent with its DNN $N$, and the  state $s$ input to the policy is consistent with the input $x$ of the DNN.
In this paper, $k$ denotes the verification depth (the maximum timestep limitation considered in verification), and $n$ denotes the number of the state features.

In the properties of formal verification (e.g., $\phi\equiv s\in[\langle 1,2\rangle,\langle 3,4\rangle] \Rightarrow a\in[0,+\infty)$), the property-constrained spaces consist of the constraint intervals (e.g., $\phi[s]=[\langle 1,2\rangle,\langle 3,4\rangle]$, where $\phi[s]$ denotes the constraint interval of the state $s$ in property $\phi$, indicating the 0th feature of the state $s^0\in[1,3]$ and the 1st feature $s^1\in[2,4]$), rather than concrete system states and actions (e.g., $s=\langle 1,2\rangle$). Further, $\underline{\phi}[s]$ donates the lower bounds of the constraint interval of the state, while $\overline{\phi}[s]$ donates the upper bounds.
The properties of verification can be formulated using satisfiability modulo theories (SMT) expressions, and the sets and functions defined in $M$ can be treated as predicates; for example, when $s\in S$, predicate $S(s)$ is \texttt{True}, or when $T(s'|s,a)>0$, predicate $T(s,s')$ is \texttt{True}.



\section{Framework Overview}\label{sec:Framework}


Fig.~\ref{Reintrainer} shows the overview of \rt~consisting of three sequential stages in every iteration: 

\textbf{(i) Training stage}: 
This stage consists of several training epochs. After several epochs, the model is updated and we move to the verification stage. Specifically, in every epoch, the model is rolled out in the environment, and we obtain set of trajectories $\{\tau_i\}$ stored in the buffer.
Subsequently, within the experiences of each trajectory, we measure the magnitude of violation or satisfaction of the predefined properties in DRLP and trace back the sequence leading to violation or satisfaction. Accordingly, we modify the corresponding reward of each experience following our generated reward shaping strategy, which is a self-adaptive algorithm to shape reward. Finally, we update the model based on DRL algorithms such as PPO~\cite{ppo}, and proceed to the next epoch. In addition to reward shaping, our metric result might also be considered as the cost in Constrained MDP to integrate with advanced optimization algorithms in the future.

\textbf{(ii) Verification stage}: 
We verify the predefined DRLP properties through \rf~on the model. If all properties are proven and the model is converged, we stop the iterations and return the model. Otherwise, we measure the \textsl{gap} between the model and predefined properties.

\textbf{(iii) Interpretation stage}: 
We interpret the model through \rf~to measure the \textsl{density} of state features within the constraint space defined by given properties in DRLP. Later, we generate a reward shaping strategy based on the \textsl{density} and the \textsl{gap} for the training, and then we return to the training stage to resume training the model.

Our measure and modification are entirely decoupled from the vanilla algorithms. The integration of training, verification, and interpretation seamlessly constitutes a verification-driven interpretation-in-the-loop DRL approach. After several iterations, \rt~can develop well-performing, property-compliant, verification-guaranteed, and interpretable models.



\section{Design of Verifier and Interpreter}\label{sec:Reinfier}

\begin{figure*}[t]
\centering
\includegraphics[width=0.9\textwidth]{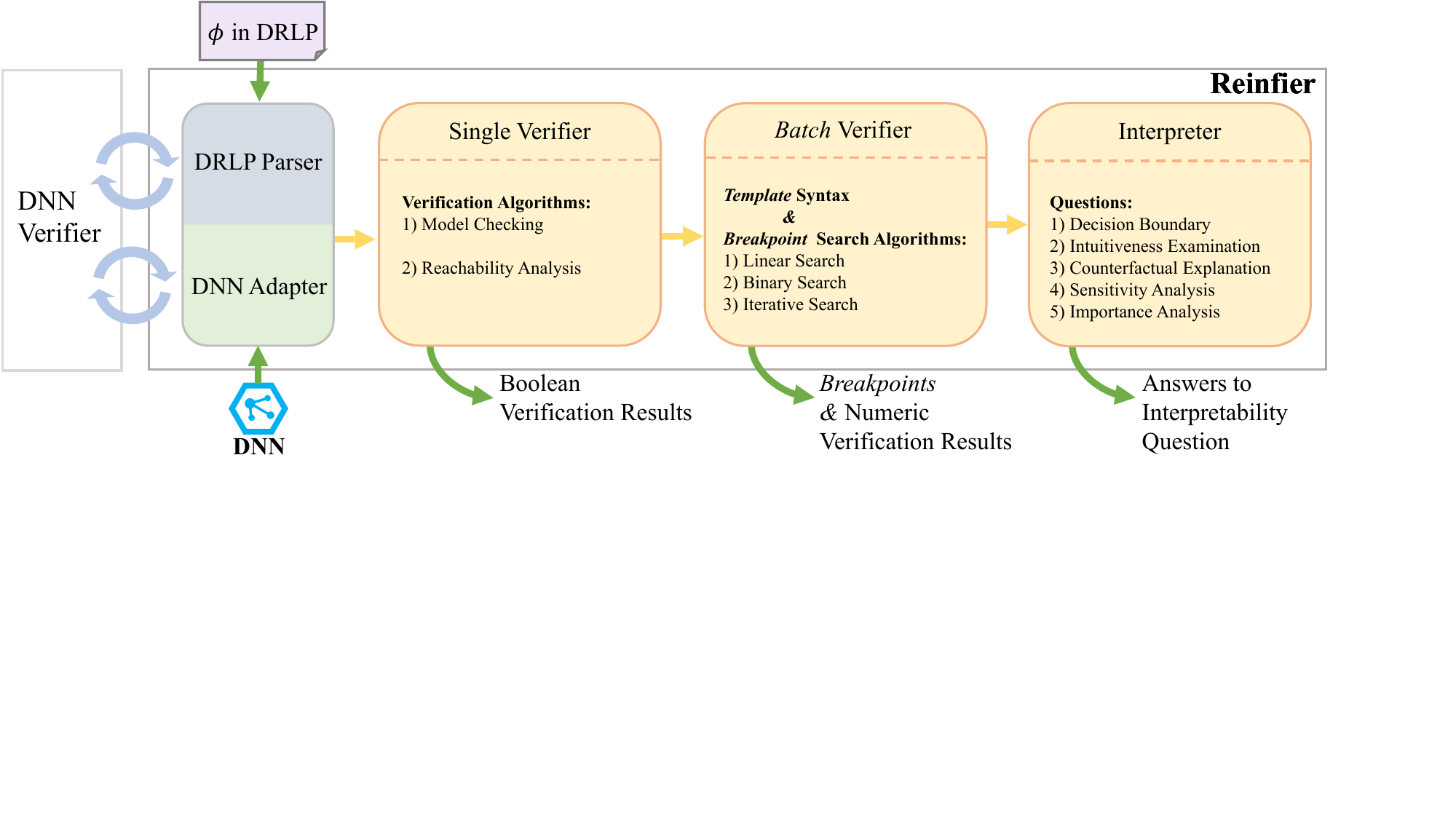}
\caption{Working procedure of Reinfier. }
\label{Reinfier}
\end{figure*}

We design and implement \rf, which not only achieves functions satisfying \rt's requirements but also addresses the challenges of reusability and completeness in terms of the type of properties as well as interpretability questions. Therefore, it can also be used as an independent tool for DRL verification and interpretation. We introduce novel concepts in the main text, while the detailed functional and technical design is provided in Appendix~\ref{Appx:Reinfier}.

Fig.~\ref{Reinfier} shows the working procedure of \rf. To avoid reinventing the wheel, 
we opt to use existing DNN verifiers DNNV~\cite{DNNV}, Marabou~\cite{Marabou} and Verisig~\cite{verisig2.0} as the DNN verification backend.
A parser for given properties in DRLP and an adapter for given DNN models designed and implemented by us function as intermediaries, bridging \textsl{Reinfier} with the DNN verifiers. We implement the single verifier by realizing model-checking algorithms or using reachability analysis, obtaining conventional Boolean DRL verification results (whether properties hold or not) derived from previous DRL verification works~\cite{whiRL,verisig2.0}.
Building upon this foundation, we introduce the novel \textsl{batch} verifier, featuring \textsl{breakpoints} searching on the DRLP templates. The numeric DRL verification results from our \textsl{batch} verifier provide a more accurate measure of the DRL model's safety and robustness. Additionally, our interpreter for answering interpretability questions features a cohesive methodology and corresponding solutions based on the identified \textsl{breakpoints}.

\subsection{Descriptive Language: DRLP}

\if false
\textbf{and Other condition $\bm{C}$.} In $\bm{C}$, 
extra constraints related to specified properties are encoded.
This part stands apart from the $\bm{S},\bm{T},\bm{I}$ parts usually describing the unchanging characteristics of a particular DRL system. For example, some liveness properties require no \texttt{bad} state cycle existing~\cite{mc,whiRL}; and this constraint should be considered as an extra constraint due to its exclusive association with this particular property, rather than a characteristic of the system.
We believe that distinct division in this way is beneficial for users to formulate and articulate properties.
\fi

The detailed syntax of DRLP is defined in Appendix~\ref{Appx:DRLP}.
A DRLP verification script can be structurally divided into three segments.
The first segment initializes a series of pre-assigned variables used for the verification.  
The second segment, starting with a delimiter \texttt{@Pre}, is a set of the pre-conditions $P$ (the prerequisite of properties).
The third segment, starting with a delimiter \texttt{@Exp}, is a set of the post-conditions $Q$ (the expected results). 
In the script, \texttt{x} denotes the input of DNN, \texttt{y} denotes the output of DNN, and \texttt{k} denotes the verification depth; variables prefixed with ``\_'' are categorized as \textsl{iterable variables}. 
Fig.~\ref{DRLPExample} shows four verification scripts encoded with the properties and a DRL system with two state features and one continuous action.

\begin{figure*}[t]
\centering
\includegraphics[width=\textwidth]{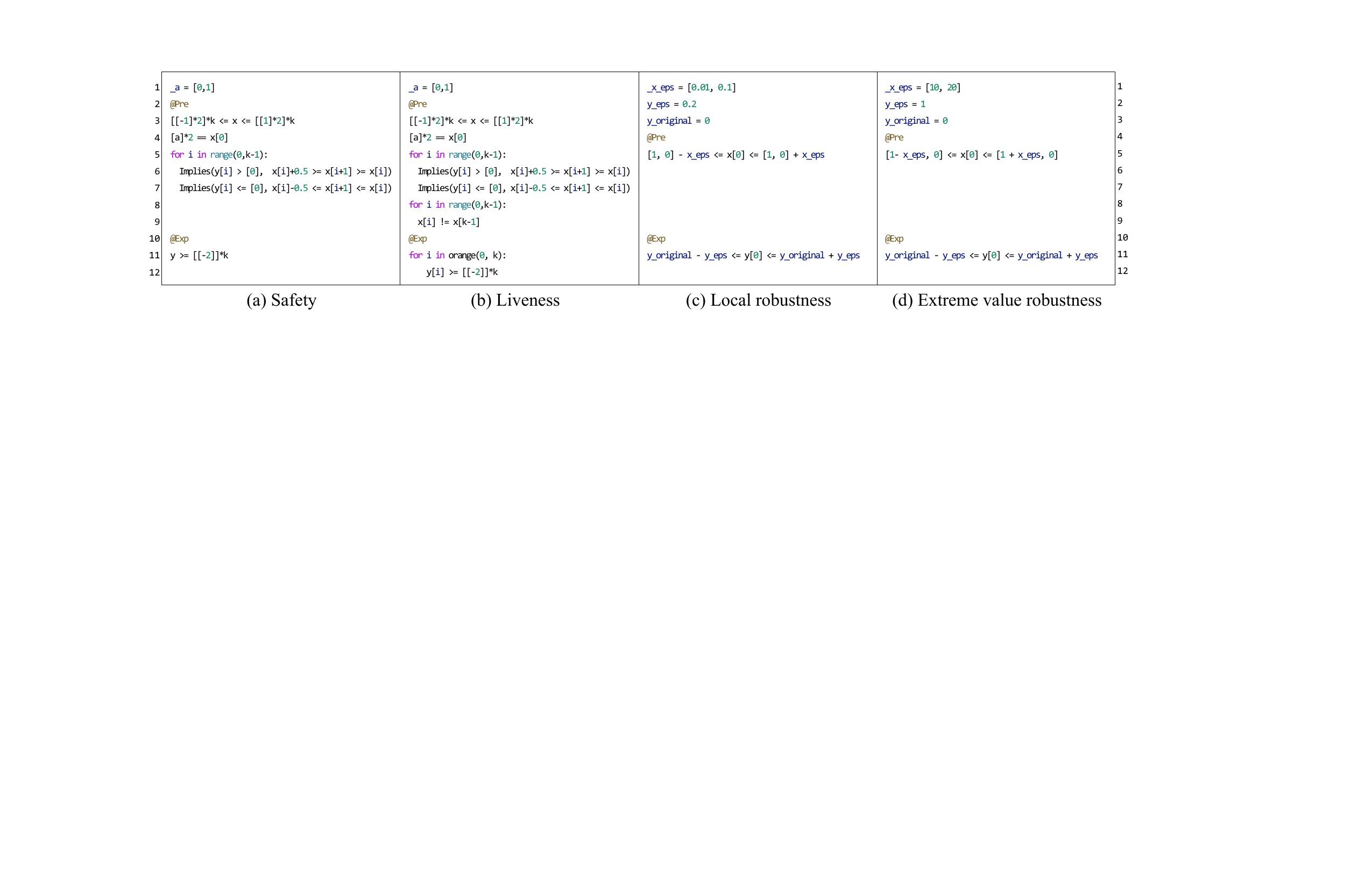}
\caption{Illustration of four verification scripts characterizing the properties of (a) safety, (b) liveness, (c) local robustness and (d) extreme value robustness.
} \label{DRLPExample}
\end{figure*}

\textbf{Safety and Liveness Property.} 
A verification script on the safety property is shown in Fig.~\ref{DRLPExample}(a).
We first detail pre-conditions (Line 3-7).
Each state feature $x$ belongs to the interval $[-1, 1]$ (Line 3).
Considering two cases that \texttt{a} equals $0$ or $1$ (Line 1), the value of each initial state feature is respectively $0$ or $1$ (Line 4).
Note that each value of the next state increases by at most $0.5$ if the action is greater than $0$ (Line 5,6), and decreases by at most $0.5$ otherwise. 
The post-condition is that the action should always be not less than $-2$ (Line 11). 
Different from the safety one, the liveness property (Fig.~\ref{DRLPExample}(b)) enforces the additional pre-conditions that no state cycle exists (Line 8,9) and the post-condition is that the action should be eventually not less than $-2$ (Line 11,12).

\textbf{Robustness Property.} 
A verification script on the local robustness property is
shown in Fig. 3(c). The pre-condition is that considering two degrees of perturbations \texttt{x\_eps} (equals $0.01$ or $0.1$) respectively (Line 1), each input feature is perturbed accordingly
(Line 5). The post-condition (Line 11) is that the output's deviation to its original output \texttt{y\_original} (equals 0) should not exceed \texttt{y\_eps} (equals $0.2$). 
Another script on extreme value robustness is shown in Fig. 3(d). 
The pre-condition is that considering two degrees of outliers \texttt{x\_eps} (equals $10$ or $20$) respectively (Line 1), only the first input feature is the outlier while the second is not (Line 5). The post-condition (Line 11) is that the output's deviation to its original output \texttt{y\_original} (equals 0) should not exceed \texttt{y\_eps} (equals $1$).

\subsection{Batch Verifier} 

\begin{algorithm}[htb]
    \small
    \caption{FindBreakpoints}
    \label{alg:find_bps}
    
    \textbf{Input}: DRLP template $p$, DNN $N$, variable list $\{var\}$\\
    \textbf{Output}: Breakpoint list $\{bp\}$ 
    \begin{algorithmic}[1]
        \STATE $var \gets $ pop the top variable from $\{var\}$
        \STATE $lb \gets $ the lower bound of $var$, $ub \gets $ the upper bound of $var$
        \STATE $prec \gets $ the search precision of $var$
        \STATE 
        \IF{$\{var\}$ is empty }
        \STATE $lb\_r,violation \gets$ verify(concretize$(p,var,lb),N$)
        \STATE $ub\_r,violation \gets$ verify(concretize$(p,var,ub),N$)
            \WHILE{$ub- lb \ge prec$ \AND $lb\_r \neq ub\_r$}   
                \STATE $curr \gets \frac{lb+ub}{2}$
                \STATE $script \gets$ concretize$(p,var,curr)$
                
                \STATE $curr\_r, violation \gets $verify$(script,N)$
                \IF{$curr\_r = \texttt{Falsified}$}
                    \STATE $curr \gets $ the value of $var$ in $violation$
                \ENDIF
                \STATE \textbf{if} $curr\_r = lb\_r$ \textbf{then}: $lb \gets curr$
                \STATE \textbf{else if} $curr\_r = ub\_r$ \textbf{then}: $ub \gets curr$
            \ENDWHILE
            \STATE Add $script$'s corresponding property to $\{bp\}$

        \ELSE
            \WHILE{$ub- lb \ge prec$}   
        
                \STATE $script \gets$ concretize$(p,var,lb)$
            
                \STATE FindBreakpoints$(script,N,\{var\})$
                \STATE $lb \gets lb+prec$
            \ENDWHILE
        \ENDIF
    \end{algorithmic}
\end{algorithm}

A DRLP template is a type of DRLP script characterized by the inclusion of inconcrete parameters (i.e., variables rather than specific values) and is typically derived from a qualitative property. 
The scripts depicted in Fig.~\ref{DRLPExample} excluded by the first segment can be viewed as templates.
By iterable variables or feeding a sequence of parameter values, each template can be concretized (replacing variables with specific values) into concrete scripts for multiple cases.

We introduce the \textsl{breakpoints} search algorithm, as outlined in Algorithm~\ref{alg:find_bps}, which leverages single verification along with a combination of binary and linear search to identify \textsl{breakpoints} on DRLP templates. The algorithm's general form is in Appendix \ref{appx:batch verfier}.
The identified \textsl{breakpoints} correspond to the strictest states where the model can maintain predefined properties.
For example, the target model should satisfy the safety property $\phi \equiv x > 0 \Rightarrow N(x) > 0$. However, identified \textsl{breakpoints} reveals that initially, the strictest property that the model satisfies is $\phi \equiv x > 0 \Rightarrow N(x) > -5$, and then advances to $\phi \equiv x > 0 \Rightarrow N(x) > -2$. These numeric verification results extracted from the parameters in the properties, specifically -5 and -2, demonstrate that the model is progressively becoming safer, indicating the efficacy of the preceding training strategy.

\subsection{Interpreter}\label{Sec:interpret}
Our proposition introduces a cohesive methodology to these problems, fundamentally reliant on the identified \textsl{breakpoints}. This methodology has the potential to address a broader spectrum of interpretability concerns. Specifically, we propose novel \textsl{breakpoints}-based solutions for five interpretability problems, including decision boundary, intuitiveness examination, counterfactual explanation, sensitivity analysis, and importance analysis. Except for the sensitivity analysis, which is related to the metric of \textsl{density}, the other four solutions are provided in Appendix \ref{appx:interpreter}.

\subsubsection{Sensitivity Analysis} 
Given the original input $\hat{x_0}$ and the perturbation $\varepsilon$, the sensitivity analysis question~\cite{iw27} aims to evaluate the distances between $N(x_0)$ and $N(\hat{x_0})$, where $x_0$ denotes the input whose features under discussion are subjected to the perturbation $\varepsilon$. Sensitivity analysis conducted on all input features reveals which specific feature, with a higher sensitivity result, has the potential to induce greater fluctuations in the output.

\textbf{Our solution: } 
Find \textsl{breakpoints} by stepping through possible $y_0$ on the property formulated as:
\begin{equation}
\begin{split}
             \forall x_0, &\bigwedge_{j \in D} (\hat{x_{0}^{j}} -\varepsilon \le x_0^j \le \hat{x_{0}^{j}} +\varepsilon) \wedge \bigwedge_{j \in R} (\hat{x_{0}^{j}}=x_0^j) \\
             &\Rightarrow N(x_0) \approx y_0
\end{split}
\end{equation}
where $D$ denotes the set of indices of the discussed features within the input, while $R$ denotes the set of indices of other features.
Finally, the answer is the minimum distance between $y_0$ and $N(\hat{x_0})$, i.e., sensitivity, formulated as: $\min_{bp\in \{bp\}} d(y_0,N(\hat{x_0}))$.




\section{Magnitude and Gap of Property Metric Algorithm}\label{sec:Alg}



The hypothesis we advocate posits that penalties (reward reductions applied to discourage undesired actions) are directly proportional to the magnitude of violations.
Furthermore, this proportionality extends to the deviation from current states 
to those states where the property holds.
This deviation can be equivalently described as the distance between the boundary of the property-constrained spaces and the states where violations occur. 
The theoretical proof that our hypothesis and algorithm for property learning do not affect the optimality of the policy is provided in Appendix \ref{Proof:potentiol}. Additional details are provided in Appendix \ref{Appx:Reintrainer}.



\subsection{Distance of States}
\paragraph{Distance density.}


The concept of sensitivity analysis in Sensitivity Analysis 
inspires us that the fluctuations in the output could be different for the same input feature at different values, thus we introduce the distance 
\textsl{density} considering that perturbations only need evaluating in one direction at the boundary: given the DNN $N$ and the property $\phi$, when the input feature $s_i^j$ is subject to perturbation $\varepsilon$, the \textsl{density} of the lower bound $\underline{\rho}(\phi,s_i^j)$ can be formulated as:
\setlength{\abovedisplayskip}{4pt}
\setlength{\belowdisplayskip}{5pt}
\begin{equation}\label{eq:rho}
            \underline{\rho}(\phi,s_i^j) = 
            \max_{s} d(N(\underline{\phi}[s_i]),N(s))
\end{equation}
where $s\in[\langle\underline{\phi}[s_i^0],...,\underline{\phi}[s_i^j],...,\underline{\phi}[s_i^{n-1}]\rangle,\langle\underline{\phi}[s_i^0],...,\underline{\phi}[s_i^j]+\varepsilon,...,\underline{\phi}[s_i^{n-1}]\rangle]$.
The upper bound \textsl{density} function $\overline{\rho}$ can be defined similarly.

\paragraph{Exact middle point.}
The exact middle point, denoted as $\dot{\phi}[s_i^j]$, represents the position with the maximum distance to the boundary, and also the point where the \textsl{density} of the lower and upper boundaries switches. 
Consequently, the exact middle point $\dot{\phi}[s_i^j]$ is the midpoint of the \textsl{density}-weighted upper and lower bounds, formulated as:
\setlength{\abovedisplayskip}{4pt}
\setlength{\belowdisplayskip}{5pt}
\begin{equation}
    \dot{\phi}[s_i^j] = \frac{
    \underline{\rho}(\phi,s_i^j) \cdot \underline{\phi}[s_i^j] +
    \overline{\rho} (\phi,s_i^j) \cdot \overline{\phi}[s_i^j] }{
    \underline{\rho}(\phi,s_i^j) +
    \overline{\rho} (\phi,s_i^j)  }
\end{equation}


\paragraph{Distance of States in $1$-Dimension}
Given the state feature $s_i^j$ whose current observation is $\mathtt{v}$, its normalized distance $dist$ to its property-constrained space $\phi[s_i^j]$ in $1$-dimension is formulated as:
\begin{equation}\label{eq:mid}
    dist(\phi[s_i^j],\mathtt{v}) = \left\{
    \begin{array}{ll}
        (\frac{\mathtt{v}-\underline{\phi}[s_i^j]}{\dot{\phi}[s_i^j]-\underline{\phi}[s_i^j]})^{p_1} & \text{if } \mathtt{v} \in [\underline{\phi}[s_i^j],\dot{\phi}[s_i^j]]  \\
        (\frac{\overline{\phi}[s_i^j]-\mathtt{v}}{\overline{\phi}[s_i^j]-\dot{\phi}[s_i^j]})^{p_1} & \text{if } \mathtt{v} \in (\dot{\phi}[s_i^j],\overline{\phi}[s_i^j]]  \\
        0 & \text{else} 
    \end{array}
    \right.
\end{equation}
where $p_1$ represents the penalty intensity coefficient, and we typically set $p_1$ to $1$.



\paragraph{Distance of States in $n$-Dimension}

Considering the differences in input feature to affect the outputs, we employ a \textsl{density}-weighted $\ell^{p_2}$-norm. Given the state $s$ whose current observation is $\mathtt{s}$, its distance $Dist$ to the property-constrained space $\phi$ is formulated as:
\begin{equation}\label{eq:dist}
    Dist(\phi,\mathtt{s})= \frac{\sqrt[p_2]{\sum[{\rho(\phi,s_i^j)\cdot(dist(\phi[s_i^j],\mathtt{s}_i^j))^{p_2}}]}}{\sum{\rho(\phi,s_i^j)}}
\end{equation}
where $p_2$ is typically set to 2; $\rho(\phi,s_i^j)$ equals $\underline{\rho}(\phi,s_i^j)$ when $\mathtt{s}_i^j<\dot{\phi}[s_i^j]$, otherwise, it equals $\overline{\rho}(\phi,s_i^j)$.



\subsection{Metric of Violation's or Satisfaction's Magnitude.}
For the vast majority of cases, setting the distance of actions to a fixed value is sufficient. However, similarly, it can also be calculated as the distance of states.
Given the observed environmental state $\mathtt{s}$ and the action taken $\mathtt{a}$, we measure the magnitude of violation($-$) or satisfaction($+$) when the property $\phi$ is violated or satisfied, which is formulated as:
\begin{equation}
    Diff(\phi,\mathtt{s},\mathtt{a})=\pm Dist (\phi,\mathtt{s})\cdot Dist (\phi[a],\mathtt{a})
\end{equation}
\subsection{Metric of the Gap}
We introduce the concept of \textsl{gap} to quantitatively measure the divergence between the anticipated parameter specified in the predefined property and the identified \textsl{breakpoints} from the on-training model. 
The high computational complexity of finding \textsl{breakpoints} confines the \textsl{gap} measurement to just one parameter $z$ from the state features or the action.

\begin{definition}\label{def:re}
Property  $\Grave{\phi}$ is a relaxation of property $\phi$, if and only if $\phi \Rightarrow \Grave{\phi}$.
\end{definition}

\begin{theorem}\label{the:re}
Property  $\Grave{\phi}\equiv\Grave{P} \Rightarrow \Grave{Q}$ is a relaxation of property $\phi\equiv P\Rightarrow Q$, if $\Grave{P}\subseteq P, \Grave{Q}\supseteq Q$.

\end{theorem}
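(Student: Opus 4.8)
The plan is to unfold both Definition~\ref{def:re} and the paper's definition of a verification property, thereby reducing the claim to a short propositional chain. Recall that a property $\phi \equiv P \Rightarrow Q$ is really shorthand for the statement $\forall x,\ P(x) \Rightarrow Q(N(x))$ over the fixed network $N$, where the pre-condition $P$ and the post-condition $Q$ are read as predicates (equivalently, as the sets of inputs/outputs they accept). Under this reading, the hypotheses $\Grave{P}\subseteq P$ and $\Grave{Q}\supseteq Q$ translate into the pointwise implications $\Grave{P}(x) \Rightarrow P(x)$ for every input $x$ and $Q(y) \Rightarrow \Grave{Q}(y)$ for every output $y$. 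Fixing these conventions first is essential, since the whole argument hinges on interpreting the two containments as implications in the correct directions.

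By Definition~\ref{def:re}, showing that $\Grave{\phi}$ is a relaxation of $\phi$ amounts to establishing $\phi \Rightarrow \Grave{\phi}$. So I would assume $\phi$ holds, i.e. $\forall x,\ P(x) \Rightarrow Q(N(x))$, and derive $\Grave{\phi}$, i.e. $\forall x,\ \Grave{P}(x) \Rightarrow \Grave{Q}(N(x))$. Fix an arbitrary $x$ and suppose $\Grave{P}(x)$ holds; the three-link chain is then (1) $\Grave{P}(x) \Rightarrow P(x)$ by $\Grave{P}\subseteq P$, (2) $P(x) \Rightarrow Q(N(x))$ by the assumed $\phi$, and (3) $Q(N(x)) \Rightarrow \Grave{Q}(N(x))$ by $Q \subseteq \Grave{Q}$. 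Composing these yields $\Grave{Q}(N(x))$, and since $x$ was arbitrary, $\Grave{\phi}$ follows, completing the implication $\phi \Rightarrow \Grave{\phi}$.

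The only genuine subtlety worth flagging is the \emph{contravariance} built into the hypotheses: the pre-condition must shrink ($\Grave{P}\subseteq P$) while the post-condition must grow ($\Grave{Q}\supseteq Q$) in order for the resulting implication to be weakened, i.e. relaxed, in the intended direction. Reversing either inclusion breaks one of the three links, so I would state the containment conventions explicitly rather than leave them implicit. Beyond this bookkeeping, no quantifier manipulation and no property of $N$ other than its being a fixed function are required; once the notation is pinned down, the proof is a routine composition of modus ponens, and I anticipate no further obstacle.
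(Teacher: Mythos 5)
Your proposal is correct and follows essentially the same route as the paper's own proof: assume $\phi$, then compose the three implications $\Grave{P}(x)\Rightarrow P(x)$, $P(x)\Rightarrow Q(N(x))$, and $Q(N(x))\Rightarrow\Grave{Q}(N(x))$ obtained from the two inclusions and the assumed property, yielding $\phi\Rightarrow\Grave{\phi}$ as required by Definition~\ref{def:re}. If anything, your version is slightly more careful than the paper's, which writes the chain as ``$x\in P$, then $x\in Q$'' without distinguishing the input $x$ from the output $N(x)$, whereas you keep the pre-condition over inputs and the post-condition over network outputs properly separated.
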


In this context, we find \textit{breakpoints} by stepping through $z$ within the $\phi$'s relaxation domain $\Grave{\Phi}_{\phi\equiv P\Rightarrow Q}=\{\Grave{P} \Rightarrow \Grave{Q}: \forall \Grave{P}, \Grave{Q},\Grave{P}\subseteq P, \Grave{Q}\supseteq Q \}$ on DNN $N$. Subsequently, $\{bp\}$ denotes the identified \textsl{breakpoints}, and the \textsl{gap} function is formulated as:
\begin{equation}
g(\phi,N)=\min_{\Grave{\phi}\in \{bp\}} |\phi[z]-\Grave{\phi}[z]|
\end{equation}
The \textit{gap} metric can serve as an indicator for property learning and can also be combined with a learning rate schedule function $Lr$ as: $ F_t = Lr(g(\phi,N))\times Diff(\phi,\mathtt{s},\mathtt{a})$.

\subsection{Traceback}
For properties that span multiple steps, including multi-step safety and liveness, 
violation occurring at a particular step implies increasing unsafety in the preceding trajectory. Therefore, it is imperative to trace back to adjust the reward trajectories in the buffer before policy optimization. For action avoidance property, we calculate the traced intermediate reward $\Tilde{F}_{t}$ backward from the last trajectory in the buffer 
with a discount factor $\mu\in[0,1)$ 
as: $\Tilde{F}_{t} = F_t - \lambda^{-1}F_{t-1}+ \mu \Tilde{F}_{t+1} $; for destination reach property, $\Tilde{F}_{t} = \lambda F_{t+1} - F_t  +\mu \Tilde{F}_{t+1}$.
The final shaped reward given to the agent $\Tilde{r}_t$ is formulated as: $\Tilde{r}_t = r_t + \Tilde{F}_{t}$. 
\section{Evaluation}\label{sec:Evo}

The evaluation of \rf, the supplementary and the ablation studies of \rt~are provided in Appendix~\ref{Appx:Eva}.



\paragraph{Benchmarks and Experimental Settings.}\label{sec:bench}
\setlength{\extrarowheight}{2pt}
\begin{table*}[]

\resizebox{\textwidth}{!}{%
\begin{tabular}{llll}
\hline
\textbf{Task} & \textbf{Type} & \textbf{ID} & \textbf{Meaning}                                                                                                                \\ \hline
MC            & Safety        & $\phi_{1}$  & When the car is moving right at high speed at the bottom of the valley, it should not accelerate to the left.                   \\ \hline
CP            & Safety        & $\phi_{2}$  & When the car is moving right in the left edge area and the pole tilts right at a large angle, it should be pushed to the right. \\ \hline
PD            & Safety        & $\phi_{3}$  & When the pendulum approaches the upright position of the target, large torque should not be applied.                            \\ \hline
B1            & Liveness      & $\phi_{4}$  & The agent always reaches the target eventually.                                                                                 \\ \hline
B2            & Liveness      & $\phi_{5}$  & The agent always reaches the target eventually.                                                                                 \\ \hline
Tora          & Safety        & $\phi_{6}$  & The agent always stays in the safe region.                                                                                      \\ \hline
\end{tabular}%
}
\caption{Predefined properties of benchmarks.}
\label{tab:train_prop}
\end{table*}
We evaluate \rt~on the same six public benchmarks
used in Trainify~\cite{trainify}. 
Mountain Car (MC)~\cite{mountaincar} task consists of a car placed at the bottom of a valley, and its goal is to accelerate the car to reach the top of the right hill.  
Cartpole (CP)~\cite{cartpole} task consists of a pole attached by an unactuated joint to a cart, which moves along a frictionless track, and its goal is to balance the pole. 
Pendulum (PD)~\cite{gym} task consists of a pendulum attached at one end to a fixed point while the other end is free and starts in a random position, and its goal is to apply torque on the free end to swing it into an upright position. 
B1 and B2~\cite{verisig2.0,b1b2} are two classic nonlinear systems, where agents in both systems aim to arrive at the destination region from the preset initial state space.
Tora~\cite{tora} task consists of a cart attached to a wall with a spring,
and inside which there is an arm free to rotate about an axis; its goal is to stabilize the system at the stable state where all the system variables are equal to 0.

All experiments are conducted on a workstation with two 8-core CPUs at 2.80GHz and 256 GB RAM.
We adopt the same training configurations for each task respectively.




For each task, we predefine certain properties related to their safety or liveness. All these properties, along with their types and meanings, are presented in Table~\ref{tab:train_prop}, and their definitions are provided in Appendix~\ref{Appx:rf_prop}.
\rt~ can automatically translate DRLP scripts, encoded from predefined properties, into violation measure strategies at runtime. 

We employ the DQN algorithm~\cite{dqn} for tasks MC and CP with discrete actions, while we employ the DDPG algorithm~\cite{ddpg} for the other four tasks with continuous actions. For tasks MC, CP, and PD, all training hyperparameters for vanilla algorithms are finely tuned to achieve their optimal performance, and \rt~directly employs these same hyperparameters. We train two specific types of Trainify: Trainify\#1 employs the hyperparameters from its original work, and Trainify\#2 employs the same hyperparameters as the vanilla algorithms. 
For tasks B1, B2 and Tora, we employ the total same hyperparameters from the work of Trainify for Reintrainer, vanilla algorithms and Trainify\#1.

\paragraph{Final Model Comparison.}\label{sec:rt_final_model}

{

\setlength{\extrarowheight}{1pt} 
\begin{table*}[]

\resizebox{\textwidth}{!}{%
    \begin{tabular}{llclclclclclclc}
    \hline
    Task            & \multicolumn{2}{c}{MC}                          & \multicolumn{2}{c}{CP}                          & \multicolumn{2}{c}{PD}                          & \multicolumn{2}{c}{B1}                          & \multicolumn{2}{c}{B2}                          & \multicolumn{4}{c}{Tora}                                                                          \\ \hline
    Network         & \multicolumn{2}{c}{$2\times256$}                & \multicolumn{2}{c}{$2\times256$}                & \multicolumn{2}{c}{$[300,400]$}                 & \multicolumn{2}{c}{$2\times20$}                 & \multicolumn{2}{c}{$2\times20$}                 & \multicolumn{2}{c}{$3\times100$}                & \multicolumn{2}{c}{$3\times200$}                \\
                    & \multicolumn{1}{c}{$\Bar{R}\pm\sigma$} & V.R.         & \multicolumn{1}{c}{$\Bar{R}\pm\sigma$} & V.R.         & \multicolumn{1}{c}{$\Bar{R}\pm\sigma$} & V.R.         & \multicolumn{1}{c}{$\Bar{R}\pm\sigma$} & V.R.         & \multicolumn{1}{c}{$\Bar{R}\pm\sigma$} & V.R.         & \multicolumn{1}{c}{$\Bar{R}\pm\sigma$} & V.R.         & \multicolumn{1}{c}{$\Bar{R}\pm\sigma$} & V.R.         \\ \hline
    Reinfier (Ours) & -101.92$\pm$7.36                 & $\checkmark$ & 500.00$\pm$0.00                  & $\checkmark$ & -132.88$\pm$85.83                & $\checkmark$ & -119.14$\pm$0.99                 & $\checkmark$ & -20.77$\pm$0.95                  & $\checkmark$ & -50.00$\pm$0.00                  & $\checkmark$ & -50.00$\pm$0.00                  & $\checkmark$ \\
    Trainify\#1     & -103.81$\pm$13.21                & $\times$     & 500.00$\pm$0.00                  & $\times$     & -328.86$\pm$189.71                & $\checkmark$ & -122.00$\pm$0.00                 & $\checkmark$ & -23.72$\pm$0.69                  & $\checkmark$ & -50.00$\pm$0.00                  & $\times$     & -50.00$\pm$0.00                  & $\times$     \\
    Trainify\#2     & -116.21$\pm$4.00                 & $\times$     & 500.00$\pm$0.00                  & $\times$     & -354.52$\pm$189.94                & $\checkmark$ & -                                & -            & -                                & -            & -                                & -            & -                                & -            \\
    Vanilla         & -103.82$pm$9.37                  & $\times$     & 500.00$\pm$0.00                  & $\times$     & -142.90$\pm$86.21                & $\checkmark$ & -122.00$\pm$0.00                 & $\checkmark$ & -20.85$\pm$0.88                  & $\checkmark$ & -50.00$\pm$0.00                  & $\times$     & -50.00$\pm$0.00                  & $\times$     \\
\hline
    \end{tabular}%
    }
    \caption{Comparison of the final models. $\bm{\bar{R}\pm\sigma}$ stands for the average episode reward and its standard deviation of the final models' evaluation results. \textbf{V.R.} stands for verification result, where $\checkmark$ indicates that the predefined property is proven, while $\times$ indicates falsification.}
    \label{tab:train_ans}
\end{table*}

}
We
evaluate the final models in their environments for 100 episodes, and calculate the average and standard deviation of the cumulative rewards, as shown in Table~\ref{tab:train_ans}.
The results demonstrate that our approach significantly outperforms both types of Trainify and even fine-tuned vanilla algorithms in all cases.
We also formally verify the predefined properties on the final models. The results in Table~\ref{tab:train_ans} show that the vanilla algorithms do not guarantee these properties to be satisfied, while in some cases the property is naturally satisfied.
The final models trained by Trainify fail to satisfy certain properties, a shortfall we attribute to the inherent limitations of its method, which primarily focuses on subdividing the abstract state space based on counterexamples but lacks mechanisms to prevent the agent from taking unexpected actions, a critical aspect for ensuring safety. In contrast, Trainify performs well in terms of reachability properties, such as $\phi_4$ and $\phi_5$, enabling the agent to reach specific states. 
The evaluation results of the final models reveal that our approach guarantees property satisfaction without incurring performance penalties compared to the vanilla algorithms. 
Additionally, our approach outshines Trainify by demonstrating superior performance, guaranteeing satisfaction across a wider variety of property types, and high efficiency in terms of training timesteps.

\section{Related Work}

Numerous safe DRL methods have been proposed to ensure DRL safety. The mainstream safe DRL's methods can be categorized into three types: (1) modifying the reward function (“reward shaping”) to encourage safe actions, (2) incorporating an additional cost function (“constraining”) to constrain behaviors, and (3) preventing unsafe actions during running (“shielding”)~\cite{partialshielding}. 
Reward shaping~\cite{rs} has been used in RL for decades, and many works~\cite {cav1,cav2,cav3} contribute to training acceleration or property learning. The constrained optimization-based safe DRL methods focus on optimizing the agent’s behavior while adhering to safety constraints through Lagrangian relaxation~\cite{ppolag,omnisafe} and constrained policy optimization~\cite{cup,pcpo,cpo}. 
However, these two types of methods only aim to reduce unsafe actions during training and cannot ensure absolute safety during deployment.
Safe shields~\cite{trainify54,saferl} and barrier functions~\cite{trainify7,trainify53} are employed to prevent agents from adopting dangerous actions.
Although such type of methods can eliminate unsafe actions, the alternative actions lead to suboptimal performance.

Formal verification, unlike safe shielding methods, can assess the safety of trained models before deployment rather than during execution. It can integrate with, but is not limited to, the three mainstream safe DRL methods mentioned above.
Among these, verification-in-the-loop training, rather than train-then-verify, can more efficiently train models. The pioneering work~\cite{trainify36} proposes a correct-by-construction approach for developing Adaptive Cruise Control by defining safety properties and computing the safe domain. Trainify~\cite{trainify} develops DRL models driven by counterexamples, refining abstract state space, which limits model structure and is highly inefficient. COOL-MC~\cite{tnf_r1} combines a verifier and training environments without any safe learning method. The work~\cite{tnf_r2} leverages verifiers to select the best available policy and continues training on the chosen policy.

Moreover, interpretability is another method to indirectly enhance model safety. DRL interpretability focuses on explaining the decision-making rules of models~\cite{iw13,iw26,iw29,iw42} and analyzing model characteristics~\cite{iw27,iw28,iw8}. UINT~\cite{uint} first proposes that several interpretability questions can be formulated in SMT problem format, similar to verification. Our interpreter expands the types of questions and features an integration of more types of properties and problems.

The methods mentioned above do not explicitly address how to fully utilize verification and interpretation results, which should greatly aid DRL safety. Therefore, our main contribution is an efficient method to integrate verification interpretation and learning, leveraging insights from verification and interpretation results. Through open access implementation, we aim to encourage more DRL system designers to train verifiable, interpretable safe models.

\section{Conclusion}\label{sec:Conclusion and Limitations}
For multi-step properties, \rf~itself cannot address stochasticity or unknown environments as related works~\cite{whiRL,whiRL_2.0,verisig2.0,trainify}, which might limit its application scenarios; the former can be alleviated through over-approximation\cite{overapproximation}, while the latter can be addressed by environment modeling; and the lengthy verification time necessitates the improvement of verification techniques, like incremental verification. Besides, the current integration with reward shaping proves effective for simpler properties but might fall short for more intricate ones, thus integrating our framework with constrained optimization-based algorithms could enhance performance. 

We present a verification-driven interpretation-in-the-loop framework \rt, which can develop trustworthy DRL systems through property learning with verification guarantees and interpretability, without performance penalties.
Additionally, we propose a general DRL verifier and interpreter \rf~featuring \textsl{breakpoints} searching and \textsl{breakpoints}-based interpretation, and the language DRLP.
Evaluations indicate that \rt~surpasses the SOTA across six public benchmarks, achieving superior performance and property guarantees, while \rf~can verify properties and provide insightful interpretability answers.




\section*{Impact Statement}
The potential positive societal impacts of this paper lie in providing verifiable and interpretable safety guarantees for DRL systems. This could lead to an increasing number of DRL designers developing and designing systems based on this research, resulting in safer and more efficient systems and advancing the field's applications.
However, the potential negative societal impacts of this paper might also result in over-reliance on and excessive trust in the verification and interpretation results. Unknown issues may exist that could lead to erroneous outcomes, necessitating the use of multiple methods and approaches for thorough verification.

\nocite{langley00}

\bibliography{bibliography}
\bibliographystyle{icml2025}

\newpage
\appendix
\onecolumn

\section{Theoretical Proofs}\label{Appx:proofs}
\subsection{Theorem 1 with Proof}

\begin{definition}
Property  $\Grave{\phi}$ is a relaxation of property $\phi$, if and only if $\phi \Rightarrow \Grave{\phi}$.
\end{definition}

\begin{theorem}
Property  $\Grave{\phi}\equiv\Grave{P} \Rightarrow \Grave{Q}$ is a relaxation of property $\phi\equiv P\Rightarrow Q$, if $\Grave{P}\subseteq P, \Grave{Q}\supseteq Q$.
\end{theorem}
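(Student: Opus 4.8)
The plan is to unfold the definition of relaxation and reduce the claim to a purely propositional statement about the monotonicity of implication. By Definition~\ref{def:re}, showing that $\Grave{\phi}$ is a relaxation of $\phi$ amounts to establishing the entailment $\phi \Rightarrow \Grave{\phi}$, i.e., that whenever $\phi \equiv (P \Rightarrow Q)$ holds on the network $N$, the property $\Grave{\phi} \equiv (\Grave{P} \Rightarrow \Grave{Q})$ holds as well. The first move is therefore to re-read the set-containment hypotheses $\Grave{P} \subseteq P$ and $\Grave{Q} \supseteq Q$ as pointwise logical implications between the corresponding predicates: treating $P, Q, \Grave{P}, \Grave{Q}$ as predicates in the SMT sense introduced in the preliminaries, $\Grave{P} \subseteq P$ says $\forall x,\ \Grave{P}(x) \Rightarrow P(x)$, and $\Grave{Q} \supseteq Q$ says $\forall y,\ Q(y) \Rightarrow \Grave{Q}(y)$.

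With this translation in hand, the argument proceeds by a direct chase over an arbitrary input. I would fix an arbitrary $x$ and assume $\phi$ holds, unfolding it as $\forall x,\ P(x) \Rightarrow Q(N(x))$. Assuming the antecedent $\Grave{P}(x)$ of the relaxed property, the hypothesis $\Grave{P} \subseteq P$ yields $P(x)$; applying $\phi$ at this $x$ gives $Q(N(x))$; and finally the hypothesis $Q \subseteq \Grave{Q}$ upgrades this to $\Grave{Q}(N(x))$. Discharging the assumption on $\Grave{P}(x)$ and generalizing over $x$ establishes $\forall x,\ \Grave{P}(x) \Rightarrow \Grave{Q}(N(x))$, which is exactly $\Grave{\phi}$, completing the entailment $\phi \Rightarrow \Grave{\phi}$ and hence the theorem.

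The only genuine subtlety --- and hence the step I would treat most carefully --- is the contravariant role of the precondition: relaxation requires the premise to shrink ($\Grave{P} \subseteq P$) while the conclusion grows ($\Grave{Q} \supseteq Q$), and getting either inclusion direction wrong breaks the chase. I would emphasize that implication is antitone in its antecedent and monotone in its consequent, which is precisely what the two hypotheses encode. A secondary point worth a sentence is that the same reasoning lifts verbatim to the multi-step setting: when $P$ and $Q$ range over trajectories up to the verification depth $k$, the predicates are simply evaluated on the unrolled sequence, and the pointwise implications still compose in the same order, so no additional machinery is needed.
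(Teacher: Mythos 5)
Your proposal is correct and follows essentially the same route as the paper's proof: unfold Definition~\ref{def:re} to the entailment $\phi \Rightarrow \Grave{\phi}$, then chase an arbitrary $x$ through the chain $\Grave{P}(x) \Rightarrow P(x) \Rightarrow Q(N(x)) \Rightarrow \Grave{Q}(N(x))$ using the two inclusions. If anything, your version is slightly more careful than the paper's, which writes the postcondition as set membership of $x$ itself (``$x \in Q$'') rather than as a predicate applied to $N(x)$, whereas you keep the distinction between the precondition on the input and the postcondition on the network's output explicit throughout.
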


\begin{proof}
\(\phi \Rightarrow \Grave{\phi}\) means that if \(P \Rightarrow Q\), then \(\Grave{P} \Rightarrow \Grave{Q}\) must also hold.



    
    

\begin{enumerate}
    \item \textbf{Assume \(P \Rightarrow Q\):}
    
    By the definition of \(\phi\), this means that for all \(x\), if \(x \in P\), then \(x \in Q\).
    
    \item \textbf{Consider \(\Grave{P} \subseteq P\):}
    
    By this inclusion, if \(x \in \Grave{P}\), then \(x \in P\).
    
    \item \textbf{Using the assumption \(P \Rightarrow Q\):}
    
    Given that \(\Grave{P} \subseteq P\), if \(x \in \Grave{P}\), then \(x \in P\). By the implication \(P \Rightarrow Q\), if \(x \in P\), then \(x \in Q\). Hence, if \(x \in \Grave{P}\), then \(x \in Q\).
    
    \item \textbf{Consider \(\Grave{Q} \supseteq Q\):}
    
    By this inclusion, if \(x \in Q\), then \(x \in \Grave{Q}\).
    
    \item \textbf{Combining the above steps:}
    
    Since if \(x \in \Grave{P}\), then \(x \in P\), and if \(x \in P\), then \(x \in Q\), and finally, if \(x \in Q\), then \(x \in \Grave{Q}\). Therefore, if \(x \in \Grave{P}\), then \(x \in \Grave{Q}\).
\end{enumerate}

Thus, we have shown that if \(\phi \equiv P \Rightarrow Q\) holds, then \(\Grave{\phi} \equiv \Grave{P} \Rightarrow \Grave{Q}\) also holds. Therefore, \(\phi \Rightarrow \Grave{\phi}\), proving that \(\Grave{\phi}\) is a relaxation of \(\phi\).

\end{proof}

\subsection{Theorem 1 with Proof}\label{Proof:potentiol}
\begin{definition}
A shaping reward function $F: S \times A \times S \rightarrow \mathbb{R}$ is \textbf{potential-based} if there exists potential function $\psi : S \rightarrow \mathbb{R}$ such that \[ F(s, a, s') = \gamma \psi(s') - \psi(s) \] for all $s \neq s_0, a, s'$.\cite{rs}
\end{definition}
\begin{theorem}\label{the:pro}
If $F$ is a potential-based shaping function, then every optimal policy in $M' = \langle S, A, T, \gamma, r + F \rangle$ will also be an optimal policy in $M = \langle S, A, T, \gamma, r \rangle $ and vice versa.\cite{rs}
\end{theorem}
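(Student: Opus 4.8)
The plan is to show that potential-based shaping merely shifts the optimal action-value function by a state-dependent but action-independent offset, namely $Q^*_{M'}(s,a) = Q^*_M(s,a) - \psi(s)$, so that the set of maximizing actions at every state is unchanged. Since the optimal policies are exactly those acting greedily with respect to $Q^*$, this offset immediately yields that the optimal policies of $M'$ and $M$ coincide, settling both directions (the ``and vice versa'') simultaneously.

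First I would write the Bellman optimality equation for $M$: $Q^*_M(s,a) = \mathbb{E}_{s'\sim P(\cdot|s,a)}[r(s,a)+\gamma\max_{a'}Q^*_M(s',a')]$. Then I would subtract $\psi(s)$ from both sides and rewrite $\gamma\max_{a'}Q^*_M(s',a') = \gamma\max_{a'}\bigl(Q^*_M(s',a')-\psi(s')\bigr) + \gamma\psi(s')$, so that the term $\gamma\psi(s')-\psi(s)$ can be absorbed into the shaping function $F(s,a,s')$. This computation shows that the candidate function $\hat{Q}(s,a) := Q^*_M(s,a)-\psi(s)$ satisfies the Bellman optimality equation for the shaped MDP $M'=\langle S,A,T,\gamma,r+F\rangle$. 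Because the Bellman optimality operator is a $\gamma$-contraction in the sup-norm and $\gamma\in[0,1)$, it has a unique fixed point; hence $\hat{Q}=Q^*_{M'}$, establishing the claimed offset.

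With the offset in hand, the conclusion follows from $\argmax_a Q^*_{M'}(s,a) = \argmax_a \bigl(Q^*_M(s,a)-\psi(s)\bigr) = \argmax_a Q^*_M(s,a)$, since $\psi(s)$ is constant in $a$. An equivalent and more intuitive route I could take instead is to telescope the shaped return along any trajectory $\tau=(s_0,a_0,s_1,\dots)$: the series $\sum_{t\ge0}\gamma^t F(s_t,a_t,s_{t+1}) = \sum_{t\ge0}\gamma^t\bigl(\gamma\psi(s_{t+1})-\psi(s_t)\bigr)$ collapses to $-\psi(s_0)$, so every policy's expected shaped return exceeds its original return by the same constant $-\mathbb{E}[\psi(s_0)]$, which preserves the ordering of policies and hence their optimizers.

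The main obstacle is justifying the uniqueness step rigorously: invoking the $\gamma$-contraction property of the Bellman optimality operator under $\gamma<1$ (precisely the reason $M$ is assumed to have $\gamma\in[0,1)$) to conclude that $\hat{Q}$ must coincide with the unique $Q^*_{M'}$, rather than merely being \emph{a} solution. A secondary subtlety is the ``$s\neq s_0$'' qualifier appearing in the definition of a potential-based $F$; I would check that excluding the initial state from the shaping relation does not disturb the argument, since the telescoping already isolates a start-state constant $-\psi(s_0)$, and leaving $s_0$ untreated only alters that common additive constant without affecting the action-wise $\argmax$, thereby preserving policy invariance.
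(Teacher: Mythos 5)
Your proposal is correct and follows essentially the same route as the paper's proof: subtract $\psi(s)$ from the Bellman optimality equation for $M$, absorb $\gamma\psi(s')-\psi(s)$ into $F$, and identify $\hat{Q}(s,a)=Q^*_M(s,a)-\psi(s)$ as satisfying the Bellman equation for $M'$, hence $Q^*_{M'}=Q^*_M-\psi$. You in fact go slightly further than the paper by explicitly invoking the $\gamma$-contraction argument for uniqueness of the fixed point and by spelling out the action-wise $\argmax$ invariance (both left implicit in the paper), but these are refinements of the same argument rather than a different approach.
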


\begin{proof}
The optimal $Q$-function \( Q^*_M(s, a) = \sup_\pi Q^\pi_M(\) \(s, a) \).
\( Q^*_M \) satisfies the Bellman equation:
\begin{equation*}
    Q^*_M(s, a) = \mathbb{E}_{s' \sim P(\cdot|s,a)} \left[ R(s, a, s') + \gamma \max_{a' \in A} Q^*_M(s', a') \right]
\end{equation*}
Let's subtract \( \psi(s) \) from both sides:
\begin{align*}
    &Q_M(s, a) - \psi(s) \\
    =&\mathbb{E}_{s' \sim P(\cdot|s,a)} \left[ R(s, a, s') + \gamma \max_{a' \in A} Q^*_M(s', a') \right] - \psi(s) \\
    =& \mathbb{E}_{s' \sim P(\cdot|s,a)} \biggl[  R(s, a, s') + \gamma \psi(s') + 
          \left. \gamma \max_{a' \in A} \left( Q^*_M(s', a') - \psi(s') \right) \right] - \psi(s) \\
    =& \mathbb{E}_{s' \sim P(\cdot|s,a)} \biggl[ R(s, a, s') + \gamma \psi(s')-  \left.\psi(s) + \gamma \max_{a' \in A} \left( Q^*_M(s', a') - \psi(s') \right) \right]
\end{align*}

Let
\begin{equation*}
    \hat{Q}_{M'}(s, a) := Q^*_M(s, a) - \psi(s).
\end{equation*}
and recall that
\begin{equation*}
    F(s, a, s') = \gamma \psi(s') - \psi(s).
\end{equation*}
Therefore,
\begin{align*}
    &\hat{Q}_{M'}(s, a) \\
    =& \mathbb{E}_{s' \sim P(\cdot|s,a)} \biggl[ r(s, a, s') + F(s, a, s') + \gamma \left. \max_{a' \in A} \left( \hat{Q}_{M'}(s', a') \right) \right] \\
    =& \mathbb{E}_{s' \sim P(\cdot|s,a)} \left[ r'(s, a, s') + \gamma \max_{a' \in A} \left( \hat{Q}_{M'}(s', a') \right) \right]
\end{align*}

This is the Bellman equation for \( M' \), so
\begin{equation*}
    \hat{Q}_{M'} = Q^*_{M'}.
\end{equation*}

\end{proof}

\begin{corollary}
    $\Tilde{F}_t$ does not affect the optimality of the policy.
\end{corollary}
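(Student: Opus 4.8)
The plan is to reduce the corollary to the potential-based shaping invariance theorem (\Cref{the:pro}): since that theorem guarantees that any potential-based $F(s,a,s') = \gamma\psi(s') - \psi(s)$ leaves the set of optimal policies unchanged, it suffices to exhibit a potential function $\psi : S \to \mathbb{R}$ for which the traced reward $\Tilde{F}_t$ equals such a difference, or, equivalently, whose discounted accumulation along any trajectory depends only on the initial state.

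First I would normalize the per-step term. Within a single iteration $Lr(g(\phi,N))$ is a fixed scalar, and following the remark that the action distance may be set to a constant, the shaping term $F_t = Lr(g(\phi,N)) \cdot Diff(\phi,\mathtt{s},\mathtt{a})$ becomes a function of the current state alone, say $F_t = f(s_t)$. I would then define the candidate potential $\psi(s) = \mathbb{E}_\pi[\sum_{i \ge 0} \gamma^i f(s_{t+i}) \mid s_t = s]$, the expected discounted sum of the shaping signal starting from $s$.

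Next I would unroll the two traceback recurrences with the traceback discount matched to the MDP discount ($\mu = \gamma$ and $\lambda = \gamma$). For the destination-reach recurrence $\Tilde{F}_t = \lambda F_{t+1} - F_t + \mu\Tilde{F}_{t+1}$, geometric unrolling collapses the two running sums into $\Tilde{F}_t = \gamma\psi(s_{t+1}) - \psi(s_t)$, which is exactly the potential-based form, so \Cref{the:pro} applies verbatim. For the action-avoidance recurrence $\Tilde{F}_t = F_t - \lambda^{-1}F_{t-1} + \mu\Tilde{F}_{t+1}$, the same unrolling produces the time-reversed counterpart $\Tilde{F}_t = \psi(s_t) - \gamma^{-1}\psi(s_{t-1})$; here I would instead show directly that $\sum_t \gamma^t \Tilde{F}_t$ telescopes to a quantity determined solely by $s_0$, so the shaping shifts every policy's expected return by the same initial-state constant and therefore preserves the $\argmax$ over policies.

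The main obstacle I anticipate is the action-avoidance case: because its recurrence references the past term $F_{t-1}$, it yields look-back rather than look-ahead shaping, which is not literally in the form covered by \Cref{the:pro}; the work is to verify that its discounted telescoping sum nevertheless reduces to an initial-state constant, modulo a vanishing discounted tail. Secondary care is needed to justify treating $F_t$ as state-only despite the action-dependence of $Diff$, and, in a stochastic environment, to carry the telescoping through expectations using the Bellman-equation argument from the proof of \Cref{the:pro} rather than pointwise along sample paths.
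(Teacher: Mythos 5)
Your reduction to Theorem~\ref{the:pro} and your normalization of $F_t$ to a state-only function $f(s_t)$ match the paper's proof, but your treatment of the recursion term $\mu\Tilde{F}_{t+1}$ is where the argument breaks, and it is the opposite of what the paper does. The paper takes $F_t$ itself as the potential, $\psi(s_t) := F_t$, and then \emph{discards} $\mu\Tilde{F}_{t+1}$ on the grounds that $\mu$ is an auxiliary traceback discount chosen small, so that $\Tilde{F}_t \approx \lambda\psi(s_{t+1}) - \psi(s_t)$ is (approximately) potential-based; the action-avoidance case is then handled by a one-step time shift of the same identity. You instead set $\mu=\lambda=\gamma$ and try to absorb the recursion into a new potential $\psi(s) = \mathbb{E}_\pi\bigl[\sum_{i\ge 0}\gamma^i f(s_{t+i}) \mid s_t = s\bigr]$. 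This fails for two reasons. First, that function is policy-dependent (and its pathwise analogue $G_t = \sum_{i\ge 0}\gamma^i F_{t+i}$ is trajectory-dependent), whereas Theorem~\ref{the:pro} requires one fixed potential $\psi : S \to \mathbb{R}$ shared by all policies; you cannot invoke the theorem with a ``potential'' that changes with the policy being evaluated. Second, and more concretely, your own unrolling collapses completely: with $\mu=\lambda=\gamma$ and terminal condition $\Tilde{F}_{T+1}=0$, backward induction on the destination-reach recurrence gives
\begin{equation*}
\Tilde{F}_t \;=\; \gamma F_{t+1} - F_t + \gamma\Tilde{F}_{t+1} \;=\; -F_t \quad \text{for every } t,
\end{equation*}
and the infinite unrolling of the action-avoidance recurrence likewise gives $\Tilde{F}_t = -\gamma^{-1}F_{t-1}$. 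A non-constant, purely state-dependent reward term $-f(s_t)$ cannot be written as $\gamma\psi(s_{t+1})-\psi(s_t)$ for any state function $\psi$ (it would force $\psi$ to be constant across successor states), and adding such a term to $r_t$ does change the optimal policy in general---an agent will detour to collect the state bonus.

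Your fallback telescoping argument inherits the same defect rather than repairing it: summing your collapsed shaping gives $\sum_t \gamma^t \Tilde{F}_t = -\sum_t \gamma^t F_t$ (equivalently $-G_0$), which is \emph{not} a quantity determined solely by $s_0$; it is precisely the discounted accumulation of the shaping signal along the trajectory, and it differs across policies. The shaped objective therefore becomes $\mathbb{E}\bigl[\sum_t\gamma^t r_t\bigr] - \mathbb{E}\bigl[\sum_t\gamma^t F_t\bigr]$, a policy-dependent modification, so the $\argmax$ over policies is not preserved. To close the gap you must work in the paper's parameter regime rather than yours: keep $\mu$ small (it is not the MDP discount), take $\psi := F$ as the potential, drop $\mu\Tilde{F}_{t+1}$ as a higher-order correction, identify $\lambda$ with $\gamma$, and handle the action-avoidance case by the one-step shift; only then does $\Tilde{F}_t \approx \gamma\psi(s_{t+1})-\psi(s_t)$ hold with a legitimate state potential and Theorem~\ref{the:pro} apply.
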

\begin{proof} 
Recall that
\begin{equation*}
     Diff(\phi,\mathtt{s},\mathtt{a})=\pm Dist (\phi,\mathtt{s})\times Dist (\phi[a],\mathtt{a})
\end{equation*}
\begin{equation*}
    F_t=Lr(g(\phi,N))\times Diff(\phi,\mathtt{s},\mathtt{a})
\end{equation*}
Considering the violation condition, therefore,
\begin{equation*}
    F_t= - Lr(g(\phi,N))\times  Dist (\phi,\mathtt{s})\times Dist (\phi[a],\mathtt{a})
\end{equation*}
For the vast majority of cases, $Dist (\phi[a],\mathtt{a})$ is set to a fixed value, and within one \rt~iteration, $ Lr(g(\phi,N))$ keeps unchanged. 
Therefore, 
\begin{equation*}
    F_t = \kappa Dist_\phi(s)
\end{equation*}

So, $F_t$ can be considered as a potential value $\psi(s)$. 

Recall that, for destination reach property,
\begin{align*}
    \Tilde{F}_{t} &= \lambda F_{t+1} - F_t +\mu \Tilde{F}_{t+1} \\ 
    &= \left[\lambda\psi(s_{t+1}) - \psi(s_t)\right] + \mu \Tilde{F}_{t+1} 
\end{align*}
Generally, $\mu$ enhances property learning  based on the discount factor  $\gamma$  and typically takes on a smaller value. Therefore,
\begin{equation*}
    \Tilde{F}_{t} \approx \lambda\psi(s_{t+1}) - \psi(s_t)
\end{equation*}

This is equivalent to  the potential-based shaping reward function.

Recall that, for action avoiding property,
\begin{align*}
    \Tilde{F}_{t} &= F_{t+1} -\lambda^{-1} F_{t-1} +\mu \Tilde{F}_{t+1} 
\end{align*}

The significant difference between this and the destination reach property lies in the fact that \(F_t\) is shifted forward by one timestep. This is because, in the destination reach property, if an undesirable state region is entered at \(t+1\), it indicates that \(a_{t1}\) is a bad action, and correspondingly, \(\psi(s_{t+1})\) will be smaller. However, in the action avoiding property, we consider the current state at \(t\) itself to be undesirable, indicating that \(a_t\) is a bad action. So, we shift forward by one timestep on the basis of \(\lambda F_{t+1} - F_t\) to \(F_{t} - \lambda^{-1} F_t\), and they are equivalent.

Therefore, according to \textbf{Theorem \ref{the:pro}}, $\Tilde{F}_t$ does not affect the optimality of the policy.
\end{proof}
\section{Supplementary Details of DRLP }\label{Appx:DRLP}
\subsection{Motivation}
The novel DRLP appears because existing methods to encode properties, interpretability questions, and constraints  aren't user-friendly. While SMT and ACTL formulas are common, they resemble pseudocodes more than programming languages. Inspired by Python's popularity in DRL, we use its syntax to encode SMT formulas more intuitively with syntactic sugars, enhancing ease of writing. In previous works, Trainify\cite{trainify} claims ACTL use, but uses hardcoded conditional statements without parsing ACTL actually; model-checking approaches like whiRL\cite{whiRL,whiRL_2.0} use DNN verifiers' API, unsuitable for DRL verification. Verisig\cite{verily,verisig2.0} uses hybrid system models which lack readability.
\subsection{DRLP Syntax}
We design a Python-embedded Domain-Specific Language named DRLP (Deep Reinforcement Learning Property) and implement its parser. 
Its currently supported syntax is defined in Backus-Naur Form as:

\vspace{10pt}

\begin{lstlisting}[frame=single,basicstyle=\scriptsize]
<drlp> ::= 
    (<statements> NEWLINE `@Pre' NEWLINE)
    <io_size_assign> NEWLINE <statements> NEWLINE 
    `@Exp' NEWLINE <statements>

<io_size_assign> ::= `'
    | <io_size_assign> NEWLINE <io_size_id> `=' <int>
   
<io_size_id> ::= `x_size' | `y_size'

<statements> ::= `'
    | <statements> NEWLINE <statement>

<statement> ::= <compound_stmt> | <simple_stmts>

<compound_stmt> ::= <for_stmt> | <with_stmt>

<for_stmt> ::=
    `for' <id> `in' <range_type> <for_range> `:' 
    <block>

<with_stmt> ::= `with'  <range_type> `:' <block>

<block> ::= NEWLINE INDENT <statements> DEDENT
    | <simple_stmts>

<range_type> ::= `range' | `orange'

<for_range> ::= `('<int>`)'
    | `('<int> `,' <int> `)'
    | `('<int> `,' <int> `,' <int>`)'

<simple_stmts> ::= `'
    | <simple_stmts> NEWLINE <simple_stmt>

<simple_stmt> ::= <call_stmt> | <expr>

<call_stmt> ::= `Impiles' `(' <expr> `,' <expr> `)'
    | `And' `(' <exprs> `)'
    | `Or' `(' <exprs> `)'

<exprs> ::= <expr> 
    | <exprs> `,' <expr>

<expr> ::= <obj> <comparation>

<comparation> ::= `' 
    | <comparator> <obj> <comparation>

<obj> ::= <constraint> | <io_obj>

<io_obj> ::= <io_id> 
    | <io_id> <subscript>
    | <io_id> <subscript> <subscript>
    
<io_id> ::= `x' | `y'

<subscript> ::= `[' <int> `]'
     | `[' <int>`:'<int> `]'
     | `[' <int>`:'<int> `:'<int>`]'

<int> ::= <int_number> 
    | <id> 
    | <int> <operator> <int>

<constraint> ::= <int> 
    | <list>
    | <constraint> <operator> <constraint>
\end{lstlisting}

\subsection{DRLP Design}

\paragraph{DRLP Structure.}
A DRLP script is structured into three segments: \textsl{drlp\_v} encompassing pre-assigned variables and Python codes to execute, \textsl{drlp\_p} comprising the pre-condition $P$ initiated by the delimiter \texttt{@Pre}, and \textsl{drlp\_q} comprising the post-condition $Q$ initiated by the delimiter \texttt{@Exp}.

In \textsl{drlp\_v}, the statements are not directly associated with a specific property, but they facilitate to \textsl{batch verification}. Their execution outcomes, encompassing all variable values within the local scope, are transferred to the subsequent two segments for the assignment of variables without concrete values. Variables prefixed with ``\_'' are \textsl{iterable variables}; Cartesian products of all textsl{iterable variables} are generated by removing ``\_'' from their names. To illustrate, if \texttt{a=1,\_b=[2,3],\_c=[4,5]} is in \textsl{drlp\_v}, four variable values are passed to \textsl{drlp\_p} and \textsl{drlp\_q}: 
$a=1,b=2,c=4$; $a=1,b=2,c=5$; $a=1,b=3,c=4$; $a=1,b=3,c=5$.
Consequently, a single property in DRLP can be translated into four distinct properties, each with specific concrete values.

In \textsl{drlp\_p}, it is imperative to state all pre-condition constraints. In situations where the size of the DNN input or output cannot be inferred from other statements, an explicit declaration is mandated. For instance, \texttt{x\_size=3} or \texttt{y\_size=1} must be clearly stated.

In \textsl{drlp\_q}, all post-condition constraints are stated. 

\paragraph{DRLP Variables Access.}
In DRLP, the input $x$ is treated as a matrix of dimensions $k \times n$, where each row represents a set of inputs for the original DNN with a size of $1 \times n$. Similarly, the output $y$ follows a similar structure. Slicing can be applied to access specific portions of $x$ and $y$. For instance, \texttt{x[0][0:2]} refers to the 0-th and 1-st input variables of the 0-th original DNN. Also, \texttt{y[1]} denotes the output of the 1-st original DNN.

\paragraph{DRLP Statement.}
The statements in \textsl{drlp\_p} and \textsl{drlp\_q} primarily consist of constraints formulated as comparison expressions. For example, given an original DNN with an input size $1\times 2$, where the initial state condition $I$ dictates that both input variables are set to $0$, this can be stated as \texttt{[0]*2$\le$ x[0]$\le$ [0]*2} or \texttt{[0]*2 == x[0]}. Here, \texttt{[0]*2} is a simplified form of \texttt{[0,0]}.

\paragraph{DRLP Relation Domain.}
The relation \textsl{And} is the predominant Boolean operator employed in DRL properties. As a result, the constraint statements in each line of DRLP scripts are inherently connected using the \textsl{And} relation by default. This default connection is referred to as the \textsl{And} relation domain ($range$). Consequently, the constraint statements present within this range must be simultaneously satisfied.

When certain constraints are connected by the relation \textsl{Or}, it is necessary to explicitly define the \textsl{Or} relation domain ($orange$) using the statement with \texttt{with orange:}. This signifies that within the $orange$ segment, it is sufficient for just one constraint to be satisfied.

\paragraph{DRLP Loop.}
In DRL, it is common for certain constraints to take on similar forms due to multiple interactions. To accommodate this, DRLP introduces the \texttt{for} keyword, allowing the declaration of loop code segments in a Python-style manner. A looping range can be stated as \texttt{for i in range(0,3):}, similar to Python loops. Within each unrolled loop, statement segments are connected using the \textsl{And} relations.

Looping $orange$ can be stated like \texttt{for i in orange(0,3):}, and statement segments of each unrolled loop are connected with the \textsl{Or} relation. Note that only these segments are in $orange$ between them, but the statements inside a segment are still in $range$ if without explicit declaration.

\section{Supplementary Design of Reinfier}\label{Appx:Reinfier}
\subsection{DNN Adapter}
Due to the multiple interactions between the agent and the environment, all constraints from the property, the DRL system, and the DNN need encoding to be consistent with verification depth $k$ for model checking algorithms. Therefore, the agent DNN should be unrolled and encoded $k$ times larger than the original when the verification depth is $k$. This unrolled DNN is used as the input DNN for DNN verification. 
$x_i^j$ denotes the $j$-th value of the input of the $i$-th original DNN; the input of the $i$-th original DNN $x_i$ is $\langle x_i^0,...,x_i^{n-1} \rangle$; the input $x$ $\langle x_0,...,x_{k-1} \rangle$.
$y_i^j$ denotes the $j$-th value of the output of the $i$-th original DNN; the output of the $i$-th original DNN $y_i$ is $\langle y_i^0,...,y_i^{m-1} \rangle$; the output $y$ is $\langle y_0,...,y_{k-1} \rangle$.


\subsection{Type of Properties}
All constraints from the property and the DRL system need encoding. Thus, pre-condition $P$ can be divided into the following four parts:

\textbf{(i) State boundary condition $\bm{S}$:}
describes the boundary of the environment in DRL. 
and all input features should be bounded.
\textbf{(ii) Initial state condition $\bm{I}$:}
describes the initial state of the system, because a DRL system could start from a defined state or within a particular state boundary.
\textbf{(iii) State transition condition $\bm{T}$:}
describes how the environment transits from the current state $x_i$ to the new state $x_{i+1}$. Generally, the action $N(x_i)$ taken by the agent plays a pivotal role in the state transition, but $x_{i+1}$ is not only determined by $N(x_i)$ but also affected by random factors.
\textbf{(iv) Other condition $\bm{C}$:}
describes extra constraints related to specified properties. It stands apart from the $S$, $I$, and $T$ parts, which usually describe the unchanging characteristics of a particular DRL system. For example, liveness properties~\cite{mc} require no state cycle existing, such as $x_0 != x_2$; and this constraint should be considered as an extra constraint due to its exclusive association with this particular property, rather than being a characteristic of the entire system. We believe that distinct division in this way is beneficial for users to formulate and articulate properties.

\textbf{Post-condition $\bm{S}$:} describes the expected result of a given DRL system under previous constraints.

Safety properties~\cite{mc} represent one of the paramount property types, where the agent is expected to persistently avoid undesirable actions under given pre-conditions. Safety properties guarantee that the DRL system will never transition into \textit{bad} states. The predicate $Good(N(x))$ evaluates as \texttt{True} when in \textit{good} states and vice versa. Such a property is formulated as:
\begin{equation}
\begin{split}
        \forall x_0,x_1..., &\bigwedge_{i=0}(S(x_i)\wedge I(x_i)\wedge T(x_i,x_{i+1})\wedge C(x_i)) \\
        &\Rightarrow \bigwedge_{i=0}(Good(N(x_i)))
\end{split}
\end{equation}

Another type of property is liveness property~\cite{mc}, where the agent might take undesirable actions but is expected to eventually take desirable ones. Liveness properties guarantee that the DRL system will not remain persistently trapped in \textit{bad} states. Thus, verifying liveness properties involves the optional detection of cycles of states consistently in \textit{bad} states. $C'(x)$ denotes $C(x)$ without cycle detection, i.e., $x_i != x_\gamma$. Such a property is formulated as:

\begin{equation}
\begin{split}
        \forall x_0,x_1..., &\bigwedge_{i=0}(S(x_i)\wedge I(x_i)\wedge T(x_i)\wedge C'(x_i) \\ & \wedge(\bigwedge_{\gamma=0}^{i-1} (x_i != x_\gamma))) 
        \Rightarrow \bigvee_{i=0}(Good(N(x_i)))
\end{split}
\end{equation}

Robustness properties assess the stability of the agent's performance within adversarial environments, which constrains the agent to take consistent or similar actions in the presence of perturbed input values (local robustness~\cite{robness}) or input outliers (extreme value robustness~\cite{Reluplex}. $\hat{x_{0}}$ denotes the original input, and $\varepsilon$ denotes perturbation. Such a property is formulated as: 
\begin{equation}
        \forall x_0, I(x_0)\wedge C(x_0)\Rightarrow N(x_0) \approx N(\hat{x_{0}})
\end{equation}
For local robustness, $I(x_0)\equiv \bigwedge_{i=0} (\hat{x_{0}^i} -\varepsilon^i \le x_0^i \le \hat{x_{0}^i} +\varepsilon^i)$; for extreme value robustness, $I(x_0)\equiv \bigwedge_{i \in D} (\hat{x_{0}^{i}} -\epsilon^i \le x_0^i \le \hat{x_{0}^{i}} +\epsilon^i) \wedge \bigwedge_{i \in R} (\hat{x_{0}^{i}} -\varepsilon^i \le x_0^i \le \hat{x_{0}^{i}} +\varepsilon^i)$, in which $D$ denotes the set of indices of the input features whose values are outliers, while $R$ denotes the set of indices of other input features.

\subsection{Single Verifier}
\subsubsection{Model Checking Algorithm.}
Inspired by related works~\cite{whiRL,whiRL_2.0}, we implement the following algorithms in Single Verifier.
\paragraph{Bounded model checking (BMC).}
As the verification depth $k$ increases from $1$, both the original DNN and the DRLP script are unrolled or translated with depth $k$. Then, DNN verifier is queried. The query is formulated as:


\begin{equation}
\begin{split}
    \forall x_0,...,x_{k-1}, &(\bigwedge_{i=0}^{k-1}{(S(x_i)\wedge C(x_i))}) \wedge I(x_0) \\
    & \wedge(\bigwedge_{i=0}^{k-2}{T(x_i, x_{i+1})}) \rightarrow  \bigwedge_{i=0}^{k-1}{Q((N(x_i))}
\end{split}
\end{equation}

\paragraph{$\bm{k}$-induction.}
The first phase of $k$-induction, the basic situation verification, is identical to the BMC process; while the second phase diverges. In the inductive hypothesis verification, the original DNN is unrolled with depth $k+1$, and the DRLP script is translated for indication verification with depth $k+1$ and an assumption that the property holds with depth $k$. Then, the DNN verifier is queried. The query is formulated as:
\begin{equation}
\begin{split}
    \forall x_\alpha,...,x_{\alpha+k},  &(\bigwedge_{i=\alpha}^{\alpha+k}{(S(x_i)\wedge C(x_i))}) \wedge(\bigwedge_{i=\alpha}^{\alpha+k-1} (T(x_i \\
    & ,x_{i+1}) \wedge Q(N(x_i)))\rightarrow Q(N(x_{\alpha+k}))
\end{split}
\end{equation}

\subsubsection{Reachability Analysis}
Through our DRLP parser and DNN adapter, we use Verisig~\cite{verisig2.0} for reachability analysis.

\subsection{Batch Verifier} \label{appx:batch verfier}
\begin{algorithm}[htb]
    \small
    \caption{FindBreakpoints (General Form)}
    \label{alg:find_bpsapp}
    
    \textbf{Input}: DRLP template $p$, DNN $N$, variable list $\{vars\}$\\
    \textbf{Output}: Breakpoint list $\{bp\}$ 
    \begin{algorithmic}[1]
        \STATE $var \gets $ pop the top variable from $\{vars\}$
        \STATE $lb \gets $ the lower bound of $var$, $ub \gets $ the upper bound of $var$
        \STATE $prec \gets $ the search precision of $var$
        \STATE 
        \WHILE{\texttt{True}}
            \STATE $lb,ub,curr \gets$ step$(lb,ub,curr,prec,p,N,var,\{vars\})$
                        \IF{$curr = \texttt{None}$  }
            \STATE \textbf{break}
            \ENDIF

            \STATE $script \gets $concretize$(p,var,curr)$
            \IF{$\{vars\}$ is empty }
                \STATE $curr\_r, violation \gets $verify$(script,N)$
                \IF{curr\_r $\neq prev\_r$}
                    \STATE Add $script$'s corresponding property to $\{bp\}$
                \ENDIF
                \STATE $prev\_r \gets curr\_r$
            \ELSE
                \STATE FindBreakpoints$(script,N,\{vars\})$
            \ENDIF
            
        \ENDWHILE
       
    \end{algorithmic}
\end{algorithm}

\begin{algorithm}[htb]
    \small
    \caption*{\textbf{Function} step}
    \label{alg:step}
    \textbf{Input}: lower bound $lb$, upper bound $ub$, current value $curr$, search precise $prec$, DRLP template $p$, DNN $N$, variable $var$, variable list $\{vars\}$\\
    \textbf{Output}: lower bound $lb$, upper bound $ub$, current value $curr$ 
    \begin{algorithmic}[1]

        \STATE \{Linear Search\}
        \IF{$curr \leq ub$}  
            \STATE $curr \gets curr + prec$
        \ELSE
            \STATE $curr \gets \texttt{None}$
        \ENDIF
        \STATE
        \STATE \{Binary or Iterative Search\}
        
        \STATE $lb\_r, violation \gets$ verify(concretize$(p,var,lb,\{vars\}) , N)$
        \STATE $ub\_r, violation \gets$ verify(concretize$(p,var,ub,\{vars\}) , N)$
        \STATE $curr\_r, violation \gets$ verify(concretize$(p,var,curr,\{vars\}) , N)$
                    \IF{$curr\_r = \texttt{Falsified}$}
                    \STATE $curr \gets $ the value of $var$ in $violation$
            \ENDIF
        \STATE
        \STATE \{Binary Search\}
        \IF{$ub-lb > prec$}

            \IF{$lb\_r = curr\_r$}
                \STATE $lb\gets curr$
            \ELSIF{$ub\_r \neq curr\_r$}
                \STATE $ub\gets curr$
            \ENDIF
            \STATE $curr \gets \frac{lb+ub}{2}$
        \ELSE
            \STATE $curr \gets \texttt{None}$
        \ENDIF
        
        \STATE
        \STATE \{Iterative Search\}
        
        \IF{$lb\_r = curr\_r$ }
            \STATE $curr \gets curr \times iterative\_step$
        \ELSE
            \STATE $ub \gets curr$
            \STATE Switch to Linear or Binary Search
        \ENDIF
    \end{algorithmic}
\end{algorithm}

We design Algorithm~\ref{alg:find_bpsapp} to identify \textsl{breakpoints}, using three representative methods in the function $step$: linear, binary, and iterative search. The Algorithm~\ref{alg:find_bps} in the main text is a special case of Algorithm~\ref{alg:find_bpsapp}, and also the most common case, where all variables in the variable list $\{vars\}$ are searched using linear search except for the last variable, which uses binary search.

\subsection{Interpreter}\label{appx:interpreter}
In addition to the problems in the main text, we also provide solutions to the following problems, which are similarly based on breakpoints.
\subsubsection{Decision Boundary}
For a robustness property ~\cite{iw42,iw43}, the data points, which are formed by the parameters extracted from the \textsl{breakpoints}, are part of the decision boundary. This perspective inspires us to discern the specific conditions under which the DRL system satisfies given safety, liveness, and robustness properties. Additionally, in the context of safety and liveness properties, if the verification depth is confined to $k=1$, i.e., degenerating into one-shot DNN verification, the parameters extracted from the identified \textsl{breakpoints} are also recognized as invariants~\cite{whiRL_2.0}, which indicates the boundary between conditions wherein the property always holds or never holds. 

\textbf{Our solution:} Find \textsl{breakpoints} by stepping through interested parameters; and fit the corresponding decision boundary with the data points 
consisting of parameters extracted from these \textsl{breakpoints}. 

\subsubsection{Intuitiveness Examination}
An intuitive insight of the example of sending data using the computer: the worse the network condition is, the more reasonable for the computer to decrease its sending rate. When focusing on safety and liveness properties, as the values of certain parameters in the property increase or decrease, the associated constraints will consequently become more tightened or relaxed, rendering the property either easier or harder to hold; and this idea leads us to propose the intuitiveness examination. 

\textbf{Our solution: }
Find \textsl{breakpoints} by linear stepping through interested parameters; and if the count of the identified \textsl{breakpoints} is either $0$ or $1$, the intuitiveness examination passes. $\{bp\}$ denotes the list of corresponding properties associated with identified \textsl{breakpoints}, and the answer to this question is formulated as: $|\{bp\}|\in{0,1}$.

\subsubsection{Counterfactual Explanation} 
Given the original input $\hat{x_0}$ and the target counterfactual output $\hat{y_0}$ such that $\hat{y_0}\not\approx N(\hat{x_0})$, the counterfactual explanation question~\cite{iw28,iw29} aims to identify the counterfactual inputs $\{x_0\}$ such that $N(x_0) \approx \hat{y_0}$. The most valuable counterfactual input $x_0$ is the one that is closest (according to vector distance) to the original input $\hat{x_0}$. Because the minimum vector distance often implies the minimum attack cost~\cite{Attack}, this counterexample can effectively highlight the system's weakness.

\textbf{Our solution: } 
Find \textsl{breakpoints} by stepping through possible perturbation value $\varepsilon_0^j$ of each input $x_0^j$ on the property formulated as:
\begin{equation}
         \forall x_0, \bigwedge_{j=0}^{n-1}(\hat{x_{0}^{j}} -\varepsilon_0^j \le x_0^j \le \hat{x_{0}^{j}} +\varepsilon_0^j) \Rightarrow N(x_0) \approx \hat{y_0}
\end{equation}
Here, we expand the question to encompass safety and liveness properties as well: safety, $    P
\equiv \bigwedge_{i=0}((\bigwedge_{j=0}^{n-1}(\hat{x_{i}^{j}} -\varepsilon_i^j \le x_i^j \le \hat{x_{i}^{j}} +\varepsilon_i^j)) \wedge  T(x_i,x_{i+1})) \wedge I(x_0) $; liveness, $P(x) \equiv \bigwedge_{i=0}((\bigwedge_{j=0}^{n-1}(\hat{x_{i}^{j}} -\varepsilon_i^j \le x_i^j \le \hat{x_{i}^{j}} +\varepsilon_i^j) \wedge T(x_i,x_{i+1})
\wedge(\bigwedge_{\gamma=0}^{i-1} (x_i != x_\gamma)))\wedge I(x_0)$. Finally, the answer is the 
input closest to the original input, formulated as: $ \argmin_{bp\in \{bp\}} d(\hat{x}+\varepsilon,\hat{x})$, where $d$ denoted a specific distance function.

\subsubsection{Importance Analysis}
Given the original input $\hat{x_0}$, the importance analysis question~\cite{iw8} aims to evaluate the distances between $x_0$ and $\hat{x_0}$,
where $x_0$ denotes the input whose features under discussion can be perturbed such that $N(x_0) \not\approx N(\hat{x_0})$.
The lower the distance, the more important the feature, as even a slight perturbation in the input of such a feature can induce greater fluctuations in the output. Importance analysis on all input features con identifies which holds a central role in the decision-making process.

\textbf{Our solution: } 
Find \textsl{breakpoints} by stepping through possible perturbation values $\{\varepsilon^j_0\}$ applied to the input features $\{x_0^j\}$ on the property formulated as:
\begin{equation}
\small
\begin{split}
     \forall x_0, &\bigwedge_{j \in D} (\hat{x_{0}^{j}} -\varepsilon^j_0 \le x_0^j \le \hat{x_{0}^{j}} +\varepsilon^j_0) \wedge \bigwedge_{j \in R} (\hat{x_{0}^{j}}=x_0^j)  \\
     & \Rightarrow N(x_0) \not \approx N(\hat{x_0})
\end{split}
\end{equation}
where $D$ denotes the set of indices of the discussed features within the input, while $R$ denotes the set of indices of other features.
Finally, the answer is the minimum distance between $\hat{x_0}+\varepsilon_0$ and $\hat{x_0}$, i.e., importance, formulated as:$\min_{bp\in \{bp\}} d(\hat{x_0}+\varepsilon_0,\hat{x_0})$.
\section{Supplementary Details of Reintrainer}\label{Appx:Reintrainer}


\subsection{Distance of States in $1$-Dimension.}
To elucidate our algorithm in~\rt and our motivation, we employ the Mountain Car~\cite{mountaincar} task as a case study. The task consists of a car placed stochastically at a valley's bottom, and its goal is to accelerate the car to reach the top of the right hill. The agent can take one of three discrete actions: accelerate to the left, to the right, or not accelerate. The environment states consist of two features: the car's position along the x-axis $p$, and the car's velocity $v$. 

Considering the safety property $\phi_1$ of Mountain Car (then denoted by $\phi$, $\phi \equiv \langle p,v \rangle \in [\langle -0.60,0.03\rangle,\langle -0.40, 0.07\rangle]
\Rightarrow Action \neq 0$) defined in Evaluation,
if this property is violated under the conditions such as when $\langle p,v \rangle$ is
\textbf{(i)} $\langle -0.50,0.05 \rangle$, \textbf{(ii)} $\langle -0.41,0.05 \rangle$, \textbf{(iii)} $\langle -0.59,0.05 \rangle$, \textbf{(iv)} $\langle -0.50,0.07 \rangle$, the agent's action is 0.

\paragraph{Raw distance.}
The agent in \textbf{(i)} should be penalized more severely than in \textbf{(ii)}, because the latter only needs to make a minor adjustment, i.e., reduce $p$ to $0.4$ to satisfy the property. Therefore, the distance from \textbf{(i)} to the boundary should be greater than the distance from \textbf{(ii)}.
Therefore, we define a raw distance function: given the state feature $s_i^j$ whose current observation is $\mathtt{v}$, its raw distance $dist'$ to its property-constrained space $\phi[s_i^j]$ in $1$-dimension is formulated as:

\begin{equation}\label{eq:raw}
    dist'(\phi[s_i^j],\mathtt{v}) = min(|\underline{\phi}[x_i^j]-\mathtt{v}|,|\overline{\phi}[x_i^j]-\mathtt{v}|)
\end{equation}

\paragraph{Distance density.}

However, the raw distance measure $dist'$ is not sufficient for certain scenarios, such as when comparing \textbf{(ii)} with \textbf{(iii)}. In both cases, the value of 
$p$ is equally close to the boundary, yet the fluctuations in the output could be different for the same input feature at different values.
This observation leads us to the concept of sensitivity analysis mentioned in interpretability questions, which is used to measure the potential of a feature to induce fluctuations in the output.

Given the DNN $N$ and the original input $\hat{x}_0$, when the input feature $x_0^j$ is subject to perturbation $\varepsilon$, the sensitivity of $x_0^j$ can be formulated as:

\begin{equation}
            Sensitivity(N,\hat{x}_0, x_0^j) = 
            \max_{x_0} d(N(\hat{x}_0),N(x_0))
\end{equation}
where $x_0\in[\langle\hat{x}_0^0,\hat{x}_0^1,...,\hat{x}_0^j-\varepsilon,...,\hat{x}_0^{n-1}\rangle,\langle\hat{x}_0^0,\hat{x}_0^1,...,\hat{x}_0^j+\varepsilon,...,\hat{x}_0^{n-1}\rangle]$.



Considering that perturbations need only be evaluated in one direction at the boundary, we define two types of \textsl{density} measures:

The first is the \textsl{density} of the lower bound $\underline{\rho}(N,\hat{x}_0, x_0^j)$, where $x_0\in[\langle\hat{x}_0^0\hat{x}_0^1,...,\hat{x}_0^j$ $,...,\hat{x}_0^{n-1}\rangle,\langle\hat{x}_0^0,\hat{x}_0^1,$ $...,\hat{x}_0^j+ \varepsilon , ... , \hat{x}_0^{n-1}\rangle]$. The second is the \textsl{density} of the upper bound $\overline{\rho}\allowbreak(N,\hat{x}_0, x_0^j)$, where $x_0\in[\langle\hat{x}_0^0,\hat{x}_0^1,...,\hat{x}_0^j-\varepsilon,...,\hat{x}_0^{n-1}\rangle,\allowbreak\langle\hat{x}_0^0,\hat{x}_0^1,...,\hat{x}_0^j,...,\hat{x}_0^{n-1}\rangle]$.
The function is also denoted as $\underline{\rho}(\phi,s_i^j)$ for simplicity, as Eq.~\ref{eq:rho} in tha main text.

\paragraph{Exact middle point.}
The exact middle point, denoted as $\dot{\phi}[s_i^j]$, represents the position with the maximum distance to the boundary, and also the point where the \textsl{density} of the lower and upper boundaries switch. Accordingly, this point should be closer to the boundary that exhibits greater \textsl{density}. Consequently, we determine the exact middle point by calculating the midpoint of the \textsl{density}-weighted upper and lower bounds. 
Besides, in the comparison between scenarios \textbf{(i)} with \textbf{(iv)}, although $v$ in \textbf{(iv)} is closer to $\phi[v]$, the value of $v$ equaling $0.07$ is actually the upper bound of velocity in the Mountain Car environment (denoted by $\overline{S}[v]$). Therefore, a significant adjustment is needed in \textbf{(iv)}, i.e., reduce $v$ to $0.3$ to satisfy the property. Consequently, the distance from \textbf{(iv)} to the boundary should be greater than the distance from \textbf{(i)}. Therefore, the exact middle point considering the environmental boundaries is formulated as: 



\begin{equation}
    \dot{\phi}[s_i^j] = \left\{
    \begin{array}{ll}
        \underline{S}[s_i^j] & \text{if }
            \begin{array}{l}
                  \underline{\phi}[s_i^j] = \underline{S}[s_i^j] \\
                \wedge \overline{\phi}[s_i^j] \neq \overline{S}[s_i^j]
            \end{array}
        \\
        \frac{
            \underline{\rho}(\phi,s_i^j) \cdot \underline{\phi}[s_i^j] +
            \overline{\rho} (\phi,s_i^j) \cdot \overline{\phi}[s_i^j] }{
            \underline{\rho}(\phi,s_i^j) + \overline{\rho} (\phi,s_i^j)  } & 
        \text{else}
        \\
        \overline{S}[s_i^j] & \text{if }
            \begin{array}{l}
                  \underline{\phi}[s_i^j] \neq \underline{S}[s_i^j] \\
                \wedge \overline{\phi}[s_i^j] = \overline{S}[s_i^j]
            \end{array}
    \end{array}
    \right.
\end{equation}

\subsection{Practical Method to Metric of Gap}
 One parameter $z$ from the environment state features or the action should be chosen for the gap measure. In our implementation, \rt could automatically choose the lower bound of action for continuous action space or the lower bound of the first environment state for discrete action space by default as the metric parameter; and it can also be chosen manually.


For example, the property $\phi \equiv x > 0 \Rightarrow N(x) > 0$ is predefined. We choose the lower bound of action as the metric parameter. If the verifier falsifies $\phi$, a relaxation domain of $\phi$ can be obtained by relaxing the action's constraint interval according to Theorem \ref{the:re}. For instance, changing $N(x) > 0$ to $N(x) > -10$ forms $\Grave{\Phi}_{x > 0 \Rightarrow N(x) > 0}=\{ x > 0 \Rightarrow N(x) > a: \forall a\in[-10,0]\}$. Using the \textsl{breakpoint} search algorithm, setting the search bounds of variable $a$ to $[-10, 0]$ allows us to identify the set of \textsl{breakpoint} $\{bp\}$, such as $\{\equiv x > 0 \Rightarrow N(x) > -5\}$. Then, we calculate $g(\phi, N) = \min_{\Grave{\phi} \in bps} |\phi[z] - \Grave{\phi}[z]| = |0 - (-5)| = 5$. If no \textsl{breakpoint} are found, the search bounds of variable $a$ are expanded to $[-40, 0]$, and the process is repeated until any \textsl{breakpoint} is found.

The method for expanding the search bounds is quite subjective. Typically, the \textsl{breakpoint} search adopts a binary stepping approach, so one possible way is to quadruple the size of the search interval each time.

\begin{table*}[htb]

    \centering
    \begin{tabularx}{\textwidth}{p{2.8cm}XXXX}
    \hline
     & \textbf{$\phi_{7}$} & \textbf{$\phi_{8}$} & \textbf{$\phi_{9}$} & \textbf{$\phi_{10}$} 
     \\ \hline
     Type & Liveness & Liveness & Safety & Liveness
     \\ \hline
    \normalsize State Boundary $S$ 
      & 
          {
          \setlength{\abovedisplayskip}{-8pt}
          \setlength{\belowdisplayskip}{-8pt}
          \begin{alignat*}{8}
          &i &&\in [&-0&.01&&,&+0&.01&&] \\
          &l &&\in [& 1&.00&&,& 1&.01&&] \\
          &r &&\in [& 1&.00&&,& 1&.00&&] 
          \end{alignat*}}
      &   
          {
          \setlength{\abovedisplayskip}{-8pt}
          \setlength{\belowdisplayskip}{-8pt}
          \begin{alignat*}{8}
          &i &&\in [&-0&.01&&,&+0&.01&&] \\
          &l &&\in [& 1&.00&&,& 1&.01&&] \\
          &r &&\in [& 1&.00&&,& 1&.00&&] 
          \end{alignat*}}
      
      &            
          {
          \setlength{\abovedisplayskip}{-8pt}
          \setlength{\belowdisplayskip}{-8pt}
          \begin{alignat*}{8}
          &i &&\in [&-0&.01&&,&+0&.01&&] \\
          &l &&\in [& 1&.00&&,& 1&.01&&] \\
          &r &&\in [& 2&.00&&,& 100&.00&&] 
          \end{alignat*}}
      &  
          {
          \setlength{\abovedisplayskip}{-8pt}
          \setlength{\belowdisplayskip}{-8pt}
          \begin{alignat*}{8}
          &i &&\in [&-0&.01&&,&+0&.01&&] \\
          &l &&\in [& 1&.00&&,& 1&.01&&] \\
          &r &&\in [& 2&.00&&,& 100&.00&&] 
          \end{alignat*}}
    \\ \hline
    Other Condition $C$ & {No state cycle} &{No state cycle} & None & {No state cycle} \\
    \hline
    Initial State $I$ & \multicolumn{4}{c}{None} \\
    \hline
    State Transition $T$ & \multicolumn{4}{c}{History features shift by one value}  
    \\ \hline
    
    Post-condition $Q$ &  Exist  $N(x_i) \not \approx 0$ 
    & Exist  $N(x_i) > 0$ 
    & Forall  $N(x_i) < 0$ &
    Exist  $N(x_i) < 0$
    \\    \hline
    Reinfier Result   & \texttt{Proven} & \texttt{Falsified}   & \texttt{Falsified}      & \texttt{Falsified}  \\
    whiRL Result      & \texttt{Proven} & \texttt{Falsified}   & \texttt{Falsified}      & \texttt{Proven}    
    \\ \hline
    
    \end{tabularx}
        \caption{Aurora safety and liveness properties and comparison of their results by Reinfier and whiRL.}
    \label{tab:ausl}
\end{table*}

\begin{table*}[htb]
    \centering
    
    \begin{tabularx}{\textwidth}{p{2.8cm}XXXX}
    \hline
     & \textbf{$\phi_{11}$} & \textbf{$\phi_{12}$} & \textbf{$\phi_{13}$} & \textbf{$\phi_{14}$} 
     \\ \hline
     Type & {Local Robustness} & {Local Robustness} & {Extreme Value} & {Extreme Value}
     \\ \hline
    $\langle i,l,r\rangle$\newline
    \normalsize{Original Input $\hat{x}_0$} 
      &   {$\langle 2.00,2.00,10.00\rangle$}  
      &   {$\langle -0.70,0.50,1.00\rangle$}  
      &   {$\langle 0.00,1.05,1.00\rangle$}  
      &   {$\langle 0.00,1.05,1.00\rangle$} 
    \\ \hline
    Perturbation $\varepsilon$
    & {$\langle0.00,0.01,0.01\rangle$}
    & {$\langle0.01,0.01,0.00\rangle$}
    & {$\langle0.005,0.005,20\rangle$}
    & {$\langle0.005,0.005,-0.05\rangle$}
    
    \\ \hline
    Post-condition $Q$ 
    & {$N(x_0)\approx N(\hat{x}_0)$}  \newline{$\varepsilon=0.05$}  
    & {$N(x_0)\approx N(\hat{x}_0)$}  \newline{$\varepsilon=0.05$}  
    & {$N(x_0)\approx N(\hat{x}_0)$}  \newline{$\varepsilon=1.0$}  
    & {$N(x_0)\approx N(\hat{x}_0)$}  \newline{$\varepsilon=0.3$}  
    
    \\ \hline
    Reinfier Result & \texttt{Proven}     & \texttt{Falsified}    & \texttt{Falsified}    & \texttt{Falsified}  \\
    whiRL Result   & {Not applicable} & {Not applicable} & {Not applicable} & {Not applicable}
    \\ \hline
    
    \end{tabularx}
    \caption{Aurora robustness properties and comparison of their results by Reinfier and whiRL.}
    
    \label{tab:aurub}
\end{table*}

\subsection{Weights for Multiple Properties}
If multiple properties are simultaneously required to be satisfied, a weight $w$ can be assigned to each property. The weights can be assigned manually based on the importance or based on the preference of each property, with higher weights given to properties more likely to be satisfied. 
Another approach is to build a decision tree based on predefined properties, with constraints of a property starting from the root node and ending at a leaf node. We believe that the deeper the leaf node, the more relaxed the property. By arranging all properties according to their depth, a sequence of properties ranging from relaxed to stringent can be established, and their weights can be chosen in ascending or descending order.
We use factor $\beta$ to scale the effect of $\Tilde{F}^i_t$ as a whole, and we find during the evaluation that when intermediate reward sum $\hat{F}_t$ and $r_t$ are at the same order of magnitude, our approach works well. 
The intermediate reward sum function is formulated as:
\begin{equation}
    \hat{F}_t = \beta\sum{w^i \Tilde{F}^i_t}
\end{equation}
In addition to reward shaping, each $\Tilde{F}^i_t$ can also be the cost respectively in Constrained MDP. 

\subsection{Counterexamples Generation}
\rt~can actively generate a small number of counterexamples in the buffer for the agent's property learning. Specifically, \rt~initially relies on reward shaping for property learning during training, and if few or no violations occur but the predefined properties are falsified, it later combines reward shaping with data augmentation using generated counterexamples.
\section{Supplementary Evaluation}\label{Appx:Eva}

\subsection{Reinfier}\label{Appx:Eva_rf}
\subsubsection{Study Case}
To demonstrate the usability and efficacy of \rf as well as provide some insights into the model, we opt for the study case Aurora~\cite{Aurora} utilized in related works whiRL~\cite{whiRL,whiRL_2.0} and UINT~\cite{uint} as the experimental environment. 

Aurora is a DRL system for congestion control of Transmission Control Protocol (TCP). It takes the latest 10-step history of these features as input: latency inflation $i$, latency ratio $l$, send ratio $r$; and it outputs the changed ratio of sending rate over the sending rate of the last action. 



\subsubsection{Verification}
\begin{table*}[htb]
    \centering
    
    \begin{tabularx}{\textwidth}{p{2.5cm}XXp{3.8cm}}
    \hline
                     & \textbf{whiRL}                                                 & \textbf{Reinfier}                                           & Benefit                                               \\ \hline
    Input Format     & Hardcoded DNN verifier APIs   \& without APIs for DRL  & Separate DRLP text into user-friendly APIs           & Enhances reusability                                  \\ \hline
    Property Syntax  & Syntax of the DNN verifier                                     & Intuitive and concise DRLP  syntax            & Make it easy for users to read and write              \\ \hline
    DNN Structure    & A small amount of structures                                   & Almost all structures                                       & Support more DRL systems  and adapt to the rapid development of DNN                 \\ \hline
    Result Guarantee  & Results from a single DNN  verifier                    & Comprehensive results from   multiple DNN verifiers & Reduce the risk of verification error                 \\ \hline
    Type of Property & Safety, Liveness                                               & Safety, Liveness, Robustness                                & Facilitate thorough property  verification    \\ \hline
    \end{tabularx}
    \caption{Advantages of Reinfier compared to whiRL.}
    
    \label{tab:Advantage whiRL}
\end{table*}

\begin{table*}[htb]
    \centering
    
    \begin{tabularx}{\textwidth}    {p{2.8cm}Xp{2.8cm}XX}
    \hline
     & \textbf{$\phi_{9.d}$} & \textbf{$\phi_{7.a}$} & \textbf{$\phi_{11.a}$} & \textbf{$\phi_{12.a}$} 
    \\ \hline
     Type  & Safety & Liveness & Robustness & Robustness
     \\ \hline
     Variables 
     & 
           {
          \setlength{\abovedisplayskip}{-8pt}
          \setlength{\belowdisplayskip}{-8pt}
          \begin{alignat*}{8}
          &a &&\in [&1&.0&&,&10&.0&&] \\
          &b &&\in [& 1&&&,& 100&&&] 
          \end{alignat*}}
     & 
        {
          \setlength{\abovedisplayskip}{-8pt}
          \setlength{\belowdisplayskip}{-8pt}
          \begin{alignat*}{8}
          &a &&\in [&-0&.70&&,&-0&.50&&] \\
          &b &&\in [& 1&.0&&,& 20&.0&&] 
          \end{alignat*}}
     & 
         {
          \setlength{\abovedisplayskip}{-8pt}
          \setlength{\belowdisplayskip}{-8pt}
          \begin{alignat*}{8}
          &a &&\in [& 1&.5&&,&2&.5&&] \\
          &b &&\in [& 6&.0&&,& 12&.0&&] 
          \end{alignat*}}

     & 
         {
          \setlength{\abovedisplayskip}{-8pt}
          \setlength{\belowdisplayskip}{-8pt}
          \begin{alignat*}{8}
          &a &&\in [&-0&.70&&,&-0&.50&&] \\
          &b &&\in [& 0&.0&&,& 5&.0&&] 
          \end{alignat*}}
     \\ \hline
    State Boundary $S$ \newline or\newline Initial State $I$ 
      & 
        {
          \setlength{\abovedisplayskip}{-8pt}
          \setlength{\belowdisplayskip}{-8pt}
          \begin{alignat*}{8}
          &i &&\in [& a&&&,&100&&&] \\
          &l &&\in [& b&&&,&100&&&] \\
          &r &&\in [& 2&&&,&100&&&] 
          \end{alignat*}}
      & 
          {
          \setlength{\abovedisplayskip}{-8pt}
          \setlength{\belowdisplayskip}{-8pt}
          \begin{alignat*}{8}
          &i &&\in [& a&&&,&a&&&] \\
          &l &&\in [& b&&&,&b&&&] \\
          &r &&\in [& 1&&&,&1&&&] 
          \end{alignat*}}
      & 
          {
          \setlength{\abovedisplayskip}{-8pt}
          \setlength{\belowdisplayskip}{-8pt}
          \begin{alignat*}{8}
          &i &&\in [&&2.0&&,&&2.0&&] \\
          &l &&\in [&&a-\varepsilon&&,&&a+\varepsilon&&] \\
          &r &&\in         [&&b-\varepsilon&&,&&b+\varepsilon&&] \\
          &\varepsilon &&= 0.005
          \end{alignat*}
          }
      &  
        {
          \setlength{\abovedisplayskip}{-8pt}
          \setlength{\belowdisplayskip}{-8pt}
          \begin{alignat*}{8}
          &i &&\in [&&a-\varepsilon&&,&&a+\varepsilon&&] \\
          &l &&\in         [&&b-\varepsilon&&,&&b+\varepsilon&&] \\
          &r &&\in [&&1.00&&,&&2.00&&] \\
          &\varepsilon &&= 0.005
          \end{alignat*}
          }
    \\ \hline
    $\langle i,l,r\rangle$\newline
    \normalsize{Original Input $\hat{x}_0$} 
    & \multicolumn{2}{c}{None} 
    & {$\langle2.0,2.0,10.0\rangle$}
    & {$\langle-0.70,0.50,1.00\rangle$}
    \\ \hline
    Other Condition $C$ & None & No state cycle & None & None \\
    
    \hline
    State Transition $T$ & \multicolumn{2}{c}{{History features shift by one value}}  &  \multicolumn{2}{c}{None}
    
    \\ \hline
    Post-condition $Q$ 
    & Forall \newline $N(x_i) < 0$ 
    & Exist   \newline $N(x_i) > 0$
    & {$N(x_0)\approx N(\hat{x}_0)$} \newline{$\epsilon=0.5$} 
    & {$N(x_0)\approx N(\hat{x}_0)$} \newline{$\epsilon=0.5$} 
    \\ \hline
    
    \end{tabularx}
    \caption{Aurora decision boundary questions.}
    
    \label{tab:aubd}
\end{table*}

\begin{figure*}[htb]
\centering
\begin{subfigure}[t]{0.24\textwidth}
    \includegraphics[width=\textwidth]{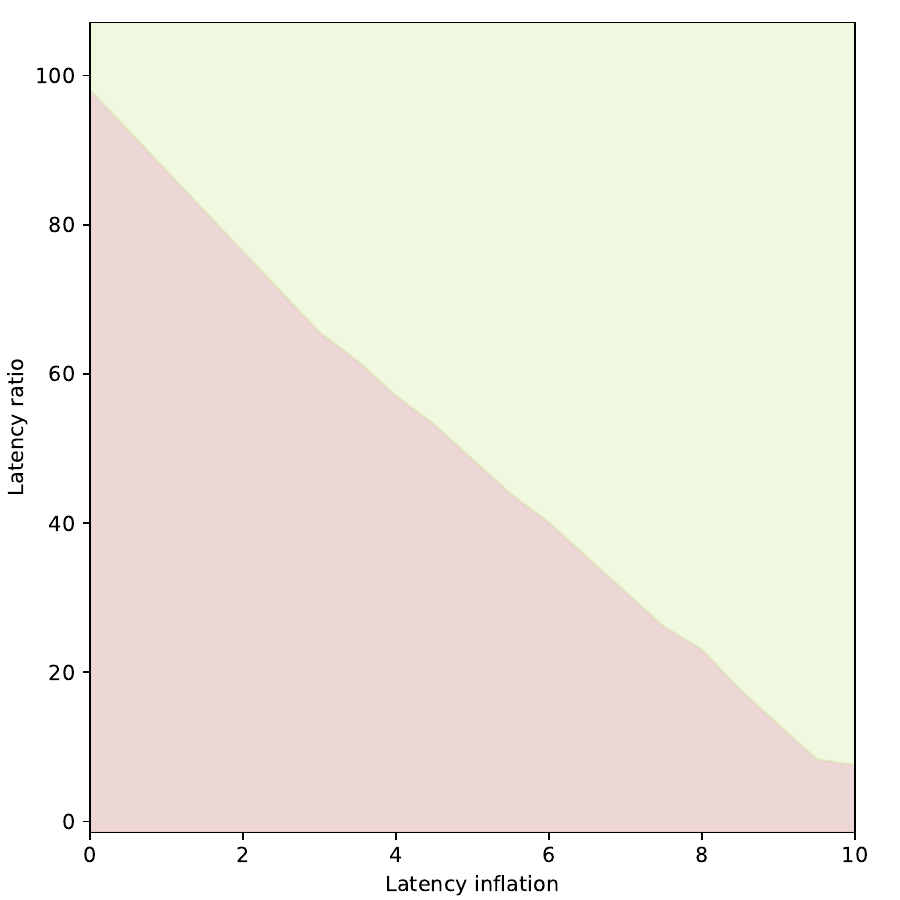}
    \caption{$\phi_{9.d}$}
\end{subfigure}
\begin{subfigure}[t]{0.24\textwidth}
    \includegraphics[width=\textwidth]
    {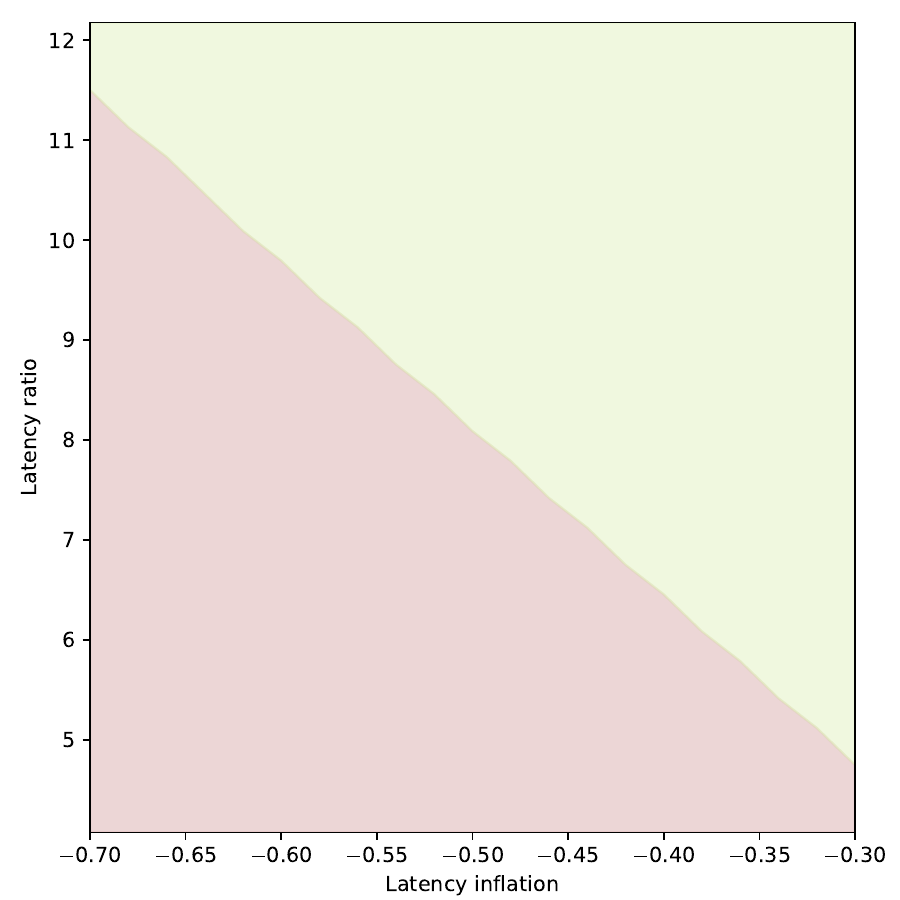}
    \caption{$\phi_{7.a}$}
\end{subfigure}
\begin{subfigure}[t]{0.24\textwidth}
    \includegraphics[width=\textwidth]
    {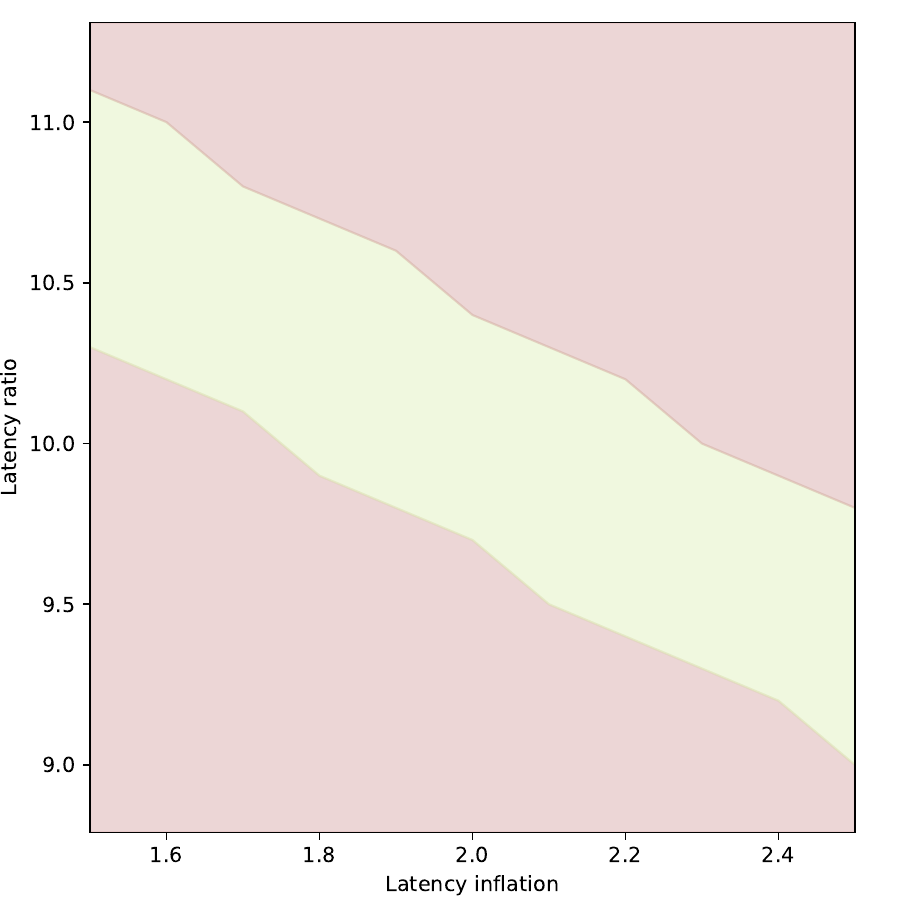}
    \caption{$\phi_{11.a}$}
\end{subfigure}
\begin{subfigure}[t]{0.24\textwidth}
    \includegraphics[width=\textwidth]  
    {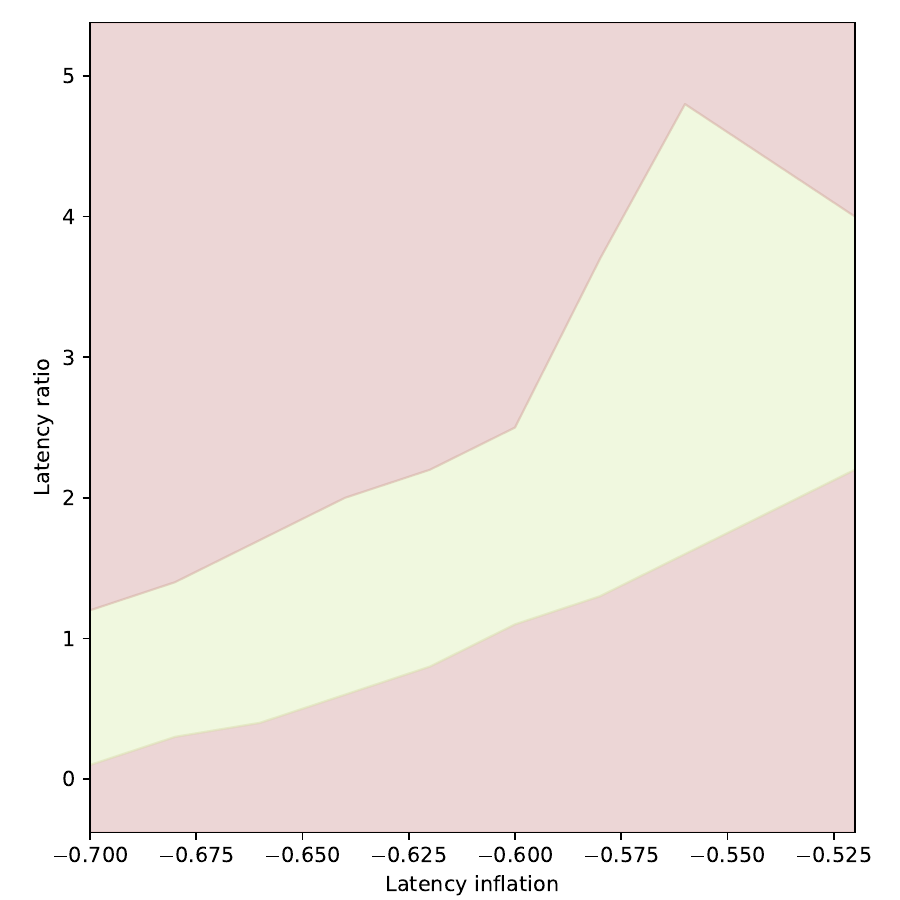}
    \caption{$\phi_{12.a}$}
\end{subfigure}

\caption{Aurora decision boundary. The red area represents the condition where properties do not hold, while the green area represents the condition where properties hold.}
\label{fg:audb}
\end{figure*}

\begin{table*}[htb]
    \centering
    
    \begin{tabularx}{\textwidth}{p{2.7cm}XXXX}
    \hline
     & \textbf{$\phi_{10.a}$} & \textbf{$\phi_{9.a}$} & \textbf{$\phi_{9.b}$} & \textbf{$\phi_{9.c}$} 
     \\ \hline
     Type &  Liveness & Safety & Safety & Safety
     \\ \hline
     Variable & $var\in[1,100]$ & $var\in[1,100]$ & $var\in[1,100]$ & $var\in[1,100]$
     \\ \hline
     State Boundary $S$ 
      & 
          {\setlength{\abovedisplayskip}{-8pt}
          \setlength{\belowdisplayskip}{-8pt}
          \begin{alignat*}{8}
          &i &&\in [&-0&.01&&,&+0&.01&&] \\
          &l &&\in [& 1&.00&&,& 1&.01&&] \\
          &y &&\in [& &var&&,& 100&.00&&] 
          \end{alignat*}}
      &
         {\setlength{\abovedisplayskip}{-8pt}
          \setlength{\belowdisplayskip}{-8pt}
          \begin{alignat*}{8}
          &i &&\in [&-0&.01&&,&+0&.01&&] \\
          &l &&\in [& 1&.00&&,& 1&.01&&] \\
          &y &&\in [& &var&&,& 100&.00&&] 
          \end{alignat*}}
      &
          {\setlength{\abovedisplayskip}{-8pt}
          \setlength{\belowdisplayskip}{-8pt}
          \begin{alignat*}{8}
          &i &&\in [&-0&.01&&,&+0&.01&&] \\
          &l &&\in [& 1&.00&&,& 1&.01&&] \\
          &y &&\in [& 2&.00&&,& 100&.00&&] 
          \end{alignat*}}
      &  
          {\setlength{\abovedisplayskip}{-8pt}
          \setlength{\belowdisplayskip}{-8pt}
          \begin{alignat*}{8}
          &i &&\in [&-0&.01&&,&+0&.01&&] \\
          &l &&\in [&  &var&&,& 1&.01&&] \\
          &y &&\in [& 2&.00&&,& 100&.00&&] 
          \end{alignat*}}
      \\
    \hline
    Other Condition $C$ & {No state cycle} &None & None & None \\
    \hline
    Initial State $I$ & \multicolumn{4}{c}{None} \\
    \hline
    State Transition $T$ & \multicolumn{4}{c}{History features shift by one value}  
    \\ \hline
    
    Post-condition $Q$ &  Exists \newline $N(x_i) < 0$ 
    & Forall \newline $N(x_i) < 0$
    & Forall \newline $N(x_i) < var$ &
    Forall \newline $N(x_i) < 0$
    \\    \hline
    Breakpoint(s)    & $var=50.11$ & $var=50.11$   & $var=86$      & $var=98.07$  \\
    Reinfier Result      & \texttt{Proven} & \texttt{Proven}  & \texttt{Proven}     & \texttt{Proven}    
    \\ \hline
    UINT Result & {Not applicable} & {Not applicable} & {Not applicable} & {Not applicable}  
    \\ \hline
    
    \end{tabularx}
    \caption{Aurora intuitiveness examination questions and comparison of their verification results by Reinfier and UINT.}
    
    \label{tab:auint}
\end{table*}

Four properties are proposed in whiRL~\cite{whiRL,whiRL_2.0} as follows. 


\textit{$\phi_{7}$:} When the history of local observations reflects excellent network conditions (close-to-minimum latency, no packet loss), the DNN should not get stuck at its current rate.

\textit{$\phi_{8}$:} When the history of local observations reflects excellent network conditions (close-to-minimum latency, no packet loss), the DNN should not constantly maintain or decrease its sending rate.

\textit{$\phi_{9}$:} When the congestion controller is sending on a link with a shallow buffer (and so experienced latency is always close to the minimum latency) and experiences high packet loss, it should decrease its sending rate.

\textit{$\phi_{10}$:} When the congestion controller is sending on a link with a shallow buffer (and so experienced latency is always close to the minimum latency) and consistently experiences high packet loss, it should not indefinitely maintain or increase its sending rate.

We verify these properties by \rf~but get a different result shown in Table~\ref{tab:ausl}. It has been confirmed by the authors of whiRL~\cite{whiRL} that it is a mistake in their papers after our feedback.

Here, we propose four robustness properties for evaluation. These properties are subsequently verified by \rf, and the outcomes are presented in Table~\ref{tab:aurub}. 
$\phi_{11}$ and $\phi_{12}$ are local robustness. $\phi_{13}$ describes a scenario where the input values of a single feature, i.e., send ratio, are extremely large; while $\phi_{14}$ describes a scenario where the input values of a single feature, i.e., send ratio, are abnormal behavior. This abnormal behavior is unrealistic in real-world contexts because the send ratio cannot be less than 1, which implies receiving more packages than sent ones.





\paragraph{Comparison.}

When compared to the verification tool whiRL, our \rf~possesses  several advantages shown in Table~\ref{tab:Advantage whiRL}. As demonstrated by the results for Aurora in Table~\ref{tab:ausl}, the error in whiRL can be attributed to the incorrect encoding of the property. This emphasizes the importance of using concise and intuitive syntax for encoding properties, which can greatly enhance the widespread and accurate application of DRL verification.

\subsubsection{Interpretation}
Additionally, we apply \rf~for the interpretation of Aurora in the context of the five interpretability questions outlined in the main text.

\paragraph{Decision boundary.}

\begin{table*}[htb]
    \centering
    
    \begin{tabularx}{\textwidth}{p{2.8cm}XXXX}
    \hline
     & \textbf{$\phi_{9.d}$} & \textbf{$\phi_{7.a}$} & \textbf{$\phi_{11.b}$} & \textbf{$\phi_{12.b}$} 
     \\ \hline
     Type  & Safety & Liveness & Robustness & Robustness
     \\ \hline
    State Boundary $S$ \newline or\newline Initial State $I$ 
      & 
        {
          \setlength{\abovedisplayskip}{-8pt}
          \setlength{\belowdisplayskip}{0pt}
          \begin{alignat*}{8}
          &i &&\in [&&0.00+a&&,&&100&&] \\
          &l &&\in [&&1.00+b&&,&&100&&] \\
          &r &&\in [&&2.00+c&&,&&100&&] 
          \end{alignat*}
          }
      &
        {
          \setlength{\abovedisplayskip}{-8pt}
          \setlength{\belowdisplayskip}{0pt}
          \begin{alignat*}{8}
          &i &&\in [&-0&.01&&-a&&,\\ & & & &+0&.01&&+a&&] \\
          &l &&\in [& 1&.00&&-b&&,\\ & & & &1&.01&&+b&&] \\
          &r &&\in [& 1&.0&&-c&&,\\ & & & &1&.0&&+c&&] 
          \end{alignat*}
          }
      & 
          {
          \setlength{\abovedisplayskip}{-8pt}
          \setlength{\belowdisplayskip}{0pt}
          \begin{alignat*}{8}
          &i &&\in [& 0&.0&&-a&&, \\
          & & & &0&.0&&+a&&] \\
          &l &&\in [& 1&.0&&-b&&, \\
          & & & &1&.0&&+b&&] \\
          &r &&\in [& 1&.0&&-c&&, \\
          & & & &1&.0&&+c&&] 
          \end{alignat*}
          }
      &   
        {
          \setlength{\abovedisplayskip}{-8pt}
          \setlength{\belowdisplayskip}{0pt}
          \begin{alignat*}{8}
          &i &&\in [&-0&.7&&-a&&,\\
          & & & & -0&.7&&+a&&] \\
          &l &&\in [& 0&.5&&-b&&,\\
          & & & &  0&.5&&+b&&] \\
          &r &&\in [& 1&.0&&-c&&,\\
          & & & &  1&.0&&+c&&] 
          \end{alignat*}
          }
    \\ \hline
    Original Input $\hat{x}_0$ 
    & \multicolumn{2}{c}{None} 
    & {$\langle2.00,2.00,10.00\rangle$}
    & {$\langle-0.70,0.50,1.00\rangle$}
    \\ \hline
    $\langle i,l,r\rangle$\newline
    \normalsize{Original Input $\hat{x}_0$} 
    & \multicolumn{2}{c}{None} 
    & {-1.119}
    & {7.275}
    \\ \hline
    Other Condition $C$ & None &{No state cycle} & None & None \\
    
    \hline
    State Transition $T$ & \multicolumn{2}{c}{{History features shift by one value}}  &  \multicolumn{2}{c}{None}
    
    \\ \hline
    Post-condition $Q$ 
    & Forall \newline $N(x_i) < 0$ 
    & Exist \newline $N(x_i) \approx 0$
    & Exist \newline $N(x_0) > 0$
    & Exist \newline $N(x_0) < 0$ 
    \\ \hline
    $\langle a,b,c\rangle$  \newline
    Distance(Euclidean) \newline Reinfier Result    
    & {$\langle 0.60,0.40,0.87\rangle$ \newline 1.130} 
    & {$\langle 0.00,0.10,0.17\rangle$ \newline 0.199}
    & {$\langle 2.20,1.20,0.16\rangle$ \newline 2.511}
    & {$\langle 0.40,0.90,0.92\rangle$ \newline 1.348}
    \\ \hline
     $\langle a,b,c\rangle$ \newline UINT Result   & \footnotesize{Not applicable} & \footnotesize{Not applicable}   & {$\langle 2.20,1.20,0.16\rangle$ }
    & {$\langle 0.40,0.90,0.9\rangle$}
    \\ \hline
    
    \end{tabularx}
    \caption{Aurora counterfactual explanation questions and comparison of their results by Reinfier and UINT.}
    
    \label{tab:auce}
\end{table*}

Based on the previous properties of Aurora, we opt for one safety, one liveness, and two robustness properties to analyze their decision boundaries, detailed in Table~\ref{tab:aubd}. In the case of safety and liveness, the decision boundary delineates the condition where the agent takes safe or lively actions. As long as the environmental state falls within the specified space, the agent's actions align with the desired properties, averting unforeseen \texttt{bad} actions. 

The results shown in Fig. \ref{fg:audb} reveal that, within the designated range, the decision boundaries for safety and liveness properties are \textit{linear}, effectively partitioning the input space into two regions: one where the property holds and another where not. Conversely, the decision boundary for the robustness property is \textit{coarse}, segmenting the input space into three distinct zones: a central area where the property holds, flanked by two regions where not.

\paragraph{Intuitiveness examination.}

\begin{table*}[htb]
    \centering
    
    \begin{tabularx}{\textwidth}{p{4cm}XXXX}
    \hline
     & \textbf{Input 1} & \textbf{Input 2} & \textbf{Input 3} & \textbf{Input 4} 
     \\ \hline
     $\langle i,l,r\rangle$
    \newline 
    \normalsize Original Input $\hat{x}_0$ 
      & {$\langle 0.00,1.00,1.00\rangle$} 
      & {$\langle 0.00,1.00,2.00\rangle$} 
      & {$\langle 0.00,2.00,1.00\rangle$}
      & {$\langle 0.00,2.00,2.00\rangle$}   \\
    \hline
     $N(x_0) \not \approx N(\hat{x_0})$ Level 1 & $\epsilon=1\times10^{-0}$ & $\epsilon=1\times10^{-0}$   & $\epsilon=1\times10^{-0}$      & $\epsilon=1\times10^{-0}$  \\
     $N(x_0) \not \approx N(\hat{x_0})$ Level 2 & $\epsilon=1\times10^{-1}$ & $\epsilon=1\times10^{-1}$   & $\epsilon=1\times10^{-1}$      & $\epsilon=1\times10^{-1}$  \\
     $N(x_0) \not \approx N(\hat{x_0})$ Level 3 & $\epsilon=1\times10^{-2}$ & $\epsilon=1\times10^{-2}$   & $\epsilon=1\times10^{-2}$      & $\epsilon=1\times10^{-2}$  \\
     $N(x_0) \not \approx N(\hat{x_0})$ Level 4 & $\epsilon=1\times10^{-3}$ & $\epsilon=1\times10^{-3}$   & $\epsilon=1\times10^{-3}$      & $\epsilon=1\times10^{-3}$  \\
    \hline
    \end{tabularx}
    \caption{Aurora importance analysis questions.}
    
    \label{tab:imau}
\end{table*}

\begin{table*}[htb]
    \centering
    
    \begin{tabularx}{\textwidth}{p{4cm}XXXX}
    \hline
     & \textbf{Input 1} & \textbf{Input 2} & \textbf{Input 3} & \textbf{Input 4} 
     \\ \hline
    $\langle i,l,r\rangle$
    \newline 
    \normalsize Original Input $\hat{x}_0$ 
      & \scriptsize{$\langle 0.00,1.00,1.00\rangle$} 
      & \scriptsize{$\langle 0.00,1.00,2.00\rangle$} 
      & \scriptsize{$\langle 0.00,2.00,1.00\rangle$}
      & \scriptsize{$\langle 0.00,2.00,2.00\rangle$}   \\
    \hline
     Perturbation Level 1 & $\varepsilon=1\times10^{-0}$ & $\varepsilon=1\times10^{-0}$   & $\varepsilon=1\times10^{-0}$      & $\varepsilon=1\times10^{-0}$  \\
     Perturbation Level 2 & $\varepsilon=1\times10^{-1}$ & $\varepsilon=1\times10^{-1}$   & $\varepsilon=1\times10^{-1}$      & $\varepsilon=1\times10^{-1}$  \\
     Perturbation Level 3 & $\varepsilon=1\times10^{-2}$ & $\varepsilon=1\times10^{-2}$   & $\varepsilon=1\times10^{-2}$      & $\varepsilon=1\times10^{-2}$  \\
     Perturbation Level 4 & $\varepsilon=1\times10^{-3}$ & $\varepsilon=1\times10^{-3}$   & $\varepsilon=1\times10^{-3}$      & $\varepsilon=1\times10^{-3}$  \\
    \hline
    \end{tabularx}
    \caption{Aurora sensitivity analysis questions.}
    
    \label{tab:seau}
\end{table*}
\begin{table*}[htb]
    \centering
    
    \begin{tabularx}{\textwidth}{{p{4cm}XXXX}}
    \hline
                     & \textbf{UINT}                                                  & \textbf{Reinfier}                                           & Benefit                                                                                        \\ \hline
    Property Syntax  & SMT formula                                                    & Intuitive and concise DRLP syntax                  & Make it easy for users to read and write                                                       \\ \hline
    DNN Structure    & Linear layer and \newline convolutional layer                  & Almost all structures                                       & Support more DRL systems and adapt to the rapid development of DNN                    \\ \hline
    Result Guarantee  & Results from an SMT solver                                     & Comprehensive results from  multiple DNN verifiers & Reduce the risk of wrong  answer                                                       \\ \hline
    Type of Property & Robustness                                                     & Safety, Liveness, Robustness                                & Facilitate comprehensive interpretation                                               \\ \hline
    Solution        & Completely different solutions for different problems & Unified based on \textit{breakpoints}                       & Accelerate total solution speed and facilitate problem extension                      \\ \hline
    \end{tabularx}
    \caption{Advantages of Reinfier compared to UINT.}
    \label{tab:Advantage UINT}
    
\end{table*}

Based on the previous properties of Aurora, we evaluate \rf~to answer several intuitiveness examination questions, as shown in Table~\ref{tab:auint}. 

\textit{$\phi_{10.a}$}: This property is derived from $\phi_{10}$. \textbf{Qualitative property:} A higher send ratio, which implies more severe packet loss, is associated with a greater likelihood of the sending rate eventually decreasing. The result shows that this property only has one \text{breakpoint} for $var\in[1,100]$ following linear stepping. Within this range, the properties transition from not holding to holding. Consequently, the model passes the intuitiveness examination on this qualitative property.

\textit{$\phi_{9.a}$}: This property is derived from $\phi_{9}$. \textbf{Qualitative property:} A higher send ratio, which implies more severe packet loss, is associated with a greater likelihood of the sending rate immediately decreasing. The result shows that this property only has one \text{breakpoint} for $var\in[1,100]$ following linear stepping. Consequently, the model also passes the intuitiveness examination on this qualitative property. Interestingly, in comparison to $\phi_{4.a}$, this result reveals that safety properties are more tightened than liveness ones for the same property structure.

\textit{$\phi_{9.b}$}: This property is derived from $\phi_{9}$. \textbf{Qualitative property:} The border the constrained range of output is, which implies the more relaxed post-condition, the higher the likelihood of the property being satisfied. The result shows that this property indeed only has one \text{breakpoint} for $var\in[1,100]$ after linear search. Consequently, the model also passes the intuitiveness examination on this qualitative property. Notably, this result indicates the upper bound of the output constrained by the given input.

\textit{$\phi_{9.c}$}: This property is derived from $\phi_{9}$. \textbf{Qualitative property:} A higher latency ratio, which implies higher latency, is associated with a greater likelihood of the sending rate immediately decreasing. The result shows that this property only has one \text{breakpoint} for $var\in[1,100]$ following linear stepping. Consequently, the model also passes the intuitiveness examination on this qualitative property. Besides, the result indicates that the send rate decreases only when the latency ratio is significantly high.

\paragraph{Counterfactual explanation.}
Based on the previous properties of Aurora, we conduct the searches for the closest counterexamples that lead to altered verification results. The questions posed and their results are illustrated in Table~\ref{tab:auce}. Taking $\phi_{5.b}$ derived from $\phi_{5}$ as an example, the original output is $-1.119$; thus, it proves valuable to seek a counterexample input resulting in an output greater than 0. Remarkably, the result demonstrates that the Euclidean distance to the closest counterexample amounts to $2.511$.

Indeed, as outlined in the counterfactual explanation section of the main text, this approach can also be applied to safety and liveness properties. For instance, consider $\phi_{7.a}$ derived from $\phi_{7}$, and since $\phi_{7}$ is verified to be satisfied, we can proceed to search for its counterexamples. In this case, we are interested in finding inputs where the output is approximately 0, and we can gain insights into the conditions under which the sending rate might remain unchanged.

\paragraph{Importance analysis.}

Indeed, the assessment of feature importance can be influenced by how \texttt{output changes} are defined. In cases where the output is a floating-point number, like in Aurora, defining the extent of change in $N(x_0)$ compared to $N(\hat{x}_0)$ becomes crucial. To address this, four different levels of defining \texttt{change}, i.e., $N (x_0) \not \approx N(\hat{x} _0) \equiv (N(x_0)\ge N(\hat{x} _0)) \vee (N(x_0)\le N(\hat{x} _0)) $, have been adopted. This approach provides a range of criteria for determining change and has been applied to four original inputs $\hat{x}_0$, as detailed in Table~\ref{tab:imau}.


The evaluation results are shown in Fig.~\ref{fg:auim}. They illustrate that under the four different original inputs discussed, small perturbations of latency inflation lead to output change, and the required perturbation values are much less than the other two features.
The relationship between the importance of the latency ratio and the send ratio varies depending on the original input context. However, in the chosen experimental settings, the diverse definitions of $N(x_0) \not \approx N(\hat{x}_0)$ do not substantially affect the relative importance relationship between the required perturbation values for these three features.

To quantify feature importance, the reciprocal of the perturbation value is employed as the measure of importance. The results illustrate that latency inflation stands out as the most crucial feature in decision-making under these circumstances. This is likely attributed to the fact that the selected original inputs have latency inflation values close to 0, indicating network latency stability. As a result, even a slight perturbation to latency inflation might signify a shift to an unstable network latency condition, prompting the agent to significantly adjust its sending rate.

\begin{figure*}
\centering
\begin{subfigure}[t]{0.16\textwidth}
    \makebox[0pt][r]{\makebox[30pt]{\raisebox{40pt}{\rotatebox[origin=c]{90}{Original Input 1}}}}%
    \includegraphics[width=\textwidth]
    {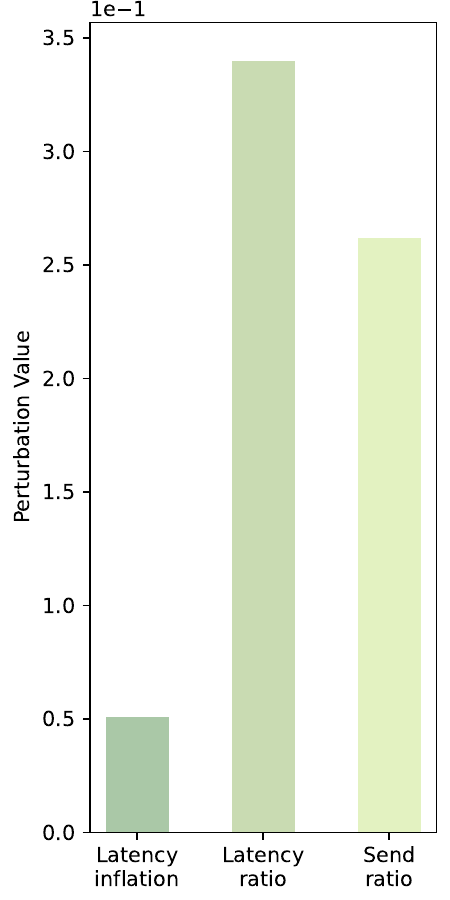}
    \makebox[0pt][r]{\makebox[30pt]{\raisebox{40pt}{\rotatebox[origin=c]{90}{Original Input 2}}}}%
    \includegraphics[width=\textwidth]
    {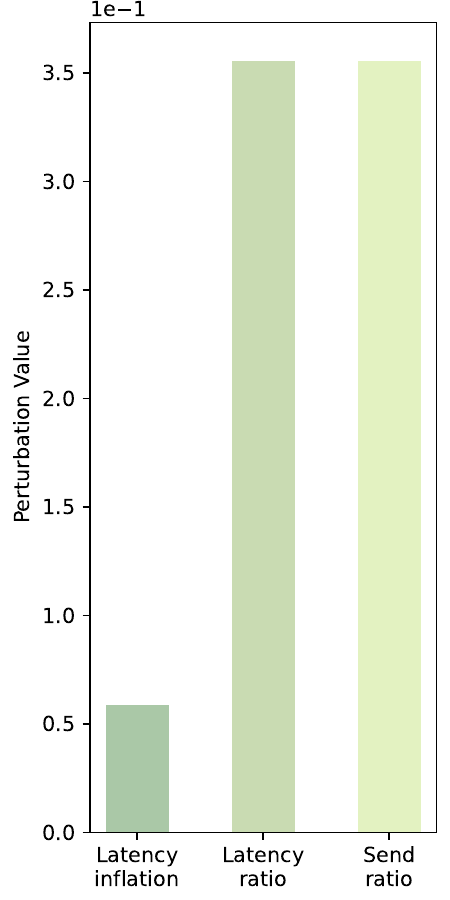}
    \makebox[0pt][r]{\makebox[30pt]{\raisebox{40pt}{\rotatebox[origin=c]{90}{Original Input 3}}}}%
    \includegraphics[width=\textwidth]
    {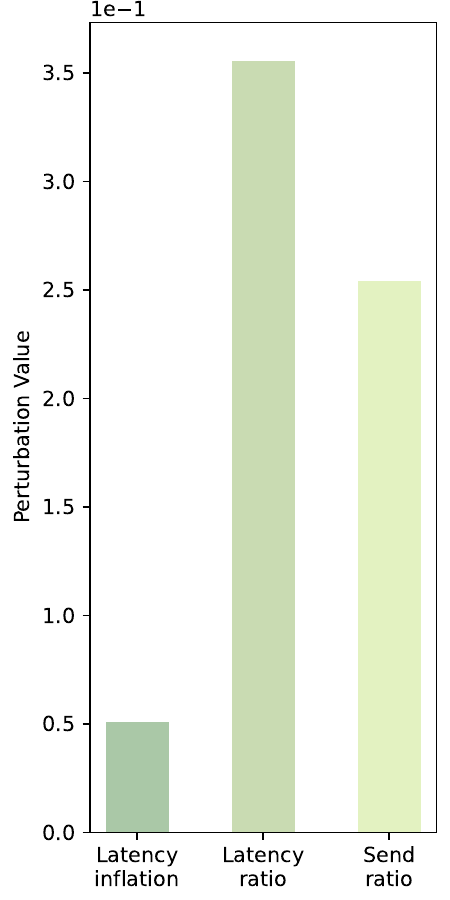}
    \makebox[0pt][r]{\makebox[30pt]{\raisebox{40pt}{\rotatebox[origin=c]{90}{Original Input 4}}}}%
    \includegraphics[width=\textwidth]
    {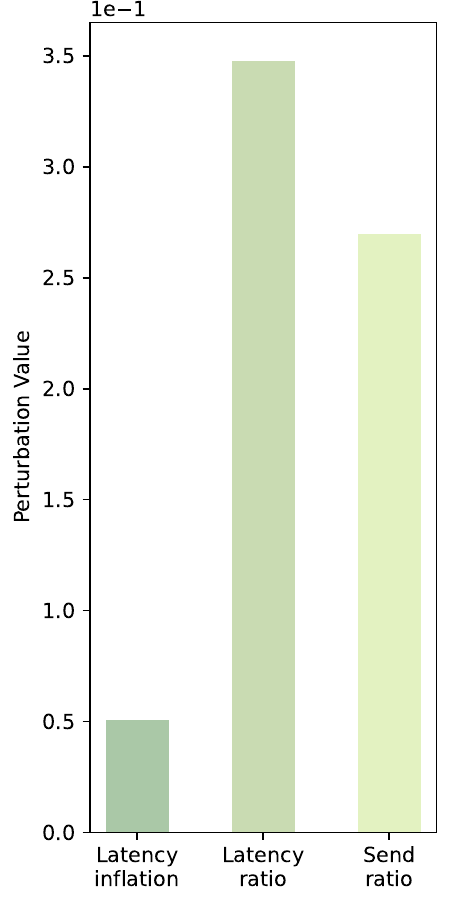}
    \caption{Level 1}
\end{subfigure}
\begin{subfigure}[t]{0.16\textwidth}
    \includegraphics[width=\textwidth]  
    {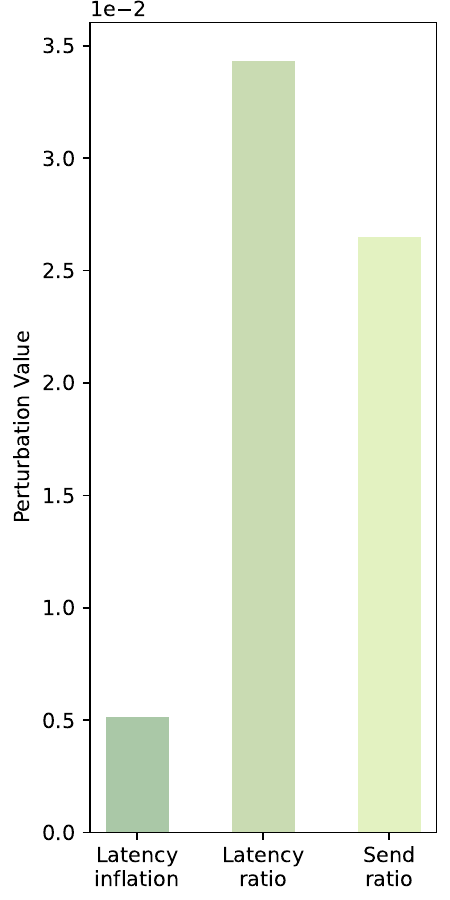}
    \includegraphics[width=\textwidth]
    {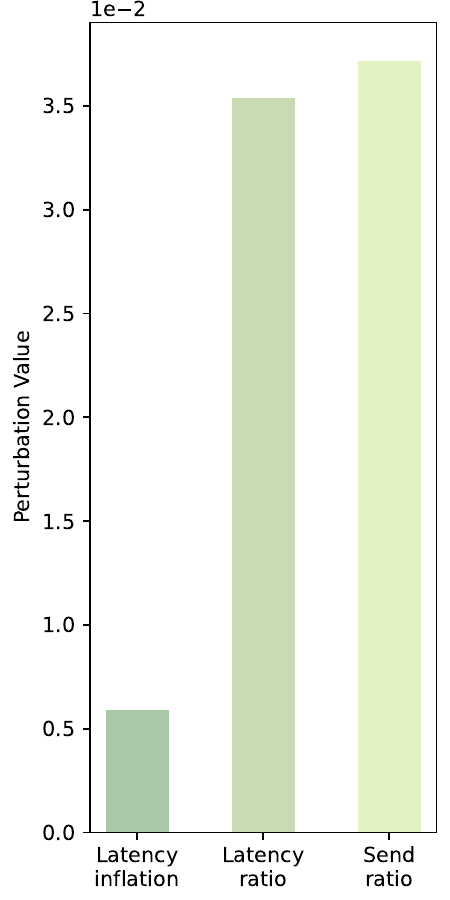}
    \includegraphics[width=\textwidth]
    {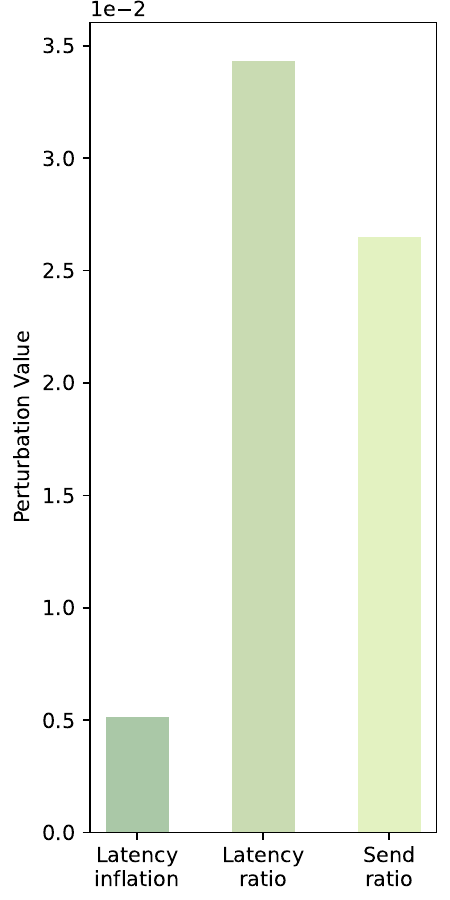}
    \includegraphics[width=\textwidth]
    {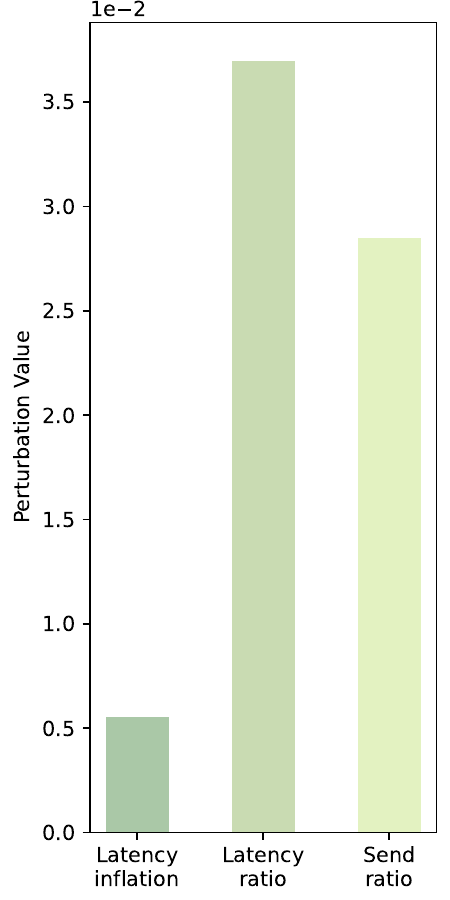}
    \caption{Level 2}
\end{subfigure}
\begin{subfigure}[t]{0.16\textwidth}
    \includegraphics[width=\textwidth]  
    {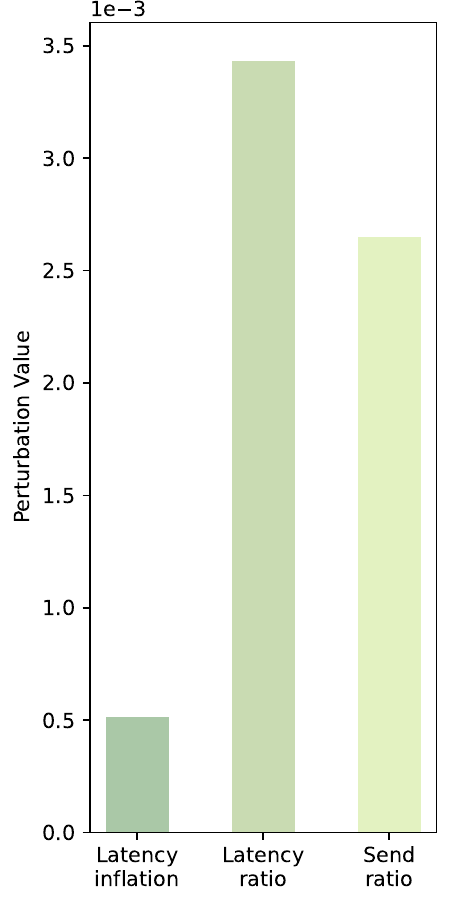}
    \includegraphics[width=\textwidth]
    {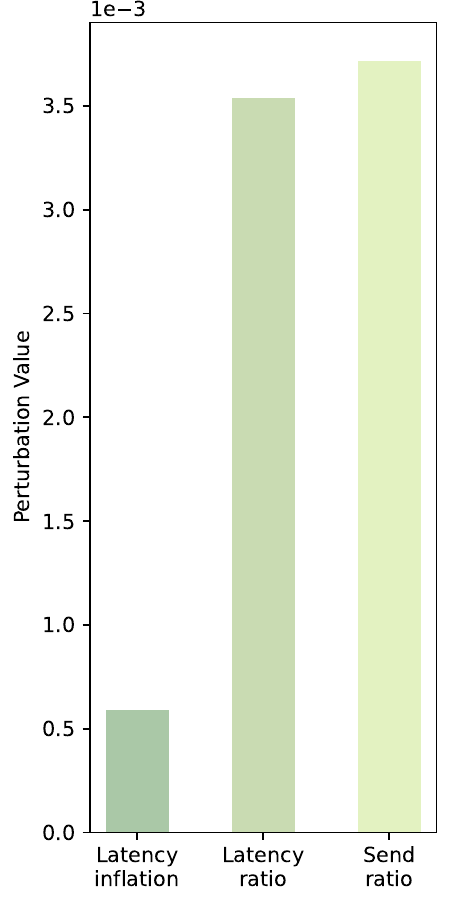}
    \includegraphics[width=\textwidth]
    {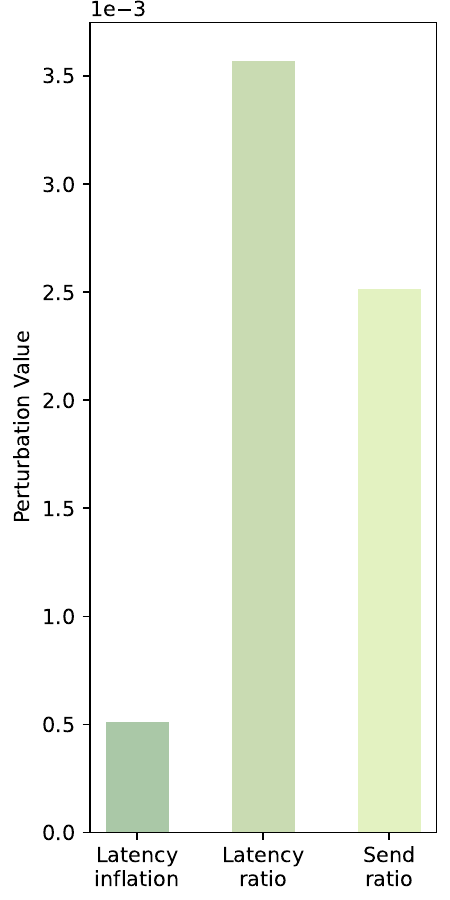}
    \includegraphics[width=\textwidth]
    {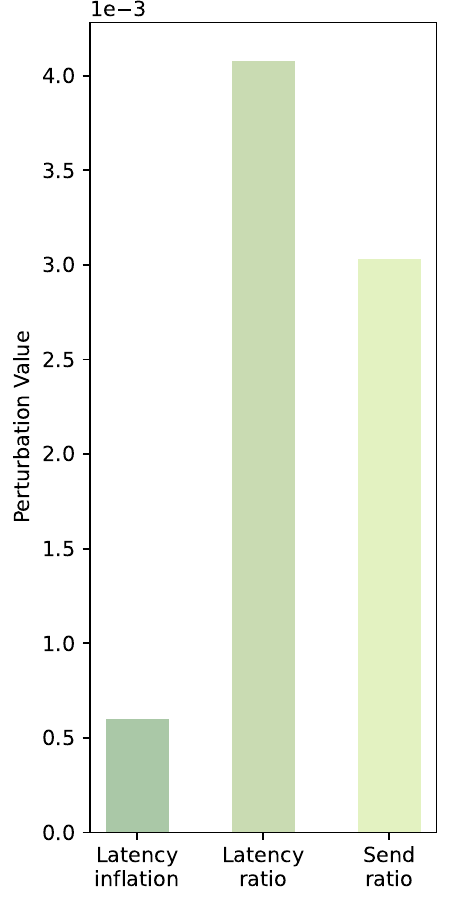}
    \caption{Level 3}
\end{subfigure}
\begin{subfigure}[t]{0.16\textwidth}
    \includegraphics[width=\textwidth]  
    {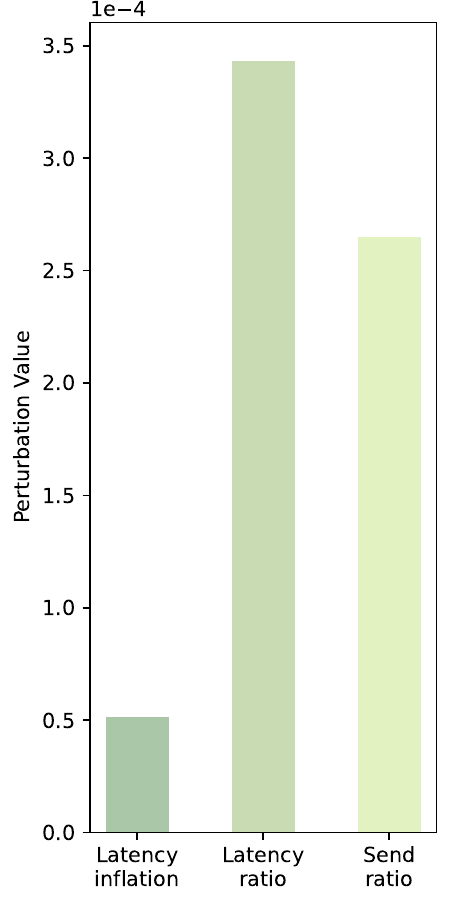}
    \includegraphics[width=\textwidth]
    {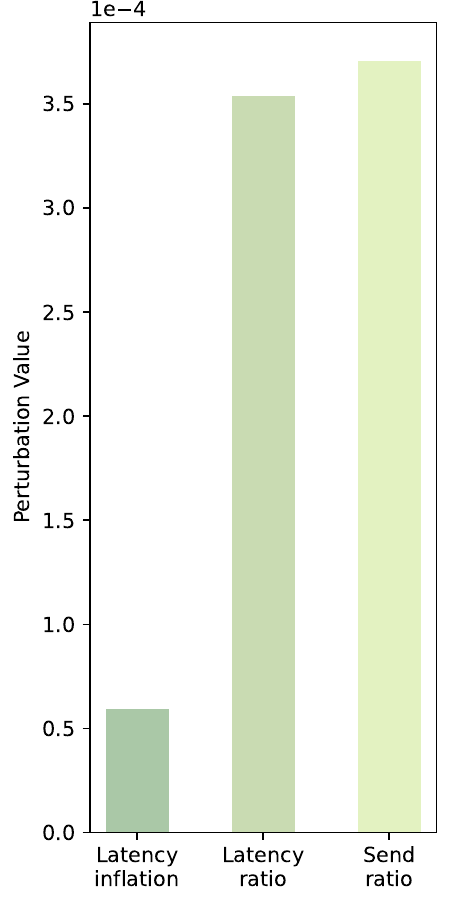}
    \includegraphics[width=\textwidth]
    {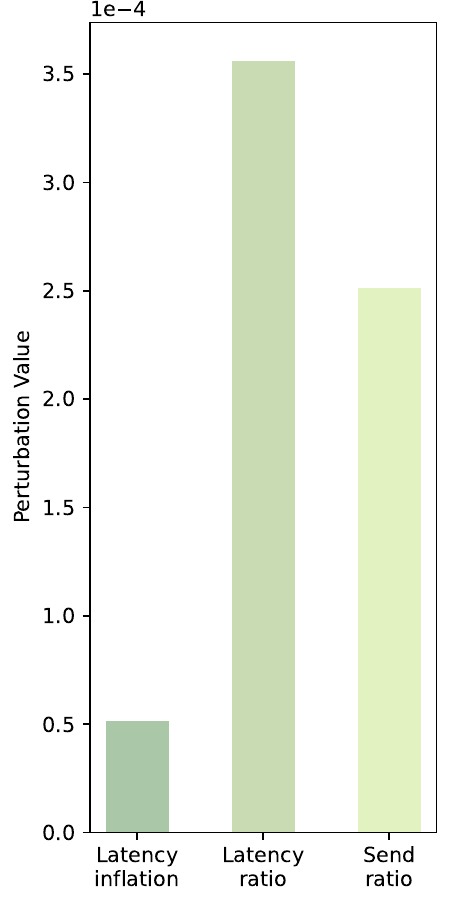}
    \includegraphics[width=\textwidth]
    {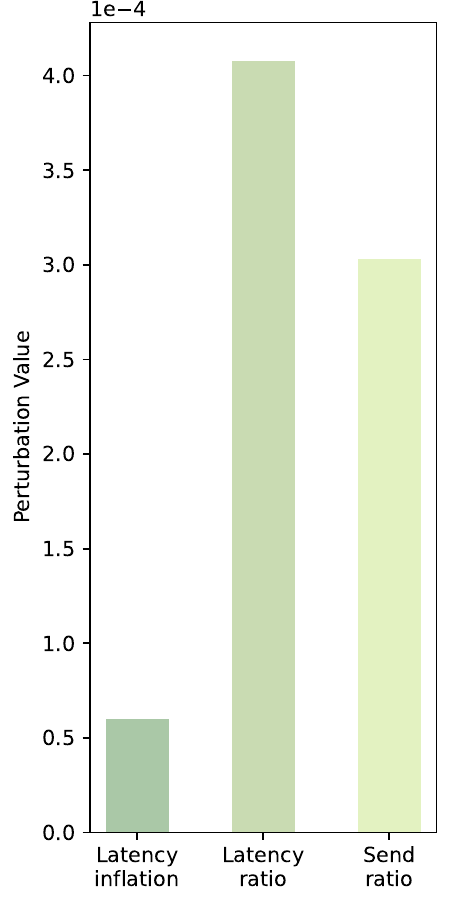}
    \caption{Level 4}
\end{subfigure}
\begin{subfigure}[t]{0.32\textwidth}
    \includegraphics[width=\textwidth]  
    {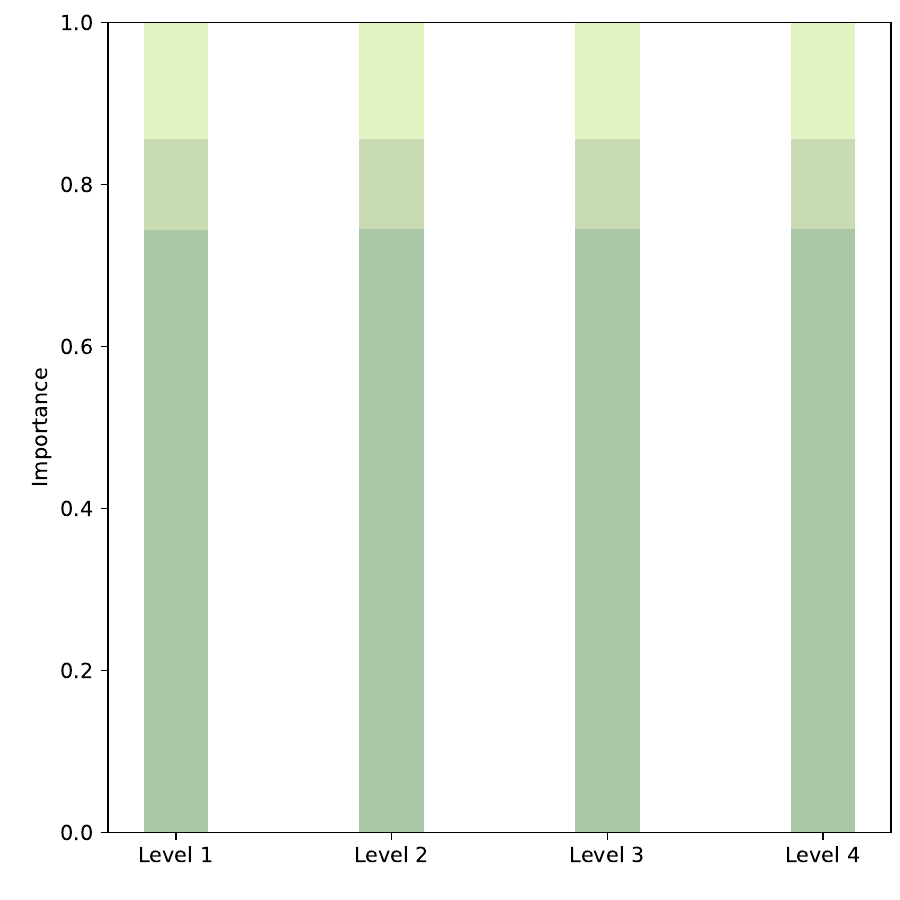}
    \includegraphics[width=\textwidth]
    {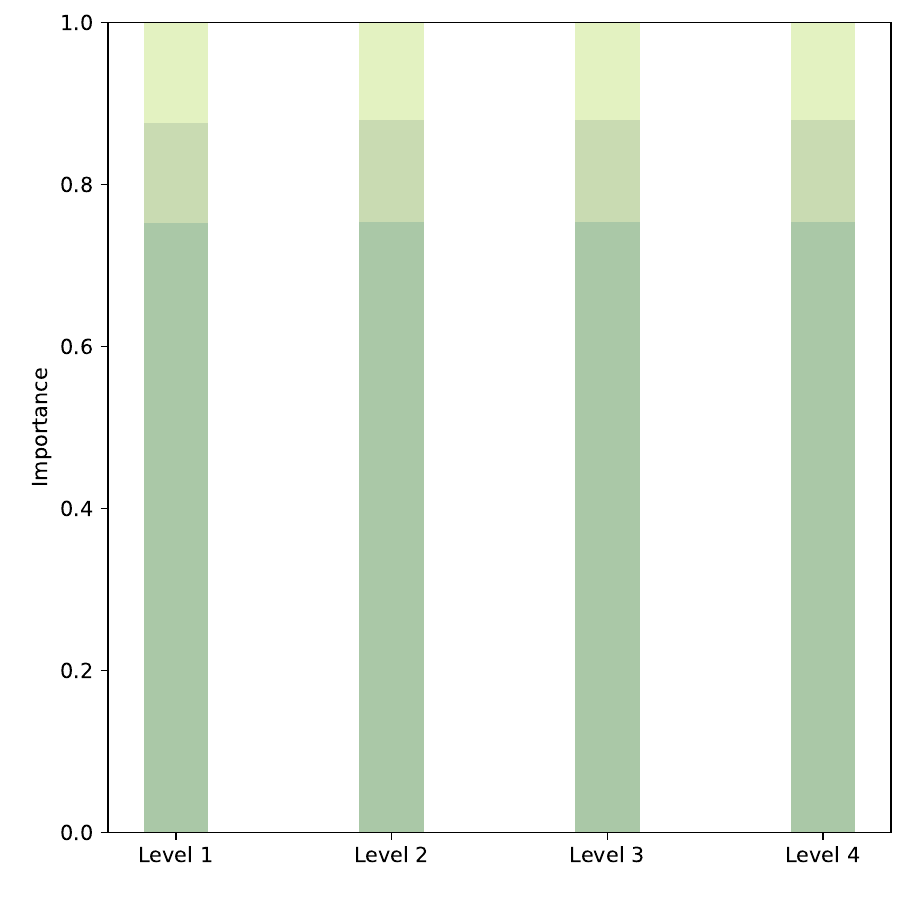}
    \includegraphics[width=\textwidth]
    {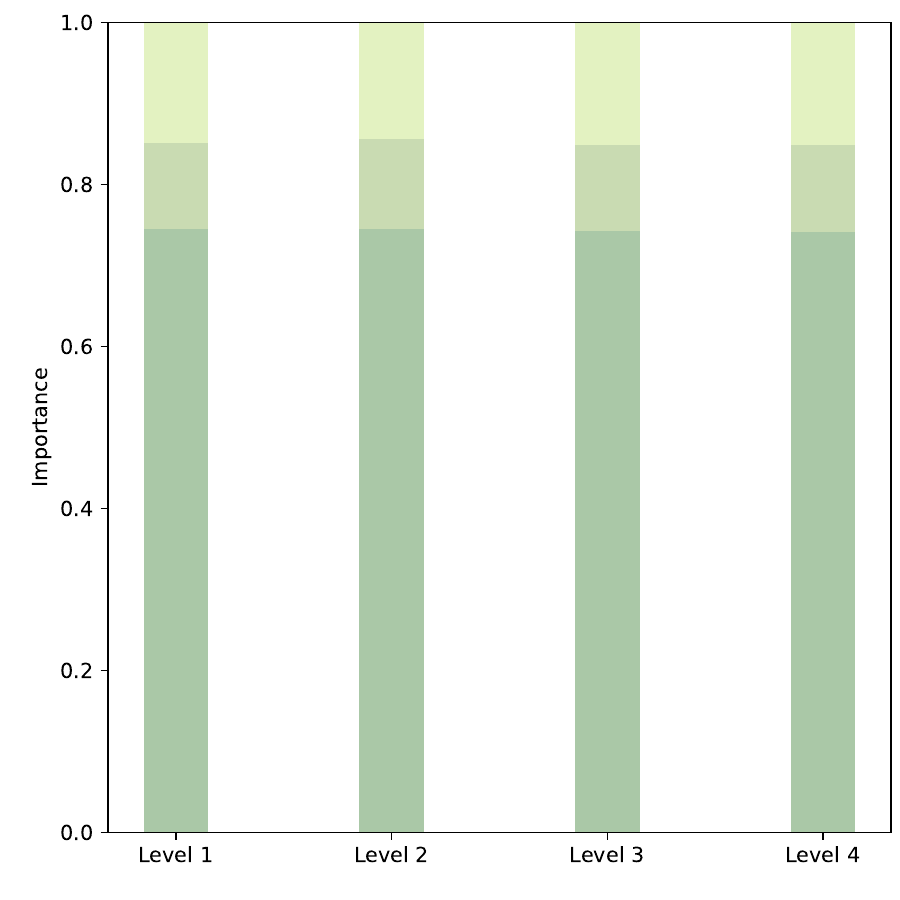}
    \includegraphics[width=\textwidth]
    {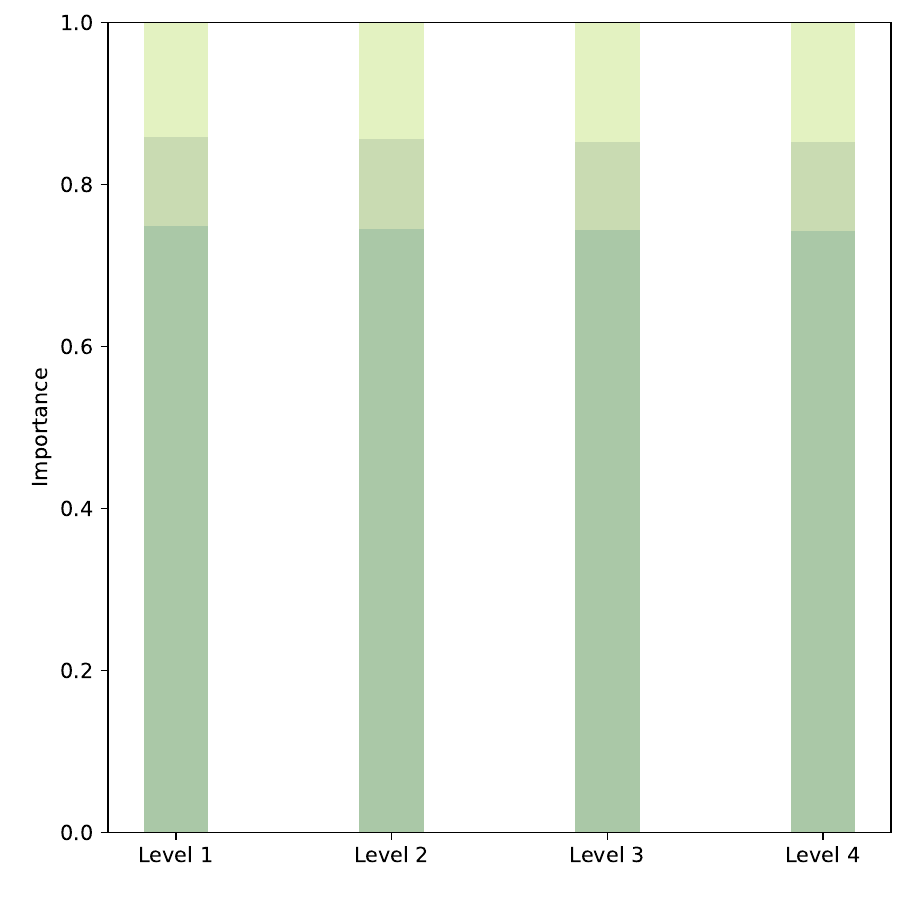}
    \caption{Importance}
\end{subfigure}
\caption{Aurora importance analysis results. Each row indicates different original input $\hat{x}_0$ as shown in Table~\ref{tab:imau}. The first to fourth columns indicate different $N(x_0) \not \approx N(\hat{x_0})$ Levels as shown in Table~\ref{tab:imau}, and the fifth column indicates the proportion of Importance ($1/Perturbation\ value$) of each feature under the three features of the input.}
\label{fg:auim}
\end{figure*}


\begin{figure*}
\centering
\begin{subfigure}[t]{0.16\textwidth}
    \makebox[0pt][r]{\makebox[30pt]{\raisebox{40pt}{\rotatebox[origin=c]{90}{Original Input 1}}}}%
    \includegraphics[width=\textwidth]
    {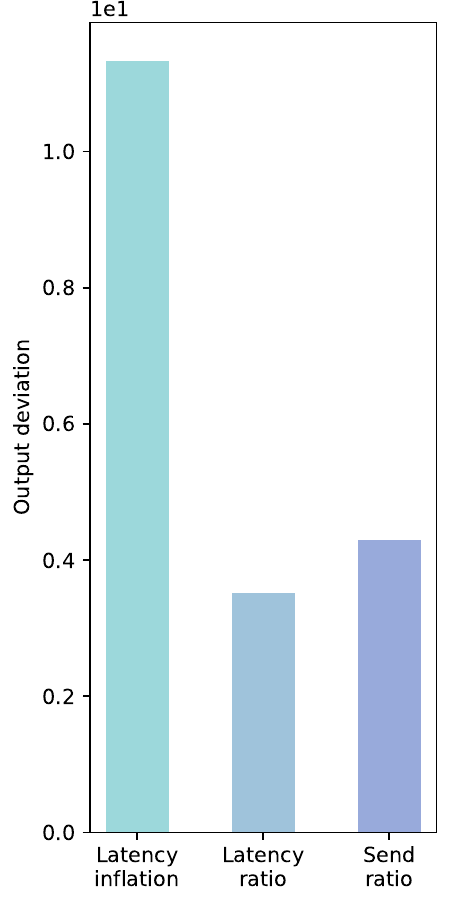}
    \makebox[0pt][r]{\makebox[30pt]{\raisebox{40pt}{\rotatebox[origin=c]{90}{Original Input 2}}}}%
    \includegraphics[width=\textwidth]
    {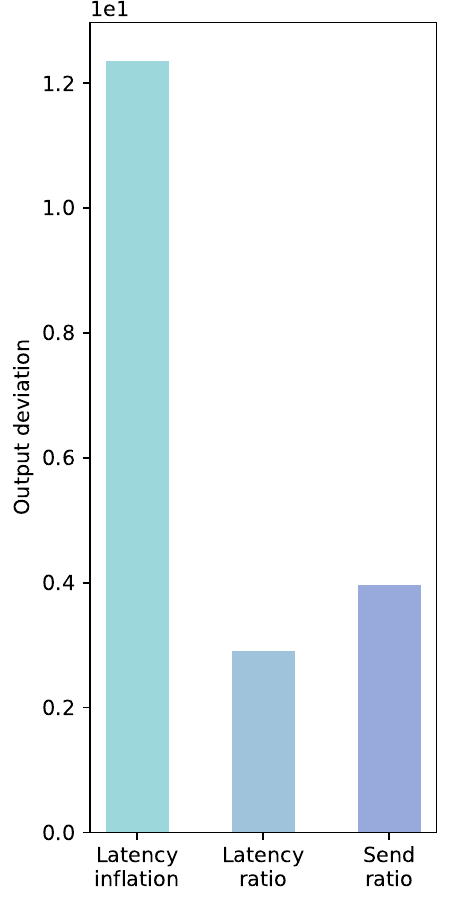}
    \makebox[0pt][r]{\makebox[30pt]{\raisebox{40pt}{\rotatebox[origin=c]{90}{Original Input 3}}}}%
    \includegraphics[width=\textwidth]
    {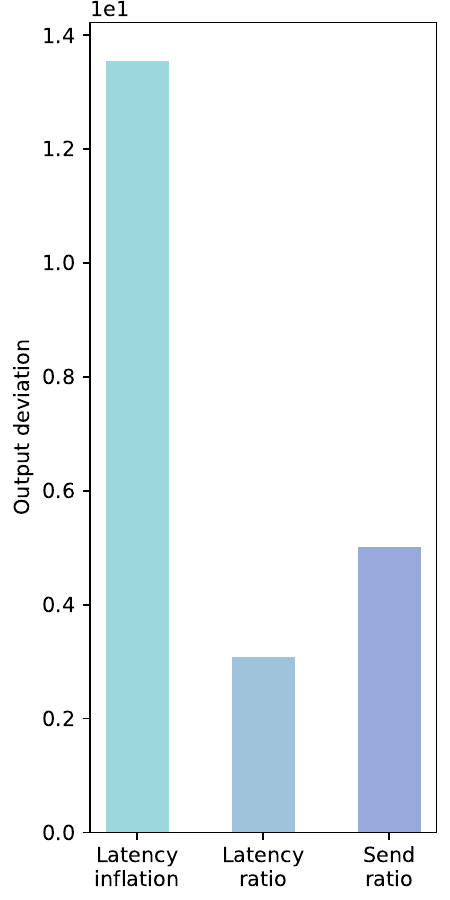}
    \makebox[0pt][r]{\makebox[30pt]{\raisebox{40pt}{\rotatebox[origin=c]{90}{Original Input 4}}}}%
    \includegraphics[width=\textwidth]
    {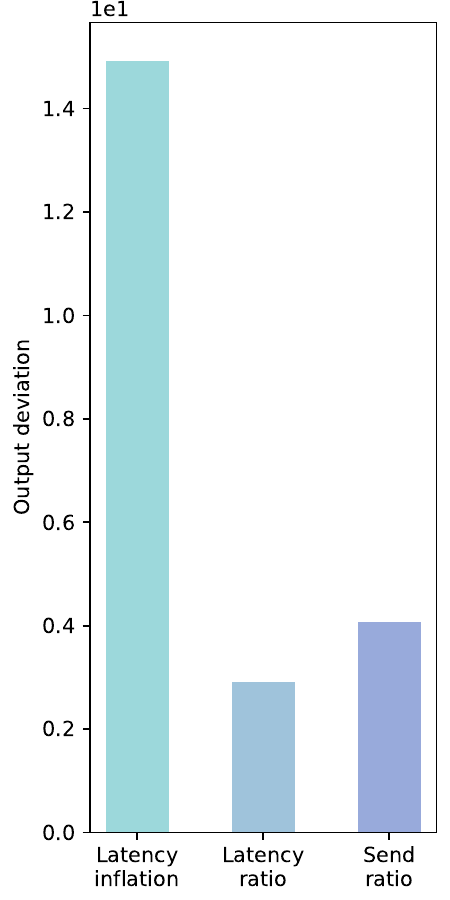}
    \caption{Level 1}
\end{subfigure}
\begin{subfigure}[t]{0.16\textwidth}
    \includegraphics[width=\textwidth]  
    {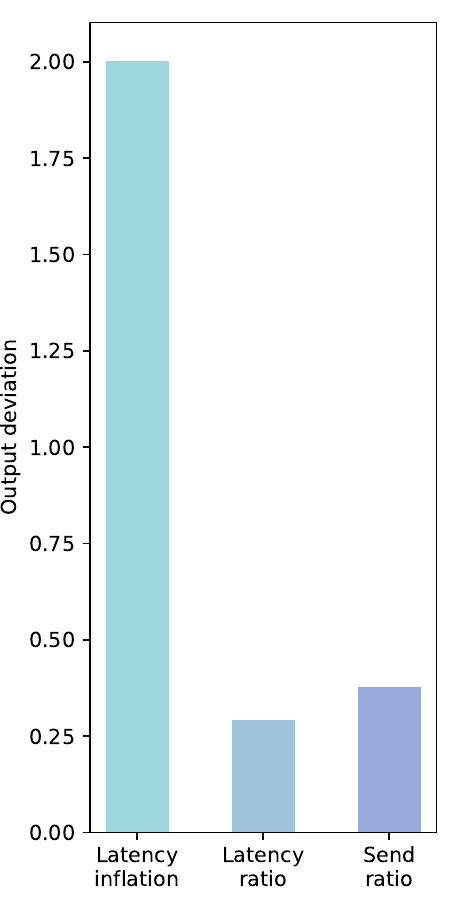}
    \includegraphics[width=\textwidth]
    {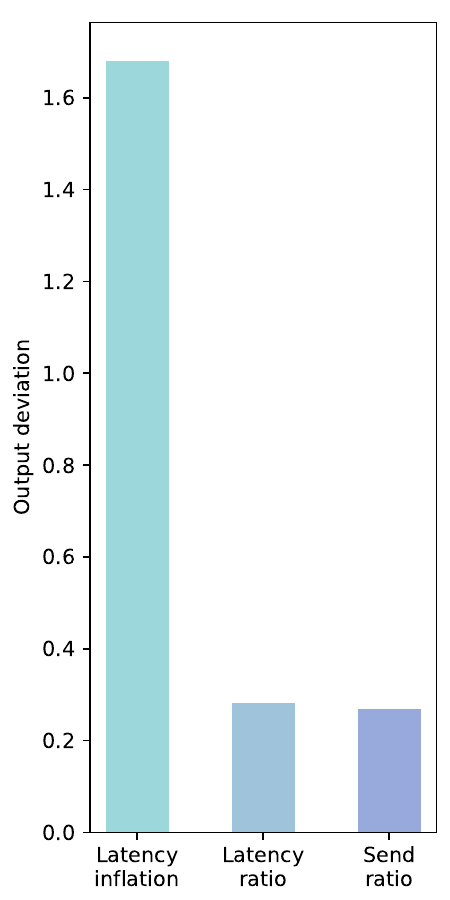}
    \includegraphics[width=\textwidth]
    {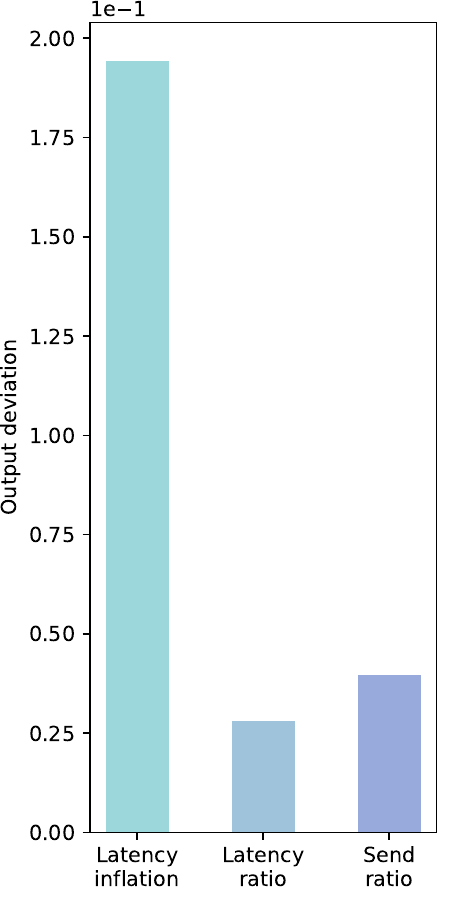}
    \includegraphics[width=\textwidth]
    {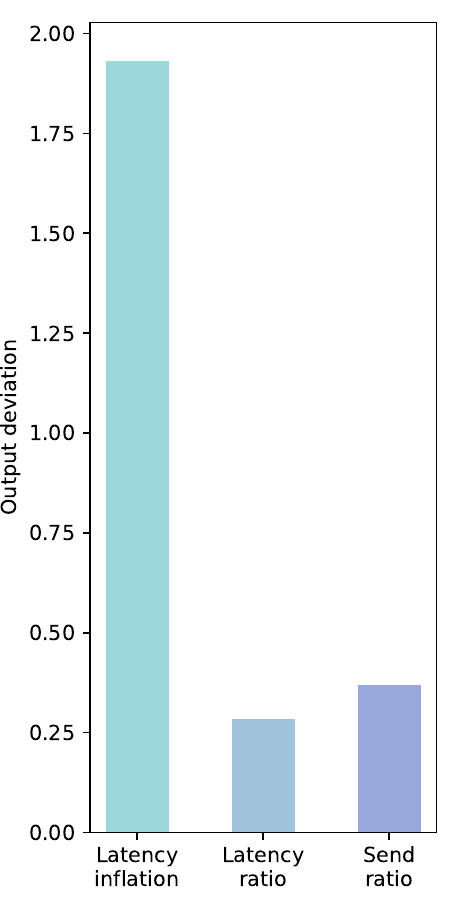}
    \caption{Level 2}
\end{subfigure}
\begin{subfigure}[t]{0.16\textwidth}
    \includegraphics[width=\textwidth]  
    {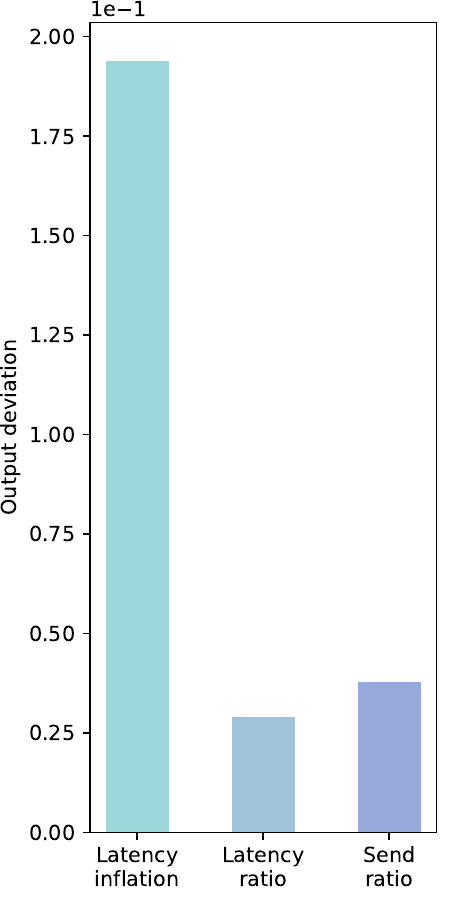}
    \includegraphics[width=\textwidth]
    {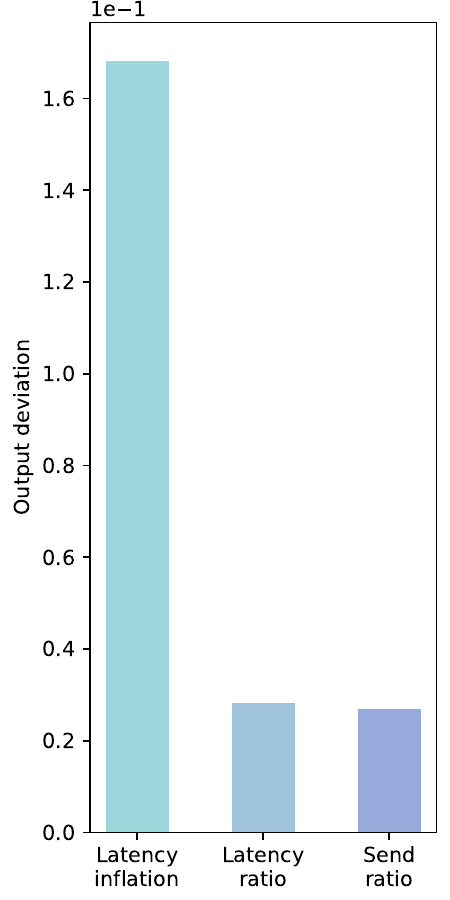}
    \includegraphics[width=\textwidth]
    {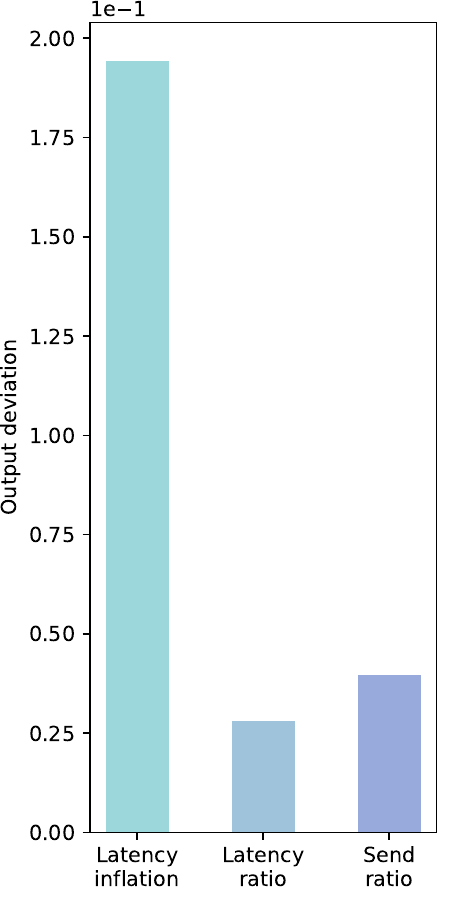}
    \includegraphics[width=\textwidth]
    {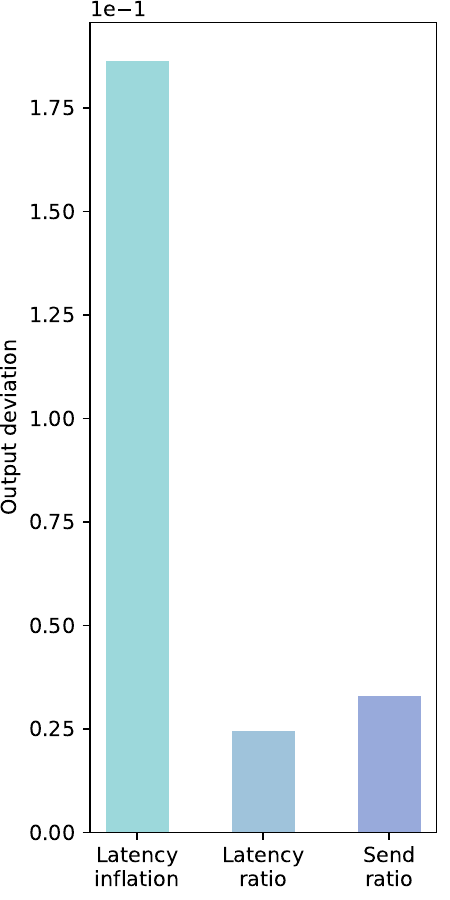}
    \caption{Level 3}
\end{subfigure}
\begin{subfigure}[t]{0.16\textwidth}
    \includegraphics[width=\textwidth]  
    {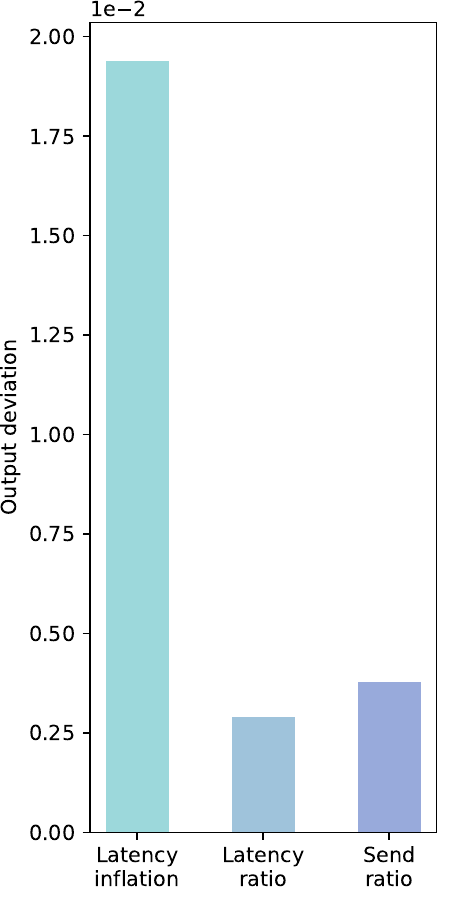}
    \includegraphics[width=\textwidth]
    {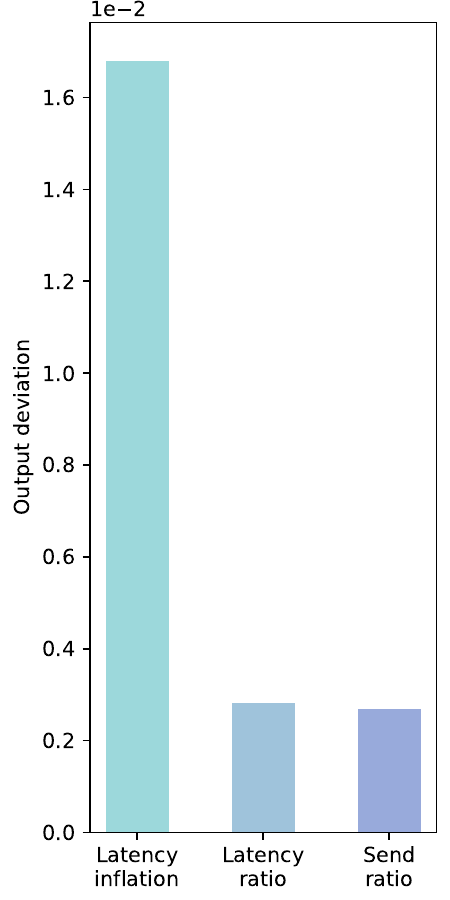}
    \includegraphics[width=\textwidth]
    {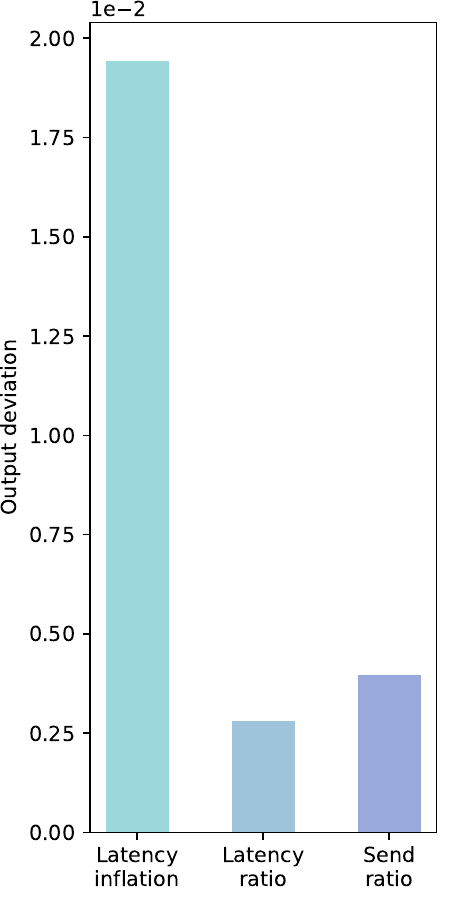}
    \includegraphics[width=\textwidth]
    {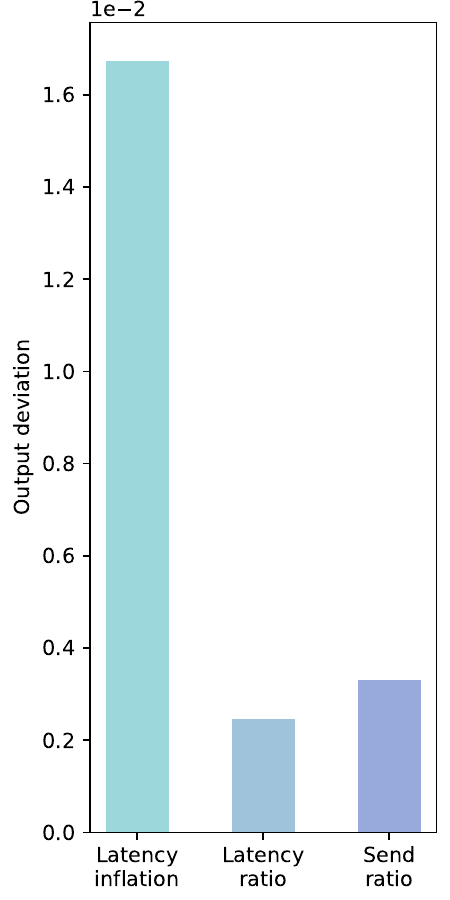}
    \caption{Level 4}
\end{subfigure}
\begin{subfigure}[t]{0.32\textwidth}
    \includegraphics[width=\textwidth]  
    {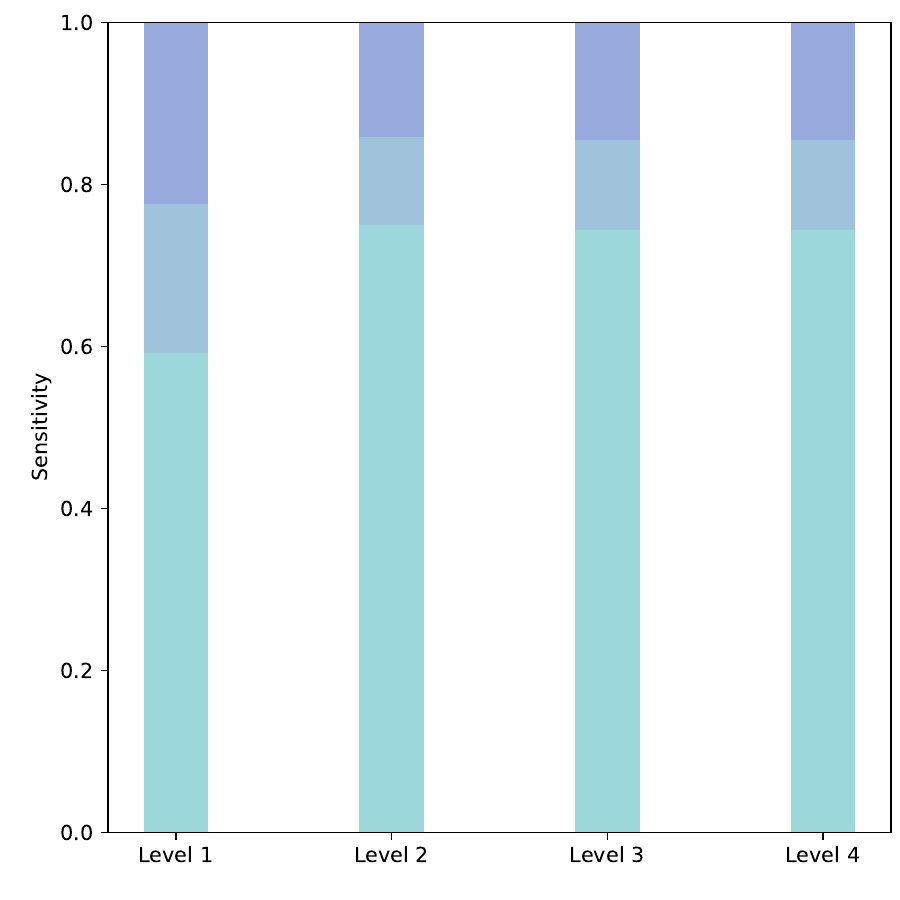}
    \includegraphics[width=\textwidth]
    {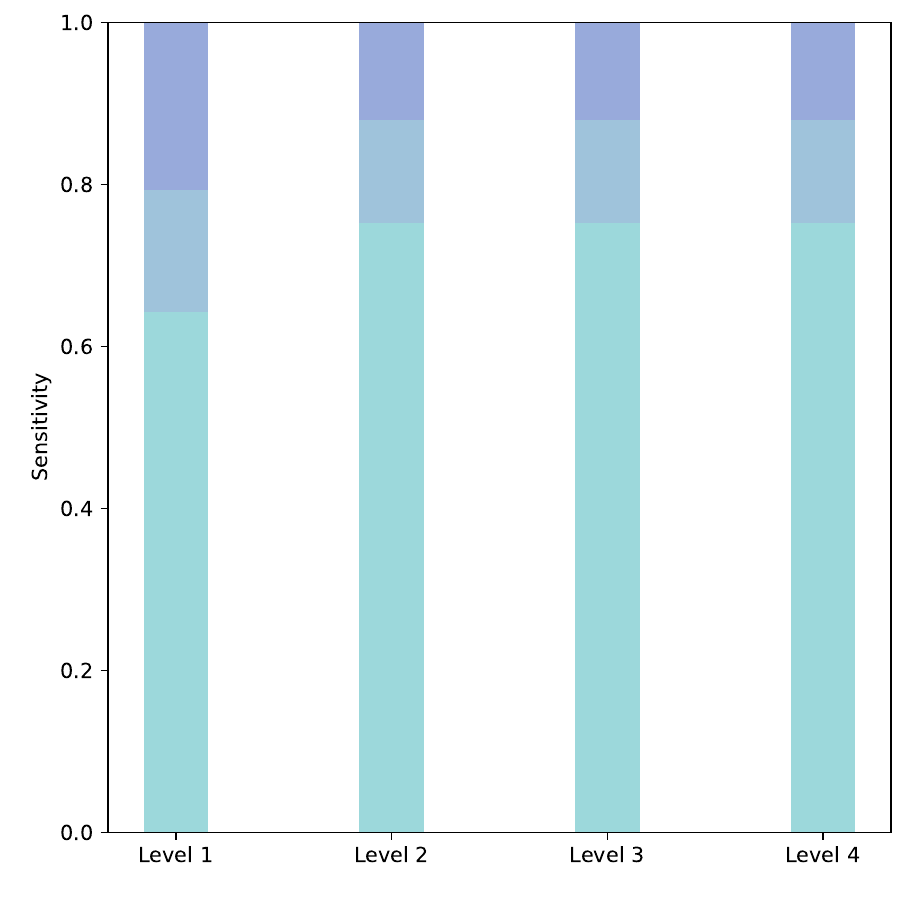}
    \includegraphics[width=\textwidth]
    {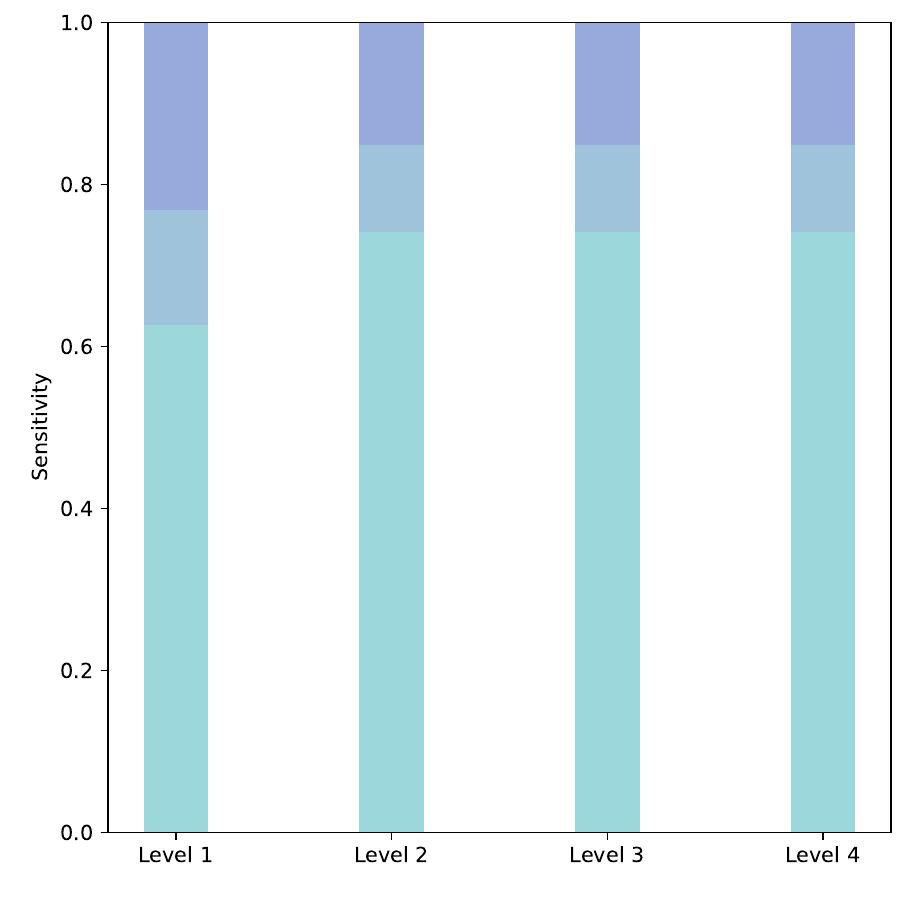}
    \includegraphics[width=\textwidth]
    {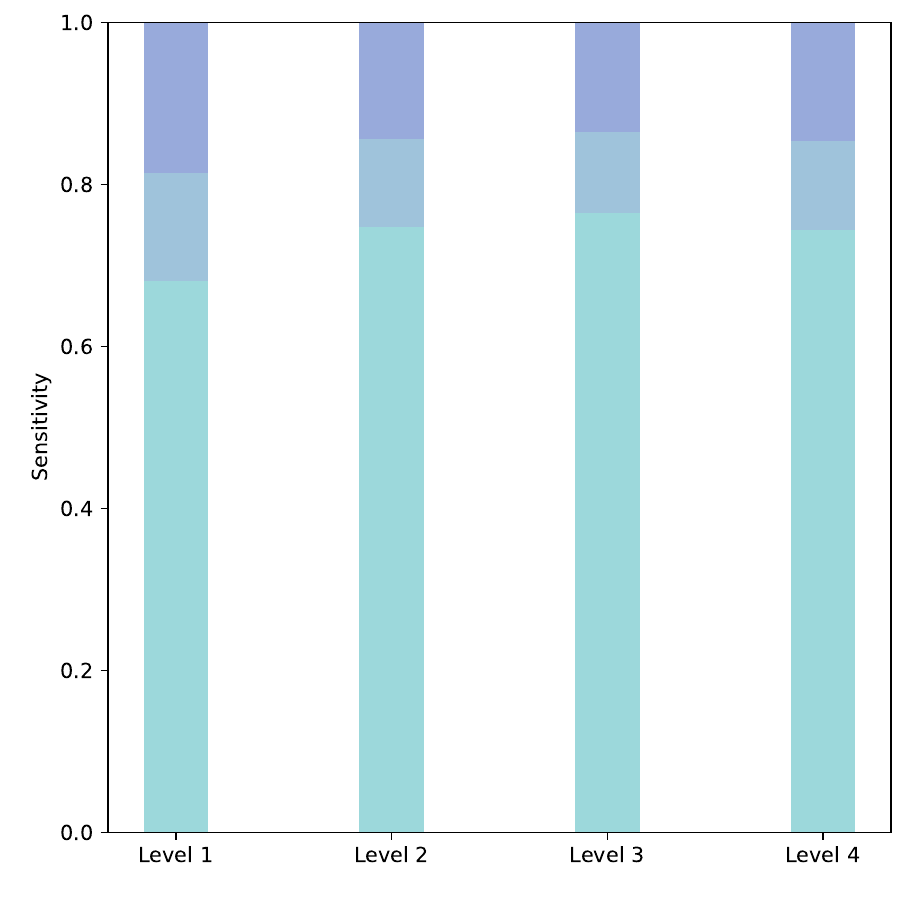}
    \caption{Sensitivity}
\end{subfigure}

\caption{Aurora sensitivity analysis results. Each row indicates different original input $\hat{x}_0$ as shown in Table~\ref{tab:seau}. The first to fourth columns indicate different Perturbation Levels as shown in Table~\ref{tab:seau}, and the fifth column indicates the proportion of Sensitivity ($Output\ deviation$) of each feature under the three features of the input.}
\label{fg:ause}
\end{figure*}


\paragraph{Sensitivity analysis.}

The sensitivity of features, as measured by the deviation of the output from its original value, varies based on the chosen perturbation levels for the features. To comprehensively analyze this, we employ four different perturbation levels denoted by $\varepsilon$ and utilize the same set of four original inputs $\hat{x}_0$ that were used in the Importance Analysis. The perturbation levels, as well as the corresponding original inputs, are listed in Table~\ref{tab:seau}. This approach helps us to explore how different perturbation levels influence the sensitivity of the features under evaluation.

The evaluation results are shown in Fig.~\ref{fg:ause}. They reaffirm the finding that perturbing latency inflation has the most substantial impact on the output, consistent with the observations from the Importance Analysis. Therefore, latency inflation remains the most sensitive feature under the given original inputs. This correlation between importance and sensitivity is reasonable, as important features tend to exhibit higher sensitivity due to their significant influence on the output. The high sensitivity of latency inflation suggests that slight variations in network latency can lead to noticeable changes in the agent's decision-making. This aligns with the notion that changes in latency may indicate shifts in network stability, prompting the agent to adjust its sending rate accordingly.

\paragraph{Comparison.}

\bgroup
\setlength{\textfloatsep}{12pt plus 1.0pt minus 2.0pt}
\setlength{\floatsep}{5pt plus 1.0pt minus 2.0pt}
\setlength{\intextsep}{5pt plus 1.0pt minus 2.0pt}
\setlength{\extrarowheight}{2pt} 
\begin{table*}[htb]
    \centering
    
    \begin{tabularx}{\textwidth}{p{1.6cm}p{2.8cm}XXX}
    \hline
     \multicolumn{2}{l}{\textbf{Task}} &  \multicolumn{1}{c}{\textbf{Mountain Car}} & \multicolumn{1}{c}{\textbf{Cartpole}} & \multicolumn{1}{c}{\textbf{Pendulum}}
     \\ \hline
     \multicolumn{2}{l}{Algorithm} & \multicolumn{1}{l}{DQN} & DQN & DDPG
     \\ \hline 
     \multicolumn{2}{l}{ID} & $\phi_{1}$ & $\phi_{2}$ & $\phi_{3}$
     \\ \hline 

     \multirow{2}{*}{\shortstack[c]{Network}} 
         & Activation Function
         & ReLU
         & ReLU
         & ReLU
     \\ \cline{2-5}
         & Size
         & $2\times 256$
         & $2\times 256$ 
         & $300,400$
     \\ \hline

     \multirow{4}{*}{\shortstack[c]{Training \\ Time \\ (\si{\second})}} 
         & Reintrainer
         & $878$
         & $192$
         & $809$
     \\ \cline{2-5}
         & Trainify\#1
         & $443$
         & $153$
         & $5923$
     \\ \cline{2-5}
         & Trainify\#2
         & $912$
         & $144$
         & $6209$
     \\ \cline{2-5}
         & Vanilla
         & $\bm{197}$
         & $\bm{60}$
         & $\bm{150}$
     \\ \hline

      \multirow{4}{*}{\shortstack[c]{Timesteps}} 
         & Reintrainer
         & $\bm{1.0\times10^5}$
         & $7.5\times10^4$
         & $1.7\times10^4$
     \\ \cline{2-5}
         & Trainify\#1
         & $6.3\times 10^5$
         & $1.8\times 10^5$
         & $8.0\times 10^5$
     \\ \cline{2-5}
         & Trainify\#2
         & $1.3\times 10^6$
         & $2.8\times 10^5$
         & $8.0\times 10^5$
     \\ \cline{2-5}
         & Vanilla
         & $1.2\times10^5$
         & $\bm{4.0\times10^4}$
         & $\bm{5.0\times10^3}$
     \\ \hline

      \multirow{4}{*}{\shortstack[c]{Violations}} 
         & Reintrainer
         & $\bm{96}$
         & $15$
         & $400$
     \\ \cline{2-5}
         & Trainify\#1
         & $668$
         & $\bm{0}$
         & $\bm{27}$
     \\ \cline{2-5}
         & Trainify\#2
         & $1735$
         & $\bm{0}$
         & $180$
     \\ \cline{2-5}
         & Vanilla
         & $117$
         & $\bm{0}$
         & $68$
     \\ \hline

    \end{tabularx}
    \begin{tabularx}{\textwidth}{p{1.6cm}p{2.8cm}XXp{1.65cm}p{1.65cm}}
    \hline
     \multicolumn{2}{l}{\textbf{Task}} &  B1 & B2 & \multicolumn{2}{c}{\textbf{Tora}}
     \\ \hline
     \multicolumn{2}{l}{Algorithm} & DDPG  & DDPG  & \multicolumn{2}{l}{DDPG}
     \\ \hline 
     \multicolumn{2}{l}{ID} & 
     \multicolumn{1}{l}{$\phi_{4}$} & 
     \multicolumn{1}{l}{$\phi_{5}$} & 
     \multicolumn{2}{l}{$\phi_{6}$}
     \\ \hline 

     \multirow{2}{*}{\shortstack[c]{Network}} 
         & Activation Function
         & \multicolumn{1}{l}{Tanh}
         & \multicolumn{1}{l}{Tanh}
         & Tanh
         & Tanh
     \\ \cline{2-6}
         & Size
         &  \multicolumn{1}{l}{$2\times 20$}
         &  \multicolumn{1}{l}{$2\times 20$}
         & $3\times 100$ 
         & $3\times 200$ 
     \\ \hline

     \multirow{3}{*}{\shortstack[c]{Training \\ Time \\ (\si{\second})}} 
         & Reintrainer
         & $84$
         & $140$
         & $491$
         & $751$
     \\ \cline{2-6}
         & Trainify\#1
         & $304$
         & $\bm{25}$
         & $716$
         & $797$
     \\ \cline{2-6}
         & Vanilla
         & $\bm{80}$
         & $54$
         & $\bm{33}$
         & $\bm{48}$
     \\ \hline

      \multirow{3}{*}{\shortstack[c]{Timesteps}} 
         & Reintrainer
         & $\bm{3.8\times10^4}$
         & $4.8\times10^4$
         & $\bm{9.6\times10^3}$
         & $\bm{9.6\times10^3}$
     \\ \cline{2-6}
         & Trainify\#1
         & $7.7\times 10^4$
         & $\bm{1.4\times 10^4}$
         & $1.4\times 10^5$
         & $1.5\times 10^5$
     \\ \cline{2-6}
         & Vanilla
         & $\bm{3.8\times10^4}$
         & $2.4\times10^4$
         & $\bm{9.6\times10^3}$
         & $\bm{9.6\times10^3}$
     \\ \hline

      \multirow{3}{*}{\shortstack[c]{Violations}} 
         & Reintrainer
         & $256$
         & $128$
         & $\textbf{0}$
         & $\textbf{0}$
     \\ \cline{2-6}
         & Trainify\#1
         & $361$
         & $67$
         & $\textbf{0}$
         & $\textbf{0}$
     \\ \cline{2-6}
         & Vanilla
         & $\textbf{160}$
         & $\textbf{64}$
         & $\textbf{0}$
         & $\textbf{0}$
     \\ \hline

    \end{tabularx}
    
    \caption{Comparison of the training process.}
    \label{tab:train_proc}
    
\end{table*}
\egroup

\begin{table*}[htb]
\begin{tabularx}{\textwidth}{p{1.3cm}XXXXXX}
\hline
                                 & \multicolumn{1}{c}{Vanilla} & \multicolumn{1}{c}{Const}      & \multicolumn{1}{c}{Raw Dist} & \multicolumn{1}{c}{Dist} & \multicolumn{1}{c}{Density} & \multicolumn{1}{c}{Gap} \\ \hline
\multicolumn{1}{c}{$\Bar{R}\pm\sigma$} & -146.53$\pm$7.36             & -153.65$\pm$6.93 & -152.88$\pm$10.60             & -148.61$\pm$10.43        & -151.94$\pm$9.71            & -144.79$\pm$9.58        \\ \hline
\end{tabularx}

\caption{Ablation experiments of different reward shaping strategies.$\bm{\bar{R}\pm\sigma}$ stands for the average episode reward and its standard deviation of the final models' evaluation results.}
\label{tab:abl}
\end{table*}
\setlength{\extrarowheight}{2pt}
\begin{table*}[htb]
    \centering

    \begin{minipage}[H]{0.49\textwidth}
        \begin{tabularx}{\linewidth}{p{3.5cm}XX}
            \hline
             \textbf{Task} &  \multicolumn{1}{c}{\textbf{Mountain Car}}&  \multicolumn{1}{c}{\textbf{Cartpole}}
             \\ \hline
             Algorithm & DQN & DQN
             \\ \hline
            Batch Size & 128 & 64\\ \hline
            Buffer Size & 10000 & 10000\\ \hline
            Exploration Final Eps & 0.07 & 0.04\\ \hline
            Exploration Fraction & 0.2 & 0.16\\ \hline
            Gamma & 0.98 & 0.99\\ \hline
            Gradient Steps & 8 & 128\\ \hline
            Learning Rate & 0.004 & 0.023\\ \hline
            Learning Starts & 1000 & 1000\\ \hline
            Policy & MlpPolicy & MlpPolicy\\ \hline
            Target Update Interval & 600 & 10 \\ \hline
            Train Freq & 16 Steps & 256 Steps \\ \hline
            Normalize & False & False\\ \hline
            
        \end{tabularx}
    \end{minipage}%
\hfill
    \begin{minipage}[H]{0.49\textwidth}

    \begin{tabularx}{\linewidth}{p{2.5cm}XXX}
    \hline
     \textbf{Task} &  \multicolumn{1}{c}{\textbf{Pendulum}} & \multicolumn{1}{c}{\textbf{B1, B2}} & \multicolumn{1}{c}{\textbf{Tora}} 
     \\ \hline
     Algorithm & DDPG & DDPG & DDPG
     \\ \hline
    Buffer Size & 200000 & 10000 & 10000\\ \hline
    Batch Size & 32 & 32 & 32\\ \hline
    Gamma & 0.98 & 0.99 & 0.99 \\ \hline
    Tau & 0.005 & 0.02 & 0.02\\ \hline
    Gradient Steps & 200 & 1& 1 \\ \hline
    Learning Rate & 0.001 & 0.001   & 0.001\\ \hline
    Learning Starts & 10000 & 0 & 0\\ \hline
    Noise Std & 0.1 & None & None\\ \hline
    Noise Type & Normal & None & None \\ \hline
    Policy & MlpPolicy & MlpPolicy  & MlpPolicy \\ \hline
    Train Freq & 1 Episode &   4 Steps &   10 Steps \\ \hline
    Normalize & False & False & False\\ \hline
    
    \end{tabularx}
    \end{minipage}
    
    \caption{Training hyper-parameters of Mountain Car, Cartpole, Pendulum, B1, B2, and Tora.}
    
    \label{tab:hyper_parameter_pd1}
\end{table*}

\begin{table*}[htb]
    \centering
    
    \begin{tabularx}{\textwidth}{p{0.8cm}p{1.2cm}p{0.5cm}p{3.2cm}p{2.8cm}X}
    \hline
     \textbf{Task} &  \textbf{Type} & \textbf{ID} & \textbf{Pre-condition $P$} & \textbf{Post-condition $Q$} & \textbf{Meaning}
    \\ \hline
        MC & Safety & $\phi_{1}$
        & $p\in[-0.40,-0.60]  $\newline $v\in[\phantom{-}0.03,\phantom{-}0.07]$
        &  Forall  \newline \small $Action \neq 0$ 
        & When the car is moving at high speed to the right at the bottom of the valley, it should not accelerate to the left.
    \\ \hline
         CP & Safety & $\phi_{2}$
         & $p\in[-2.40,-2.00]  \newline v\in[\phantom{-}0.00,+\infty\phantom{0|}] \newline \theta\in[\phantom{-}0.15,\phantom{-}0.21] 
          \newline \omega\in[\phantom{-}1.00,+\infty\phantom{0}] $
         & Forall  \newline \small $Action = 1$ 
         &   When the car is moving to the right in the left edge area and the pole tilts to the right at a large angle, it should be pushed to the right.
    \\ \hline
         PD & Safety & $\phi_{3}$
         & $x\in[\phantom{-}0.00,\phantom{-}1.00]  \newline y\in[-0.10,\phantom{-}0.10]  \newline \omega\in[-0.50,\phantom{-}0.50]$
         & Forall  \newline \small $Action$ $\in [-1.00, 1.00]$
         & When the pendulum approaches the upright position of the target, large torque should not be applied.
     \\ \hline
          \multirow{1}{*}{\shortstack[c]{B1}} 
          & Liveness & $\phi_{4}$
         & $x_0^0\in[\phantom{-}0.80,\phantom{-}0.90]  \newline x_0^1\in[\phantom{-}0.50,\phantom{-}0.60]$
         & Exist \small \newline $x^0 \in [\phantom{-}0.00,\phantom{-}0.20]$ \newline $x^1 \in [\phantom{-}0.05,\phantom{-}0.30]$
         & The agent always reaches the target eventually.
      \\ \hline
          \multirow{1}{*}{\shortstack[c]{B2}} 
          & Liveness & $\phi_{5}$
         & $x_0^0\in[\phantom{-}0.70,\phantom{-}0.90]  \newline x_0^1\in[\phantom{-}0.70,\phantom{-}0.90]$
         & Exist \small \newline $x^0 \in [-0.30,\phantom{-}0.10]$  \newline $x^1 \in [-0.35,\phantom{-}0.50]$
         & The agent always reaches the target eventually.
      \\ \hline
      \multirow{1}{*}{\shortstack[c]{Tora}} 
         & Safety & $\phi_{6}$
         & $x_0^0\in[-0.77,-0.75]  \newline x_0^1\in[-0.45,-0.43]  \newline x_0^2\in[\phantom{-}0.51,\phantom{-}0.54]  \newline x_0^3\in[-0.30,-0.28]$
         & Forall  \small \newline $x^0 \in [\phantom{-}1.50,\phantom{-}1.50]$  \newline $x^2 \in [\phantom{-}1.50,\phantom{-}1.50]$
         & The agent always stays in the safe region.
      \\ \hline
    \end{tabularx}
    \caption{Predefined properties of benchmarks.}
    \label{tab:train_propappx}
    
\end{table*}
When compared to the interpreter UINT, our \rf~possesses several advantages shown in Table~\ref{tab:Advantage UINT}. Beyond its benefits in terms of user-friendliness and applicability to a wide range of scenarios, a crucial aspect is our conviction that our cohensive solutions based on \textit{breakpoints} can be rapidly adapted to new interpretability problems instead of formulating distinct solutions for each individual problem. While the process of searching for \textit{breakpoints} might be time-consuming, once these breakpoints are identified, an array of interpretable problems can be answered within seconds. This further underscores the scalability of our solutions.

\subsection{Reintrainer}

\subsubsection{Training Process Comparison.} \label{appx:sec_reintrainer_proc}

We track the total training time and the numbers of timesteps spent to develop high-performing models
as shown in Table~\ref{tab:train_proc}.
The statistics show that the vanilla algorithms have an advantage in total training time, while our approach takes more time due to the inclusion of formal verification and interpretation in training strategy generation. Nonetheless, we strongly believe that this additional time investment is justified, as it substantially increases the reliability of the models. 
Trainify's verification method, which treats the DNN as a black box, should have an efficiency edge. However, it is only slightly faster than \rt~in three benchmarks.
Speed improvements of \rt~can be achieved by parallelizing strategy generation processes or by reducing their precision.

Nevertheless, our approach holds a significant advantage over Trainify in terms of timesteps in the majority of tasks. This aligns with our approach's capability to provide precise feedback on reward shaping. It suggests that if the agent runs in an environment with high action costs, our method will still offer advantages in terms of efficiency.

We also track the number of violation occurrences. In conditions where violations occur frequently, such as in $\phi_{1}$, our approach demonstrates advantages in reducing violations. However, when there are few or no violations, such as in $\phi_{2}$, Reintrainer can actively generate a smaller number of counterexamples in the  buffer for the agents to learn. $\phi_{3}$ is aimed to guide the agent to try to make the pendulum more upright, so more violations occurring during the training process of Reintrainer to promote  property satisfaction of the final model are acceptable.

\subsubsection{Ablation Study}

Since there are a significant numerical difference in only the Pendulum task among the six benchmarks, we choose it for the ablation study. We obtain the following reward shaping strategies by selecting whether to enable the metrics in our algorithm.

\textbf{Const}: the fixed penalty when the violation is detected; here, we use $-2$ as the penalty. This strategy can be seen as a purely counterexample-guided training strategy.

\textbf{Raw Dist}: the penalty calculated as the sum of distances from the violating state to the property constraint boundary, using Eq.\ref{eq:raw}.

\textbf{Dist}: the penalty calculated by using Eq.\ref{eq:mid} and Eq.\ref{eq:dist} when a violation is detected, with the density set to 1. 

\textbf{Density}: the penalty calculated by measuring density using Eq.\ref{eq:rho} instead of setting it to 1, building on \textbf{Dist}. 

\textbf{Gap}: the penalty calculated by introducing \textsl{gap} metric and adjusting the learning rate, building on \textbf{Density}; here, $Lr(gap)=\frac{1}{1+e^{-gap}}$. 

Since $\phi_3$ is a single-step property, Traceback is not used, and $Gap$ represents the complete reward performance of \rt~under general conditions. 

We independently repeated each penalty condition 20 times and evaluated each with 100 episodes, calculating the mean and variance of the average episode reward over 20 independent repetitions, as shown in Table \ref{tab:abl}.

Experimental results indicate that our algorithm performs progressively better with more precise metrics and eventually surpasses the fine-tuned vanilla algorithm while ensuring property satisfaction.
\textbf{Const} is the simplest reward shaping method, using a constant value that makes it difficult for the learning algorithm to estimate the extent of violations, hence it performs the worst.

\textbf{Raw Dist} simply calculates the sum the distances from the violation to the nearest property-constrained, and compared to \textbf{Const}, it measures the violation more precisely, thereby showing improved performance. 

\textbf{Dist} in contrast to \textbf{Raw Dist}, calculates the overall distance instead of the sum of distances across individual dimensions and also normalizes the results to prevent issues arising from different dimensional scales, thus making significant advancements to enhance performance.
 
\textbf{Density} introduces density metric, which theoretically should provide more accurate measurements than \textbf{Raw Dist}, but the results here show a decline. We analyze that this might be related to the short training cycles of the Pendulum. In this experiment, the \texttt{learning starts} parameter is set to 10,000 timesteps, and a good model can be learned by 17,000 timesteps. This means the density calculated before learning starts is meaningless and may interfere with learning. Therefore, when the timesteps from learning start to model convergence are few, the density metric could be disabled.

\textbf{Gap} more accurately indicates the progress of property learning, thus reducing the impact of property learning on performance goals when the property learning is close to convergence, i.e., when the \textsl{gap} is small. In this experiment, it significantly improved performance even when density performed poorly, highlighting the importance of numerical results from verification for assessing the model's current state and learning.


\subsubsection{Training Hyper-Parameters}\label{appx:sec_reintrainer_hp}

We employ fine-tuned hyper-parameters from Stable Baselines~\cite{sb3} to train models as baselines of vanilla algorithms for tasks Mountain Car, Cartpole, and Pendulum. And we employ the same hyper-parameters from Trainify~\cite{trainify} to train models of both vanilla algorithms and Reintrainer for task B1, B2 and Tora.
All hyperparameters are shown in Table~\ref{tab:hyper_parameter_pd1}.

\subsubsection{Predefined Properties of Benchmarks.}\label{Appx:rf_prop}
The detailed definitions of properties in the benchmarks used in the main text are shown in Table~\ref{tab:train_propappx}.

\subsubsection{Transition Dynamics of Benchmarks.}
The following benchmarks are not built-in Gym\cite{gym} environments. Here are their transition dynamics.
\paragraph{B1.} $s_{t+1}^0=s_t^1, s_{t+1}^1=a(s_{t}^1)^2-s_t^0$
\paragraph{B2.} $s_{t+1}^0=s_t^1-(s_t^0)^3, s_{t+1}^1=a$
\paragraph{Tora.} $s_{t+1}^0=s_{t}^1, s_{t+1}^1=-s_{t}^0+0.1\sin s_{t}^2, s_{t+1}^2=s_{t}^3, s_{t+1}^3=a$

\section{Usage}\label{Appx:usage}
A DRLP object storing a property in DRLP format and an NN object storing an ONNX DNN are required for a DRL verification query in \rt. \rt~can be accessed at https://github.com/Kurayuri/Reinfier.

\vspace{10pt}
\begin{lstlisting}[frame=single]
import reintrainer as rf

network = rf.NN("/path/to/ONNX/file")
# or
network = rf.NN(ONNX_object)

property = rf.DRLP("/path/to/DRLP/file")
# or
property = rf.DRLP(DRLP_str)

# Verify API (default k-induction algorithm, Recommended)
result = rf.verify(network, property)
# or
# k-induction algorithm 
result = rf.k_induction(network, property) 
# or
# bounded model checking algorithm
result = rf.bmc(network, property) 
# or
# reachability analysis
algorithm
result = rf.reachability(network, property) 

\end{lstlisting}

\vspace{10pt}
And the result should be like: 
\vspace{10pt}

\begin{lstlisting}[frame=single]
( Falsified/Proven/None, 
  an int of verification depth,
  an violation instance of <numpy.ndarray>)
\end{lstlisting}

\vspace{10pt}
If you want to execute \textit{batch verification}, just run:
\vspace{10pt}

\begin{lstlisting}[frame=single]
result = rf.verify_linear(network, property) 
# or
result = rf.verify_hypercubic(network, property)
\end{lstlisting}

\vspace{10pt}
If you want to \textit{search breakpoints} and ask interpretability problems, just run:
\vspace{10pt}

\begin{lstlisting}[frame=single]
# set search keyword arguments
kwargs = {
    "a": {"lower_bound": -0.7,
          "upper_bound": -0.3,
          "precise": 0.02,
          "method": "linear", },
    "b": {"lower_bound": 1,
          "upper_bound": 20,
          "precise": 0.1,
          "method": "binary", },
}
# search breakpoints
breakpoints = rf.search_breakpoints(network,property,kwargs)

# analyze breakpoints
inline_breakpoints, inline_breaklines=analyze_breakpoints(breakpoints)

# answer interpretability problems
result = rf.interpreter.answer_importance_analysis(inline_breakpoints)
# or sensitivity_analysis
result = rf.interpreter.answer_sensitivity_analysis(inline_breakpoints)
# or
result = rf.interpreter.answer_intuitiveness_examination(inline_breakpoints)
#or
result = rf.interpreter.answer_counterfactual_explanation(inline_breakpoints)
#or
rf.interpreter.draw_decision_boundary(inline_breakpoints)


\end{lstlisting}

\vspace{10pt}
If you want to use Reintrainer to train a DRL model to satisfy desired DRLP properties, just run:
\vspace{10pt}

\begin{lstlisting}[frame=single]
reintrainer = rf.Reintrainer(
    [list of desired DRLP properties], 
    train_api)

reintrainer.train(max iterations)
\end{lstlisting}

\end{document}